\theoremstyle{plain}
\newtheorem{theorem}{Theorem}[section]
\newtheorem{proposition}[theorem]{Proposition}
\newtheorem{lemma}[theorem]{Lemma}
\theoremstyle{definition}
\theoremstyle{remark}
\newcommand{\ca}[1]{\bm{\mathcal{#1}}}
\icmltitlerunning{A Unified Weight Initialization Paradigm for Tensorial Convolutional Neural Networks}
\begin{document}

\twocolumn[
\icmltitle{A Unified Weight Initialization Paradigm \\
for Tensorial Convolutional Neural Networks}



\icmlsetsymbol{equal}{*}

\begin{icmlauthorlist}
\icmlauthor{Yu Pan}{1}
\icmlauthor{Zeyong Su}{3}
\icmlauthor{Ao Liu}{4}
\icmlauthor{Jingquan Wang}{1}
\icmlauthor{Nannan Li}{5,6}
\icmlauthor{Zenglin Xu}{1,2}
\end{icmlauthorlist}

\icmlaffiliation{1}{Harbin Institute of Technology Shenzhen, Shenzhen, China}
\icmlaffiliation{2}{Pengcheng Laboratory, Shenzhen, China}
\icmlaffiliation{3}{University of Electronic Science and Technology of China, Chengdu, China}
\icmlaffiliation{4}{Tokyo Institute of Technology, Tokyo, Japan}
\icmlaffiliation{5}{State Key Laboratory for Management and Control of Complex Systems, Institute of Automation, Chinese Academy of Sciences, Beijing, China}
\icmlaffiliation{6}{School of Artificial Intelligence, University of Chinese Academy of Sciences, Beijing, China}

\icmlcorrespondingauthor{Yu Pan}{iperryuu@gmail.com}
\icmlcorrespondingauthor{Zenglin Xu}{zenglin@gmail.com}

\icmlkeywords{Machine Learning, ICML}

\vskip 0.3in
]



\printAffiliationsAndNotice{}  

\begin{abstract}

Tensorial Convolutional Neural Networks (TCNNs) have attracted much research attention for their power in reducing model parameters or enhancing the generalization ability. However, exploration of TCNNs is hindered even from weight initialization methods. To be specific, general initialization methods, such as Xavier or Kaiming initialization, usually fail to generate appropriate weights for TCNNs. Meanwhile, although there are ad-hoc approaches for specific architectures (e.g., Tensor Ring Nets), they are not applicable to  TCNNs with other tensor decomposition methods (e.g., CP or Tucker decomposition). To address this problem, we propose a universal weight initialization paradigm, which generalizes Xavier and Kaiming methods and can be widely applicable to arbitrary TCNNs. Specifically, we first present the Reproducing Transformation to convert the backward process in TCNNs to an equivalent convolution process. Then, based on the convolution operators in the forward and backward processes, we build a unified paradigm to control the variance of features and gradients in TCNNs. Thus, we can derive fan-in and fan-out initialization for various TCNNs. We demonstrate that our paradigm can stabilize the training of TCNNs, leading to faster convergence and better results.

\end{abstract}

\section{Introduction}
\label{sec:intro}

Tensorial Convolutional Neural Networks (TCNNs) are important variants of Convolutional Neural Networks (CNNs). TCNNs usually adopt various tensor decomposition techniques to factorize large convolutional kernels into lower-rank tensor nodes, aiming to reduce the number of parameters. For example, Tensor Ring (TR) is utilized to decompose CNNs~\citep{DBLP:conf/cvpr/WangSEWA18}, leading to a high compression rate while maintaining comparably good performance. Tensor Train (TT) was used to improve performance of CNNs for image classification with parameter reduction~\cite{DBLP:conf/cvpr/YinSL021}. The CP-Higher-Order convolution (CP-HOConv) was proposed to factorize higher- order convolutional neural networks and has achieved the state-of-the-art results in spatio-temporal facial emotion analysis~\cite{DBLP:conf/cvpr/KossaifiTBPHP20}.


In addition to the advantages in reducing model parameters, TCNNs are promising to be explored as a more general family of CNNs if the corresponding structures can be represented with hypergraphs. A hypergraph is a tensor diagram with a dummy tensor (as illustrated in Figure~\ref{fig:hyper-dummy}) and a hyperedge (as illustrated in Figure~\ref{fig:hyper-edge}). Equipped with the hypergraph representation, TCNNs can include not only factorized CNNs based on tensor decomposition methods (e.g., Tensor Ring (TR) decomposition~\citep{DBLP:conf/cvpr/WangSEWA18}, Tensor Train (TT) 
decomposition~\citep{DBLP:conf/nips/NovikovPOV15, DBLP:journals/corr/abs-1904-06194, DBLP:journals/corr/GaripovPNV16}, CANDECAMP/PARAFAC (CP) decomposition~\citep{DBLP:journals/corr/LebedevGROL14,DBLP:journals/ijon/PanWX22}, Tucker decomposition~\citep{DBLP:journals/corr/KimPYCYS15,DBLP:journals/corr/abs-1909-05675}, Block-Term Tucker decomposition~\citep{DBLP:conf/cvpr/YeWLCZCX18,DBLP:journals/nn/YeLCYZX20}),
but also traditional CNN variants (e.g., low-rank convolution~\citep{DBLP:conf/cvpr/RigamontiSLF13, DBLP:conf/cvpr/IdelbayevC20}, factoring convolution~\citep{DBLP:conf/cvpr/SzegedyVISW16}, and even the vanilla convolution),
since each of them can be represented as a hypergraph. 

Despite these merits,  TCNNs suffer from unstable training due to inappropriate weight initialization~\citep{DBLP:conf/cvpr/WangSEWA18, DBLP:journals/corr/abs-1909-05675}. A common and direct initialization method generates weights by sampling from a probability distribution~\citep{DBLP:conf/aaai/PanXWYWBX19,li2021heuristic}. Unfortunately, this initialization method is sensitive to the choice of distribution variance; the distribution parameters are usually tuned manually, which is inefficient in practice.
Another straightforward method is to borrow some adaptive weight initialization methods widely used in CNNs, such as the Xavier initialization~\citep{DBLP:journals/jmlr/GlorotB10} and the Kaiming initialization~\citep{DBLP:conf/iccv/HeZRS15}, however,
they usually fail to initialize weights at a correct scale for TCNNs~\citep{DBLP:conf/iconip/WangSL0ZX20,DBLP:conf/iclr/ChangFL20}.
In addition, ad-hoc initialization methods, such as  modified Xavier methods proposed in \cite{DBLP:conf/cvpr/WangSEWA18,DBLP:conf/iclr/ChangFL20} or the method of decomposing corresponding CNN weights~\cite{DBLP:journals/corr/abs-1909-05675}, are either designed for specific TCNNs, or dependent on special tensor decomposition methods. 

Therefore, it is necessary to design a universal initialization scheme for various TCNN variants. 
To this end, we propose a unified paradigm that studies TCNNs from their topology. In detail, by extracting a backbone graph (BG) from a convolution hypergraph, we can encode an arbitrary TCNN into an adjacency matrix with the backbone graph and then calculate a suitable initial variance through the adjacency matrix, which can simply initialize any TCNN. 

\begin{figure}[t]
	\centering
	\subfigure[]{
		\includegraphics[height=0.054\textheight]{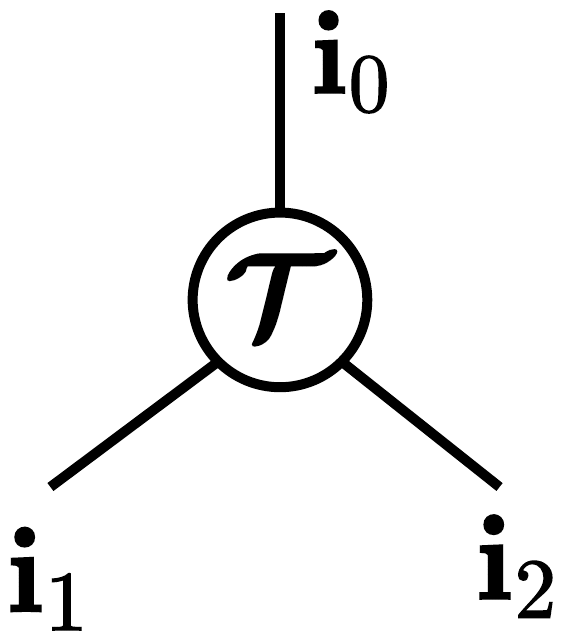}
		\label{fig:classical-tensor}
	}~
	\subfigure[]{
		\includegraphics[height=0.05\textheight]{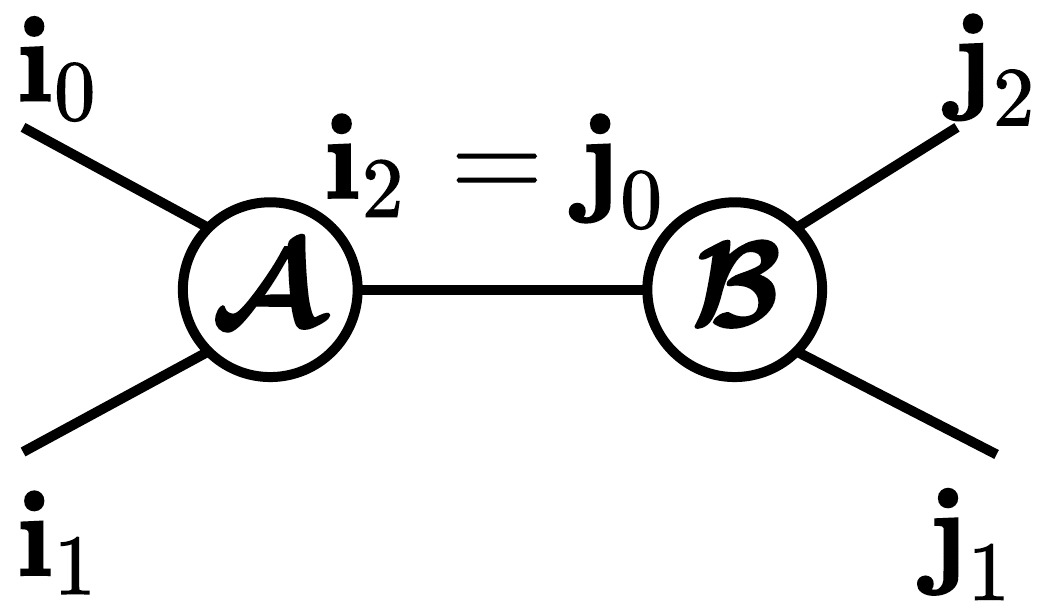}
		\label{fig:classical-contraction}
	}~
	\subfigure[]{
		\includegraphics[height=0.054\textheight]{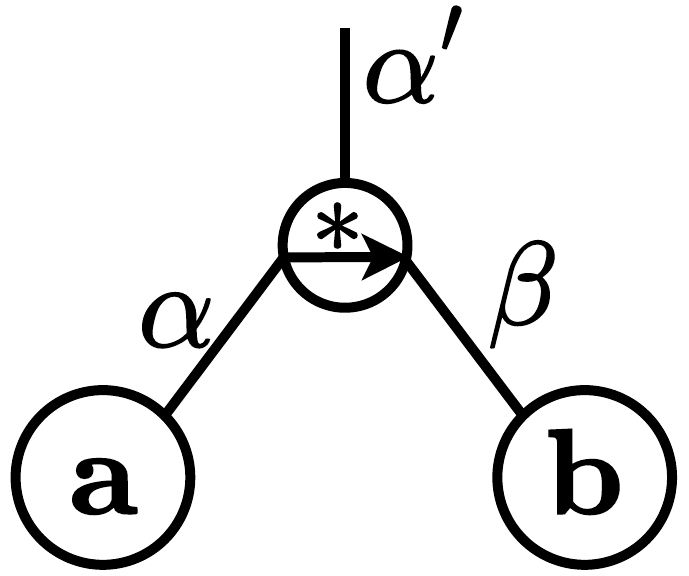}
		\label{fig:hyper-dummy}
	}~
	\subfigure[]{
		\includegraphics[height=0.05\textheight]{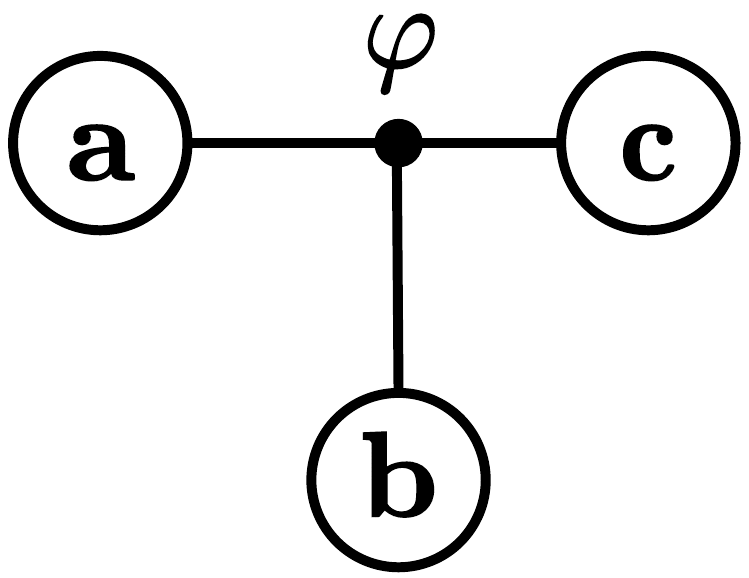}
		\label{fig:hyper-edge}
	}
	\caption{Tensor graphical instances. \subref{fig:classical-tensor} A tensor $\ca{T} \in \mathbb{R}^{\mathbf{i}_0\times \mathbf{i}_1  \times \mathbf{i}_2}$; \subref{fig:classical-contraction} Tensor Contraction; \subref{fig:hyper-dummy} Dummy Tensor; \subref{fig:hyper-edge} Hyperedge.}
	\label{fig:hypers}
\end{figure}

\begin{figure}[t]
	\centering
	\subfigure[$\mathbf{G_f}$]{
		\includegraphics[height=0.092\textheight]{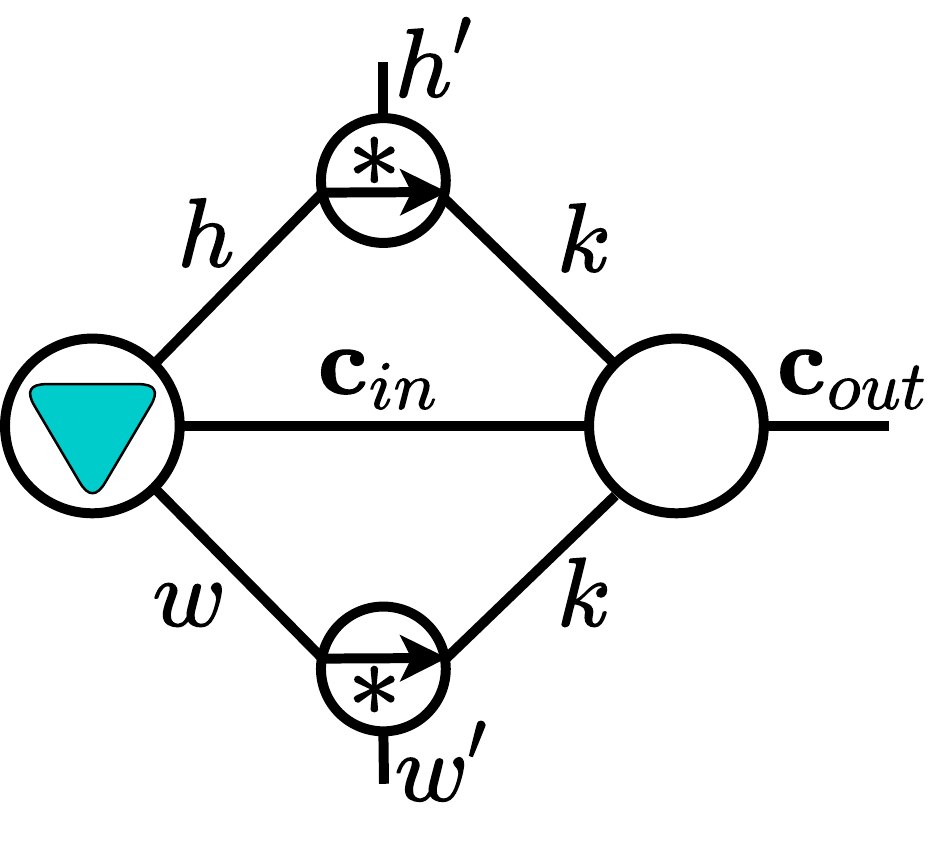}
		\label{fig:cnn-hyper-for}
	}
	\subfigure[$\mathbf{G_{b}}$]{
		\includegraphics[height=0.092\textheight]{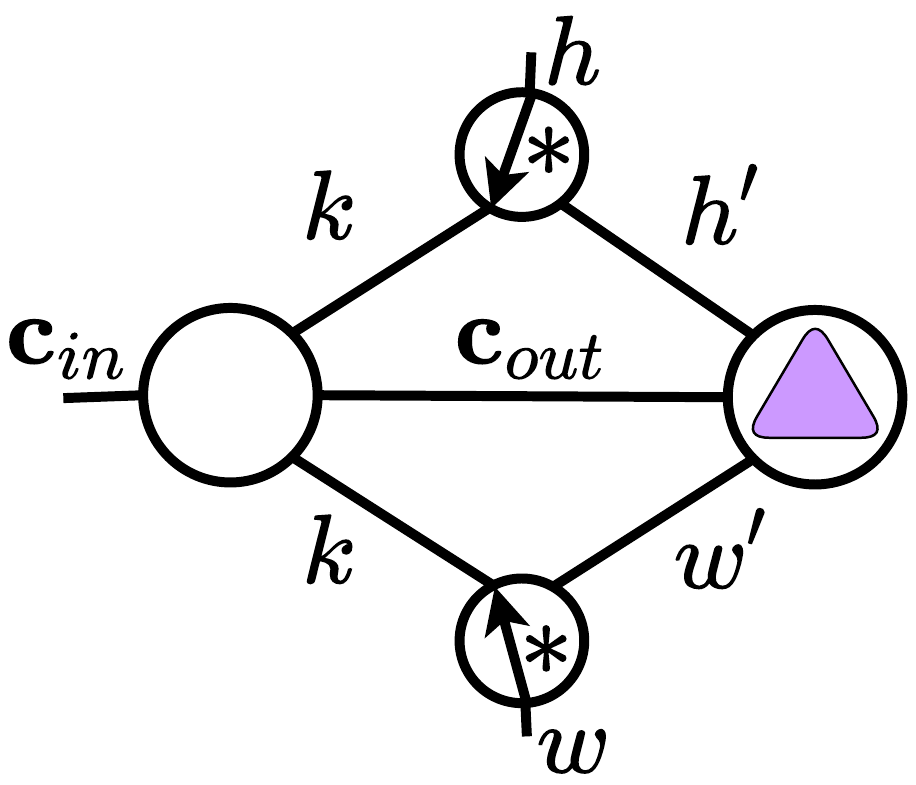}
		\label{fig:cnn-hyper-back}
	}
	\subfigure[$\mathbf{G_{bt}}$ (ours)]{
		\includegraphics[height=0.092\textheight]{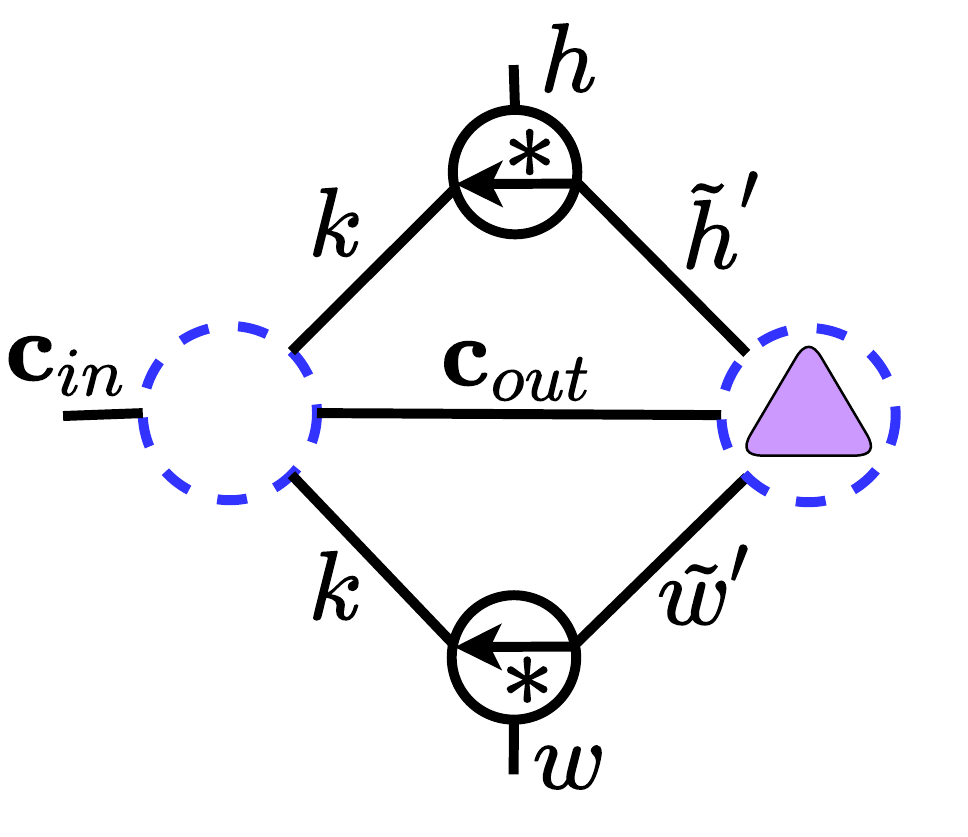}
		\label{fig:cnn-super-back}
	}
	\caption{Illustration for vanilla CNN. \subref{fig:cnn-hyper-for} $\mathbf{G_f}$: A hypergraph forward process formulated as $\ca{Y}=\ca{X}\circledast\ca{C}+\mathbf{b}$, where $\ca{C} \in \mathbb{R}^{\mathbf{c}_{in}\times \mathbf{c}_{out} \times k \times k}$ denotes a convolutional kernel, $\ca{X}\in \mathbb{R}^{\mathbf{c}_{in}\times h\times w}$ denotes the input feature (Cyan Inverted Triangle), $\ca{Y}\in \mathbb{R}^{\mathbf{c}_{out}\times h'\times w'}$ denotes the output feature, $\mathbf{b}\in \mathbb{R}^{\mathbf{c}_{out}}$ represents the bias, and $\circledast$ denotes the convolutional operator. $k$ represents kernel window size, $\mathbf{c}_{in}$ is the input channel, $h$ and $w$ denote height and width of $\ca{X}$, $\mathbf{c}_{out}$ is the output channel, $h'$ and $w'$ denote height and width of $\ca{Y}$; \subref{fig:cnn-hyper-back} $\mathbf{G_b}$: A hypergraph backward process derived directly from $\mathbf{G_f}$; \subref{fig:cnn-super-back} $\mathbf{G_{bt}}$: A hypergraph backward process  equivalently transformed from $\mathbf{G_{b}}$ with Reproducing Transformation. $\mathbf{G_{bt}}$ is completely the same as $\mathbf{G_{b}}$. In \subref{fig:cnn-hyper-back} and \subref{fig:cnn-super-back}, Purple Triangle means input gradient $\Delta \ca{Y} \in \mathbb{R}^{\mathbf{c}_{out}\times h'\times w'}$. $\tilde{h}'$ and $\tilde{w}'$ denote height and width of transformed $\Delta \ca{Y}$. In dummy tensor represented graphs, convolutional kernel vertex should connect arrow head of the dummy tensor and data-flow should connect the arrow tail. Thus, $\mathbf{G_{f}}$ and $\mathbf{G_{bt}}$ are convolutions, while $\mathbf{G_{b}}$ is not. 
	}
	\label{fig:hypergraphs}
\end{figure}

The unified paradigm can be applicable in controlling variance of two data-flow types, i.e., features in the forward process (fan-in mode) and gradients in the backward process (fan-out mode). For the fan-in mode, since the forward hypergraph (namely $\mathbf{G_f}$ in Figure~\ref{fig:cnn-hyper-for}) is a dummy based convolution, the unified paradigm can inherently be applied.
However, in the fan-out mode, the backward hypergraph (namely $\mathbf{G_b}$ in Figure~\ref{fig:cnn-hyper-back}) cannot represent a convolution process due to the conflict with the dummy tensor definition in Section~\ref{sec:hypergraph}. 
To solve this problem, we originally propose the Reproducing Transformation to reproduce $\mathbf{G_{b}}$ as a convolution hypergraph $\mathbf{G_{bt}}$ shown in Figure~\ref{fig:cnn-super-back}. Through the Reproducing Transformation, the unified paradigm can be applicable to the backward process. The 
overall working flow is illustrated in Figure~\ref{fig:total-frame}. In brief, our principal initialization can unify a variety of tensor formats, and meanwhile, fit both forward and backward propagation.

Through extensive experiments on various image classification benchmarks, we demonstrate that our method can produce appropriate initial weights for complicated TCNNs compared with classical initialization methods. Last but not least, we show that our paradigm is intrinsically a generalization of Xavier and relevant methods~\citep{DBLP:conf/cvpr/WangSEWA18, DBLP:conf/iccv/HeZRS15, DBLP:conf/iclr/ChangFL20}, while working more effectively for arbitrary TCNNs.

\begin{figure}[t]
	\centering
	\includegraphics[width=0.3\textwidth]{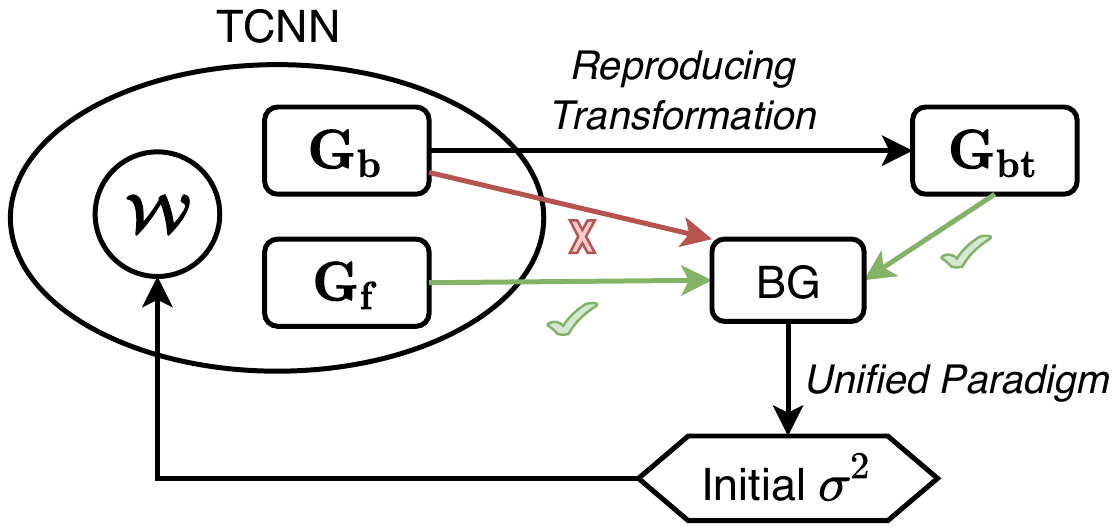}
	\caption{The overall workflow of the proposed unified initialization. A TCNN contains a forward hypergraph $\mathbf{G_f}$ and a backward hypergraph $\mathbf{G_b}$, besides the network weights $\ca{W}$. The objective is to achieve an acceptable variance $\sigma^2$ for $\ca{W}$ in order to keep the magnitude of data-flow stable across layers. To reach the goal, we derive a unified paradigm to calculate the desired $\sigma^2$.
	Note that the paradigm is applicable only to a backbone graph (BG) derived from a convolutional hypergraph. As $\mathbf{G_b}$ cannot be converted into the BG representation, we propose a reproducing transformation to transfer $\mathbf{G_b}$ in a convolutional representation $\mathbf{G_{bt}}$. With $\mathbf{G_f}$ and $\mathbf{G_{bt}}$, we can initialize TCNNs by regulating data-flow variance.
	}
	\label{fig:total-frame}
\end{figure}


\begin{figure*}[t]
	\centering
	\includegraphics[width=0.95\textwidth]{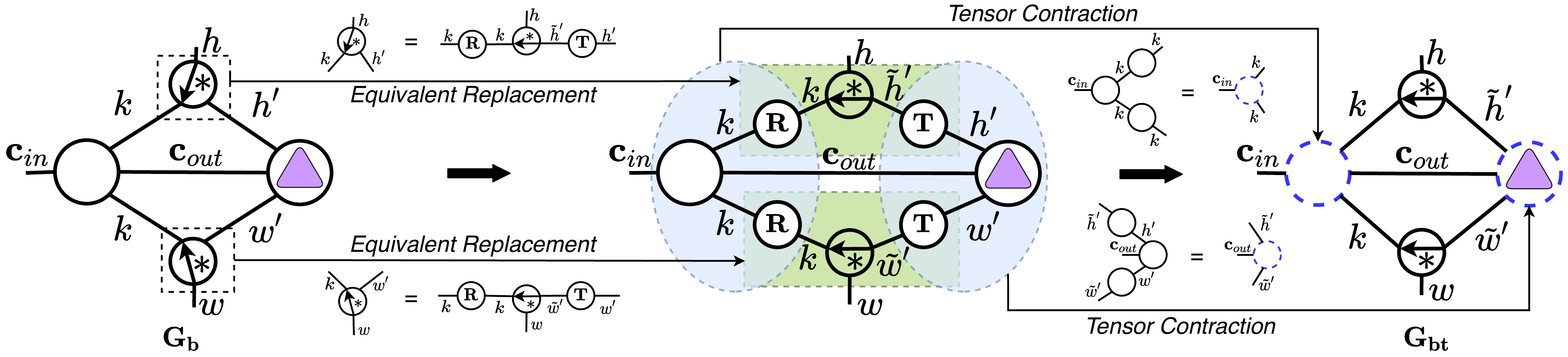}
	\caption{Reproducing Transformation, an equivalent transformation from $\mathbf{G_{b}}$ to $\mathbf{G_{bt}}$. Notations follow Figure~\ref{fig:hypergraphs}. According to Section~\ref{sec:hypergraph}, the purple gradient vertex should connect to the arrow tails of dummy tensors, therefore $\mathbf{G_{b}}$ cannot denote a convolution. To overcome the problem, we design Reproducing Transformation to bond the gradient vertex to arrow tails for the convolution representation. Taking vanilla convolution as an example, Reproducing Transformation utilizes an equivalent replacement with reversal matrix $\mathbf{R}$ and transformation matrix $\mathbf{T}$ (details in Section~\ref{sec:supergraph}) to exchange the arrow tail entry and the extra entry. Then $\mathbf{G_{bt}}$ can be  derived through contracting $\mathbf{R}$ and $\mathbf{T}$ with the weight vertex and the gradient vertex, respectively.}
	\label{fig:replace}
\end{figure*}

\section{Preliminaries}
\label{sec:pre}
In this section, we introduce the necessary preliminaries about tensors, and Xavier/Kaiming initialization.

\subsection{Tensor Diagram}

A tensor diagram mainly consists of two components, a tensor vertex and tensor contraction.

\textbf{Tensor Vertex.}~~A tensor is denoted as a vertex whose order is given by the number of edges connected to it. The integer assigned to each edge denotes the dimension of the corresponding mode. For example,
Figure~\ref{fig:classical-tensor} shows a 3rd-order tensor $\ca{T} \in \mathbb{R}^{\mathbf{i}_0\times \mathbf{i}_1  \times \mathbf{i}_2}$.

\textbf{Tensor Contraction.}~~The inner-product of two tensors on matching modes denotes tensor contraction. As illustrated in Figure~\ref{fig:classical-contraction}, a tensor $\ca{A}\in\mathbb{R}^{\mathbf{i}_0\times \mathbf{i}_1  \times \mathbf{i}_2} $ and a tensor $\ca{B}\in\mathbb{R}^{\mathbf{j}_0\times \mathbf{j}_1 \times \mathbf{j}_2}$, can contract in the corresponding position, forming a new tensor of $\mathbb{R}^{\mathbf{i}_0\times \mathbf{i}_1 \times \mathbf{j}_2 \times \mathbf{j}_3}$, when they have equal dimensions: $\mathbf{i}_2 = \mathbf{j}_0 \triangleq
e_0$. The contraction operation can be formulated as
\begin{equation}
(\ca{A}\times_{2}^{0}\ca{B})_{i_0, i_1, j_2, j_3} 
= \sum_{m=0}^{e_0 - 1} \ca{A}_{i_0, i_1, m}\ca{B}_{m, j_2, j_3}.
\end{equation}

\subsection{Hypergraph}
\label{sec:hypergraph}

To enhance expressive ability of the tensor diagram in deep models, \citet{DBLP:conf/nips/HayashiYSM19} proposed hypergraph to represent forward process of TCNNs through the dummy tensor and the hyperedge.

\textbf{Dummy Tensor.}~~A vertex with an arrow symbol denotes a dummy tensor which is able to represent a convolutional operation. As depicted in Figure~\ref{fig:hyper-dummy}, for a dummy tensor $\ca{P} \in \left\{ 0, 1 \right\}^{\alpha \times {\alpha '} \times \beta}$, $\alpha$ is the arrow tail entry, $\beta$ is the arrow head entry and $\alpha'$ is the extra entry. Relation among the three entries is formulated as $\ca{P}_{j,j^{'},k}=1$ if $j = sj' + k - p$ and $0$ otherwise. Here, $s$ represents the stride size; $p$ denotes the padding size. A vector convolution in Figure~\ref{fig:hyper-dummy} can be formulated as $\mathbf{c} = \mathbf{a} \times_0^0 \ca{P} \times_1^0 \mathbf{b} = \mathbf{a}\circledast \mathbf{b} \in \mathbb{R}^{\alpha'}$, in which $\circledast$ is the convolutional operator. This formulation represents a convolution in which $\mathbf{a}$ means a data-flow and $\mathbf{b}$ denotes a convolutional kernel. For a dummy tensor, convolutional kernel vertex should connect arrow head of the dummy tensor and data-flow should connect the arrow tail.

\textbf{Hyperedge.}~~A hyperedge $\varphi$ can connect to more than two tensor vertices. As shown in Figure~\ref{fig:hyper-edge}, an output of a special case, connecting three vectors through a hyperedge, can be calculated as $y = \sum_{k=0}^{\varphi-1} \mathbf{a}_{k}\mathbf{b}_{k} \mathbf{c}_{k}$. There is usually at most one hyperedge in a hypergraph layer, connecting to all weight vertices~\citep{DBLP:conf/nips/HayashiYSM19}.
A hyperedge $\varphi$ of a hypergraph represents summation over sub-structures, the parts without the hyperedge.
For such an adding composite structure, we can derive the whole architecture initialization by processing each sub-structure.

\begin{figure*}[t]
	\centering
	\includegraphics[width=0.98\textwidth]{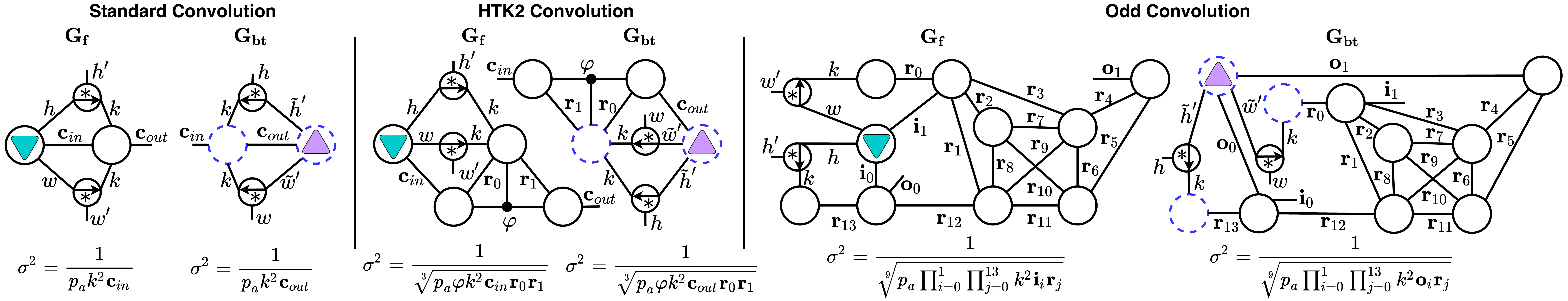}
	\caption{Reproducing Transformation Cases.
	$\sigma^2$ denotes initial variance of each wight vertex. 
(i)~Standard Convolution; It is the most common convolution in CNNs. We observe that the graphical initialization will degenerate to Xavier/Kaiming initialization on the standard convolution, as they have the same weight variance formulation;
(ii)~Hyper Tucker-2 (HTK2) Convolution; Tucker-2 (TK2) is a classical tensor decomposition, known as the bottleneck structure in ResNet~\citep{DBLP:conf/cvpr/HeZRS16}. We apply hyperedge to its weight vertices to form the HTK2;
(iii)~Odd Convolution; We introduce a particularly complicated tensor format (named Odd Tensor here) originally proposed by \citet{DBLP:conf/icml/LiS20}. Odd Tensor contains 9 vertices and 14 edges. The connection among these vertices is irregular, making weight initialization a complex problem. By connecting all weight vertices with a hyperedge $\varphi$, it is flexible to construct HOdd (Graph-in: $\frac{1}{\sqrt[9]{\bm{p}_{\bm{a}}\varphi \prod^1_{i=0}\prod^{13}_{j=0} k^2\mathbf{i}_i\mathbf{r}_j}}$; Graph-out: $\frac{1}{ \sqrt[9]{\bm{p}_{\bm{a}}\varphi\prod^1_{i=0}\prod^{13}_{j=0}k^2\mathbf{o}_i\mathbf{r}_j}}$).
By successfully training Hyper Odd (HOdd) based networks, we can better demonstrate the potential adaptability of our method to diverse TCNNs.}
	\label{fig:super-figures}
\end{figure*}

\subsection{Xavier and Kaiming Initialization}
\label{sec:xavier}
Xavier initialization~\citep{DBLP:journals/jmlr/GlorotB10} and Kaiming initialization~\citep{DBLP:conf/iccv/HeZRS15} are widely used in CNNs. They aim to control the variance of features and gradients for stable training. We will introduce them through a vanilla CNN (Figure~\ref{fig:cnn-hyper-for}), formulated as $\ca{Y}=\ca{X}\circledast\ca{C}+\mathbf{b}$, where $\ca{C} \in \mathbb{R}^{\mathbf{c}_{in}\times \mathbf{c}_{out} \times k \times k}$ denotes a convolutional kernel, $\ca{X}\in \mathbb{R}^{\mathbf{c}_{in}\times h\times w}$ denotes the input, $\ca{Y}\in \mathbb{R}^{\mathbf{c}_{out}\times h'\times w'}$ denotes the output, $\mathbf{b}\in \mathbb{R}^{\mathbf{c}_{out}}$ represents the bias, and $\circledast$ denotes the convolutional operator. $k$ represents kernel window size, $\mathbf{c}_{in}$ is the input channel, $h$ and $w$ denote height and width of $\ca{X}$, $\mathbf{c}_{out}$ is the output channel, and $h'$ and $w'$ denote height and width of $\ca{Y}$.

Xavier initialization makes the following assumptions: (1) Elements of $\ca{C}$, $\ca{X}$ and $\mathbf{b}$ all satisfy the i.i.d. condition; (2) $\mathbb{E}(\ca{C})=0$; (3)~$\mathbb{E}(\ca{X})=0$; and (4) $\mathbf{b}=\mathbf{0}$. There are two modes of Xavier initialization: (1) maintaining the variance of feature $\ca{X}$ which is referred to as the fan-in mode: $\sigma^2(\ca{C}) = \frac{1}{k^2\mathbf{c}_{in}}$; (2) maintaining the variance of gradients as the fan-out mode: $\sigma^2(\ca{C}) = \frac{1}{k^2\mathbf{c}_{out}}$.
In practice, the harmonic form is preferred: $\sigma^2(\ca{C}) = \frac{2}{k^2(\mathbf{c}_{in} + \mathbf{c}_{out})}$.

Kaiming initialization extends the Xavier initialization to incorporate ReLU activation function. In accordance with Assumption (3) of Xavier initialization, Kaiming initialization requires the distribution of $\ca{C}$ to be symmetric. Similarly, Kaiming initialization also contains two modes: (1) the fan-in mode: $\sigma^2(\ca{C}) = \frac{2}{k^2\mathbf{c}_{in}}$; (2) the fan-out mode: $\sigma^2(\ca{C}) = \frac{2}{k^2\mathbf{c}_{out}}$.


\section{Unified Initialization}
\label{sec:method}

In this section, we introduce our proposed unified initialization paradigm designed for various TCNNs. 
We first introduce our Reproducing Transformation, then we demonstrate the derivation of our unified paradigm, and finally we provide a simple exemplar initialization method that can be directly obtained based on the paradigm.

\subsection{Reproducing Transformation}
\label{sec:supergraph}



We build our unified initialization through derivation on a convolution hypergraph, whereby we can directly achieve the fan-in mode initialization from the forward hypergraph $\mathbf{G_f}$ since it is a natural convolution. However, the backward hypergraph $\mathbf{G_b}$ directly derived from $\mathbf{G_f}$ cannot represent a convolution as elaborated in Figure~\ref{fig:hypergraphs}, which hinders the derivation of the fan-out mode. To solve this problem, we build Reproducing Transformation to convert $\mathbf{G_b}$ to a convolution hypergraph $\mathbf{G_{bt}}$. Before presenting the transformation, we first formulate the forward process.

In the forward process of a convolutional layer, we denote the output tensor by $\ca{Y}$ and the input tensor by $\ca{X}$. Then we have $\ca{Y}=\bm{a}(\bm{f}(\ca{X}, {\bm \theta}))\triangleq \bm{g}(\ca{X})$, where $\bm{f}(\cdot)$ means a linear mapping function, ${\bm \theta}$ denotes parameters of $\bm{f}(\cdot)$, and $\bm{a}(\cdot)$ denotes an activation function (usually a ReLU function).

For the backward propagation, $\bm{\mathfrak{L}}$ denotes the Loss. In this process, we utilize a reversal matrix and a transformation matrix to achieve the equivalent transformation. These two auxiliary matrices will only change element position when they contract with another tensor, which helps calculate the variance of data-flow and weight vertices.

\textbf{Reversal Matrix.}
~A reversal matrix $\mathbf{R} \in \mathbb{R}^{r \times r}$ is an anti-diagonal matrix, where $\mathbf{R}_{ij}=1$ when $i+j =r-1$, $\mathbf{R}_{ij}=0$ otherwise.

\textbf{Transformation Matrix.}
~A transformation matrix $\mathbf{T} \in \mathbb{R}^{t \times \tilde{t}}$ is an identity-like matrix, where $\tilde{t} = \varepsilon(t-1) + 1$ and $\varepsilon\in \mathbb{R}^{N}$ is a coefficient. $\mathbf{T}_{ij}=1$ when $i=\frac{j}{\varepsilon}$, $\mathbf{T}_{ij}=0$ otherwise. 

With these two matrices, we can derive Theorem~\ref{thm:backdummy}. 

\begin{theorem}
\label{thm:backdummy}
Given a vector $\mathbf{a} \in \mathbb{R}^{\alpha}$ and a vector $\mathbf{b} \in \mathbb{R}^{\beta}$, let $\mathbf{y}=\mathbf{a}\circledast \mathbf{b} \in \mathbb{R}^{\alpha '}$, then $\Delta \mathbf{a}=\Delta \mathbf{y}\mathbf{T}\circledast \mathbf{R}\mathbf{b}$, where $\mathbf{R} \in \mathbb{R}^{\beta \times \beta}$ denotes a reversal matrix, $\mathbf{T} \in \mathbb{R}^{\alpha' \times \tilde{\alpha}'}$ represents a transformation matrix, $\circledast$ means convolution operation and $\Delta \bullet \triangleq \frac{\partial \bm{\mathfrak{L}}}{\partial \bullet}$ denotes the gradient.
\end{theorem}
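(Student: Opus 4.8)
The plan is to verify the claimed identity $\Delta\mathbf{a}=\Delta\mathbf{y}\mathbf{T}\circledast\mathbf{R}\mathbf{b}$ by writing everything out in coordinates and reducing both sides to the same index sum, using the dummy-tensor description of convolution from Section~\ref{sec:hypergraph}. First I would fix notation: since $\mathbf{y}=\mathbf{a}\circledast\mathbf{b}$, by the dummy-tensor formula we have $\mathbf{y}_{j'}=\sum_{k}\mathbf{a}_{sj'+k-p}\,\mathbf{b}_k$, i.e. $\mathbf{y}_{j'}=\sum_{j}\ca{P}_{j,j',k}\mathbf{a}_j\mathbf{b}_k$ where $\ca{P}\in\{0,1\}^{\alpha\times\alpha'\times\beta}$ is the dummy tensor with stride $s$ and padding $p$. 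For the statement as drawn in Figure~\ref{fig:replace} I would take the simplest case $s=1$ (so $\tilde\alpha'$ matches the plain backward shape, $\varepsilon=1$ making $\mathbf{T}$ the identity), and treat the general stride afterwards as the place where $\mathbf{T}$ genuinely does work.

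Next I would compute the left-hand side directly. By the chain rule, $\Delta\mathbf{a}_j=\sum_{j'}\frac{\partial\bm{\mathfrak{L}}}{\partial\mathbf{y}_{j'}}\frac{\partial\mathbf{y}_{j'}}{\partial\mathbf{a}_j}=\sum_{j'}\Delta\mathbf{y}_{j'}\sum_{k}\ca{P}_{j,j',k}\mathbf{b}_k$, so $\Delta\mathbf{a}_j=\sum_{j',k}\ca{P}_{j,j',k}\,\Delta\mathbf{y}_{j'}\,\mathbf{b}_k$. Then I would compute the right-hand side: the reversal matrix sends $\mathbf{b}_k\mapsto(\mathbf{R}\mathbf{b})_k=\mathbf{b}_{\beta-1-k}$, and the transformation matrix upsamples $\Delta\mathbf{y}$ by inserting $\varepsilon-1$ zeros between entries, $(\Delta\mathbf{y}\mathbf{T})_m=\Delta\mathbf{y}_{m/\varepsilon}$ when $\varepsilon\mid m$ and $0$ otherwise. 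Writing the convolution $\Delta\mathbf{y}\mathbf{T}\circledast\mathbf{R}\mathbf{b}$ again through its own dummy tensor $\ca{Q}$ gives $(\Delta\mathbf{y}\mathbf{T}\circledast\mathbf{R}\mathbf{b})_j=\sum_{m,k}\ca{Q}_{?,j,k}(\Delta\mathbf{y}\mathbf{T})_m\mathbf{b}_{\beta-1-k}$, and the job is to choose the padding of this new convolution so that after the change of variables $k\mapsto\beta-1-k$ (from the reversal) and $m\mapsto j'$ (undoing the upsampling) the index constraint of $\ca{Q}$ becomes exactly the constraint $j=sj'+k-p$ of $\ca{P}$. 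This is a bookkeeping computation: the reversal flips the sign of $k$ in the affine relation, the required padding on the backward convolution is the "complementary" padding $\beta-1-p$ (the standard fact that the adjoint of a convolution is a convolution with the flipped kernel and complementary padding), and the transformation matrix accounts for the stride-$s$ striding of the forward map by dilating the gradient.

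I expect the main obstacle to be handling the stride $s>1$ case cleanly: when $s>1$ the forward convolution is a strided convolution, its adjoint is a convolution against the flipped kernel applied to a $\varepsilon$-dilated (zero-interleaved) version of $\Delta\mathbf{y}$ with $\varepsilon=s$, and getting the output length $\tilde\alpha'=\varepsilon(\alpha'-1)+1$ to line up with $\alpha$ (possibly up to boundary terms that vanish because the inserted entries are zero) requires care with the off-by-one in padding and with which entries of $\ca{P}$ are actually $1$. The cleanest route is probably to avoid coordinates for this part and instead argue at the level of the dummy tensor itself: show that contracting $\mathbf{R}$ into the kernel leg and $\mathbf{T}$ into the gradient leg of the "wrong-orientation" backward diagram $\mathbf{G_b}$ literally reorients the arrow of the dummy tensor — i.e. $\mathbf{T}^\top\ca{P}\,\mathbf{R}$ (suitably contracted) equals the dummy tensor $\ca{Q}$ of a genuine convolution with head and tail swapped — which is exactly the pictorial claim in Figure~\ref{fig:replace} and reduces Theorem~\ref{thm:backdummy} to a finite identity among $0/1$ tensors that one verifies entrywise. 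Once the vector case is settled, the multi-mode TCNN statement follows by applying it leg-by-leg, since the auxiliary matrices only permute/embed coordinates and commute with the other contractions.
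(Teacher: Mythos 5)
Your proposal is correct and follows essentially the same route as the paper's own proof: the paper likewise reduces the claim to the entrywise $0/1$-tensor identity $\ca{P} = \ca{P}' \times_1^1 \mathbf{T} \times_1^0 \mathbf{R}$, where $\ca{P}'$ is the backward dummy tensor with unit stride, flipped kernel index $\tilde{k}=\beta-k-1$, and complementary padding $\tilde{p}=\beta-p-1$, and then checks the affine index constraints exactly as in your "cleanest route" paragraph. Your explicit chain-rule computation of $\Delta\mathbf{a}_j=\sum_{j',k}\ca{P}_{j,j',k}\,\Delta\mathbf{y}_{j'}\,\mathbf{b}_k$ is a small addition the paper leaves implicit, but the substance of the argument is the same.
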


The proof of Theorem~\ref{thm:backdummy} is provided in Appendix~\ref{sec:profthmdummy}.
Theorem~\ref{thm:backdummy} is corresponding to the equivalent replacement in Figure~\ref{fig:replace}. We implement the Reproducing Transformation by applying the equivalent replacements to the original backward hypergraph $\mathbf{G_b}$, then, we contract $\mathbf{R}$ and $\mathbf{T}$ with the weight vertex and the gradient vertex, respectively. Finally, we can obtain the transformed backward hypergraph $\mathbf{G_{bt}}$ which denotes the backward convolution.
We show some Reproducing Transformation cases in Figure~\ref{fig:super-figures}.

\begin{figure*}[t]
	\centering
	\includegraphics[width=0.98\textwidth]{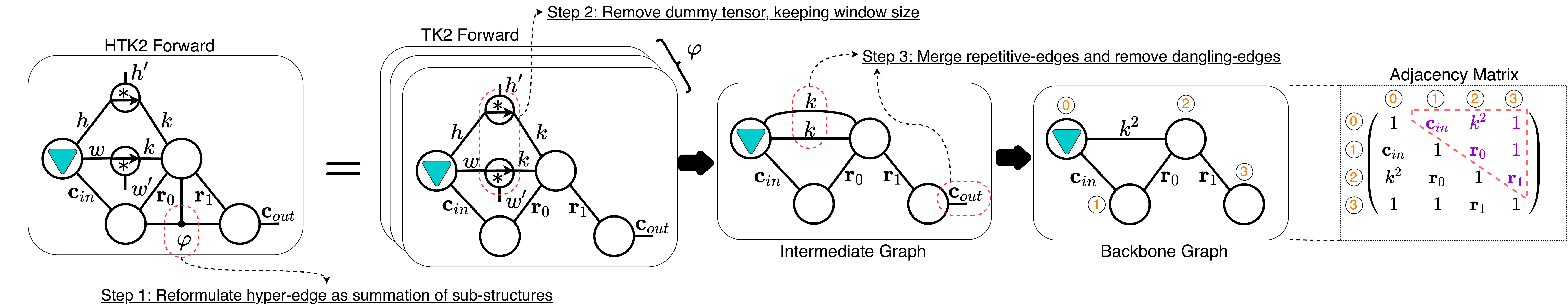}
	\caption{An example of deriving Graph-in mode for Hyper Tucker-2 (HTK2) convolution. Step 1: Since a hyperedge $\varphi$ indicates adding operation over $\varphi$ sub-structures (Tucker-2 here), we can derive the whole architecture initialization by processing each sub-structure. Step 2: Since a convolution only calculates on the kernel window, we can remove the dummy tensors by leaving kernel $k$ to derive Intermediate Graph (IG). Step 3: Since elements of IG have same variance, we can further diminish $\mathbf{c}_{out}$ edge while merging repetitive-edges to derive Backbone Graph (BG). Then the initial variance of convolutional weights can be derived as $\frac{1}{\sqrt[3]{\bm{p}_{\bm{a}}\varphi k^2 {\mathbf{c}}_{in}{\mathbf{r}}_{0}{\mathbf{r}}_{1}}}$ in terms to the adjacent matrix of BG, where $\bm{p}_{\bm{a}}$ denotes the scale of activation function. Graph-out case is shown in Figure~\ref{fig:backward-init} of Appendix.}
	\label{fig:forward-init}
\end{figure*}

\subsection{Unified Paradigm}
\label{sec:unified-paradigm}


Here, we will derive a unified paradigm through variance analysis. We first give  Proposition~\ref{pps:tensorsum} and Proposition~\ref{pps:tensorprod} to describe the relationship between variance and tensor calculation. Then we introduce Backbone Graph (BG) to illustrate inner production in a hypergraph. At last, we obtain the paradigm in terms of BG and these two propositions.

\begin{proposition}
    \label{pps:tensorsum}
    Given tensors $\ca{X} \in \mathbb{R}^{{\mathbf i}_0 \times {\mathbf i}_1 \times \dots \times {\mathbf i}_{m-1}}$ and $\ca{Y} \in \mathbb{R}^{{\mathbf i}_0 \times {\mathbf i}_1 \times \dots \times {\mathbf i}_{m-1}}$, where elements of $\ca{X}$ and $\ca{Y}$ are independent with each other, the variance of their element-wise sum $\ca{Z} = \ca{X} + \ca{Y}$ is
    \begin{equation}
    {\sigma^2}{(\ca{Z})} = {\sigma^2}{(\ca{X})} + {\sigma^2}{(\ca{Y})}.
    \end{equation}
\end{proposition}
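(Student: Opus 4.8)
The plan is to prove Proposition~\ref{pps:tensorsum} by reducing the tensor statement to a scalar (entrywise) statement and then invoking the elementary fact that the variance of a sum of independent random variables is the sum of the variances. Concretely, fix an arbitrary multi-index $(j_0, j_1, \dots, j_{m-1})$ with $0 \le j_\ell < \mathbf{i}_\ell$ and write $Z_{j_0,\dots,j_{m-1}} = X_{j_0,\dots,j_{m-1}} + Y_{j_0,\dots,j_{m-1}}$. Since element-wise addition of tensors is defined coordinatewise, this single scalar identity captures the whole tensor relation $\ca{Z} = \ca{X} + \ca{Y}$, so it suffices to compute $\sigma^2(Z_{j_0,\dots,j_{m-1}})$ for this fixed index.

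Next I would expand the variance of the scalar sum directly: $\sigma^2(X + Y) = \sigma^2(X) + \sigma^2(Y) + 2\,\mathrm{Cov}(X, Y)$. By the hypothesis that the elements of $\ca{X}$ and $\ca{Y}$ are independent of one another, the entry $X_{j_0,\dots,j_{m-1}}$ is independent of $Y_{j_0,\dots,j_{m-1}}$, hence $\mathrm{Cov}(X_{j_0,\dots,j_{m-1}}, Y_{j_0,\dots,j_{m-1}}) = 0$, and the cross term vanishes. This yields $\sigma^2(Z_{j_0,\dots,j_{m-1}}) = \sigma^2(X_{j_0,\dots,j_{m-1}}) + \sigma^2(Y_{j_0,\dots,j_{m-1}})$. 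Finally, interpreting $\sigma^2(\ca{Z})$, $\sigma^2(\ca{X})$, $\sigma^2(\ca{Y})$ as the common (i.i.d.) entrywise variances used throughout the paper's variance-analysis framework, the per-index identity immediately gives the claimed tensor-level equation.

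There is essentially no hard step here; the proposition is a bookkeeping restatement of a standard probability fact. The only point requiring a little care is making precise what $\sigma^2(\ca{X})$ denotes for a tensor — namely the variance of a generic entry under the i.i.d. assumption implicit in the surrounding setup — so that the reduction from tensors to scalars is legitimate and the final line is not merely a statement about one coordinate but about the tensor as a whole. I would state that convention explicitly at the start of the proof and then the argument is a two-line computation.
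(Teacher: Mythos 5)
Your proof is correct and follows essentially the same route as the paper's: the paper vectorizes $\ca{X}$ and $\ca{Y}$ and appeals to a vector-level lemma asserting additivity of variance under independence, while you work entrywise and make the key step explicit by expanding $\sigma^2(X+Y)=\sigma^2(X)+\sigma^2(Y)+2\,\mathrm{Cov}(X,Y)$ and killing the covariance term. Your version is, if anything, slightly more complete, since the paper leaves its vector lemma unproved and, like you, must implicitly read $\sigma^2(\ca{X})$ as the common entrywise variance.
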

\begin{proposition}
    \label{pps:tensorprod}
    A tensor $ \ca{X} \in \mathbb{R}^{{\mathbf i}_0 \times {\mathbf i}_1 \times \dots \times {\mathbf i}_{m-1}}$ (i.i.d.) and a tensor $\ca{Y}$ contract $d$ dimensions ($d \leq \min{(m, n)}$), where $\ca{Y} \in \mathbb{R}^{{\mathbf j}_0 \times {\mathbf j}_1 \times \dots \times {\mathbf j}_{n-1}}$ is i.i.d. and follows a zero-mean symmetrical distribution.  The ${\mathbf x}_t$-th dimension of $\ca{X}$ corresponds to the ${\mathbf y}_t$-th dimension of $\ca{Y}$, where ${\mathbf x}_t\neq {\mathbf x}_u$ and ${\mathbf y}_t\neq {\mathbf y}_u$ if $t\neq u$, ${\mathbf x}_t\leq m-1$, and ${\mathbf y}_t\leq n-1$. Without loss of generality, let ${\mathbf i}_{{\mathbf x}_t}={\mathbf j}_{{\mathbf y}_t}={\mathbf v}_t$, for $t \in \{0, 1, \dots, d-1\}$. The variance of contracted tensor  $\ca{Z} = \ca{X} \times_{{\mathbf x}_0, {\mathbf x}_1, \dots, {\mathbf x}_{d-1}}^{{\mathbf y}_0, {\mathbf y}_1, \dots, {\mathbf y}_{d-1}} \ca{Y}$ is calculated by
    \begin{equation}
    {\sigma^2}{(\ca{Z})} = {\sigma^2}{(\ca{X})}  {\sigma^2}{(\ca{Y})} \prod_{t=0}^{d-1}{{\mathbf v}_t}.
    \end{equation}
\end{proposition}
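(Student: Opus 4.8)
The plan is to reduce the claim to a variance computation for a single entry of $\ca{Z}$ and then expand the resulting sum. Since $\ca{X}$ and $\ca{Y}$ are i.i.d., every entry of $\ca{Z}$ has the same law, so it suffices to treat one entry. Fixing the non-contracted indices, that entry reads
\begin{equation}
z \;=\; \sum_{\mathbf{k}} \ca{X}_{\mathbf{k}}\,\ca{Y}_{\mathbf{k}},
\end{equation}
where $\mathbf{k}=(k_0,\dots,k_{d-1})$ ranges over all $\prod_{t=0}^{d-1}\mathbf{v}_t$ tuples with $0\le k_t<\mathbf{v}_t$, and $\ca{X}_{\mathbf{k}}$ (resp.\ $\ca{Y}_{\mathbf{k}}$) denotes the entry of $\ca{X}$ (resp.\ $\ca{Y}$) obtained by setting the $\mathbf{x}_t$-th mode index (resp.\ $\mathbf{y}_t$-th mode index) to $k_t$ and freezing the rest. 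First I would record the structural fact that, because the contracted modes are pairwise distinct ($\mathbf{x}_t\neq\mathbf{x}_u$ and $\mathbf{y}_t\neq\mathbf{y}_u$ for $t\neq u$), the maps $\mathbf{k}\mapsto\ca{X}_{\mathbf{k}}$ and $\mathbf{k}\mapsto\ca{Y}_{\mathbf{k}}$ are injective; hence $\{\ca{X}_{\mathbf{k}}\}_{\mathbf{k}}$ are mutually independent, $\{\ca{Y}_{\mathbf{k}}\}_{\mathbf{k}}$ are mutually independent, and the $\ca{X}$-family is independent of the $\ca{Y}$-family.

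Next I would do the moment bookkeeping. By independence and $\mathbb{E}(\ca{Y})=0$, each summand has mean $\mathbb{E}[\ca{X}_{\mathbf{k}}\ca{Y}_{\mathbf{k}}]=\mathbb{E}[\ca{X}_{\mathbf{k}}]\,\mathbb{E}[\ca{Y}_{\mathbf{k}}]=0$, so $\mathbb{E}[z]=0$ and $\sigma^2(\ca{Z})=\mathbb{E}[z^2]=\sum_{\mathbf{k},\mathbf{k}'}\mathbb{E}[\ca{X}_{\mathbf{k}}\ca{X}_{\mathbf{k}'}\ca{Y}_{\mathbf{k}}\ca{Y}_{\mathbf{k}'}]$. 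For each off-diagonal pair $\mathbf{k}\neq\mathbf{k}'$, the factor $\ca{Y}_{\mathbf{k}}$ is independent of $\ca{Y}_{\mathbf{k}'}$ and of every $\ca{X}$-entry, so the expectation factors as $\mathbb{E}[\ca{X}_{\mathbf{k}}\ca{X}_{\mathbf{k}'}]\,\mathbb{E}[\ca{Y}_{\mathbf{k}}]\,\mathbb{E}[\ca{Y}_{\mathbf{k}'}]=0$; all cross terms drop. For the diagonal terms, $\mathbb{E}[\ca{X}_{\mathbf{k}}^{2}\ca{Y}_{\mathbf{k}}^{2}]=\mathbb{E}[\ca{X}_{\mathbf{k}}^{2}]\,\mathbb{E}[\ca{Y}_{\mathbf{k}}^{2}]=\mathbb{E}[\ca{X}^{2}]\,\sigma^{2}(\ca{Y})$, the last step again using $\mathbb{E}(\ca{Y})=0$. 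Summing the $\prod_{t}\mathbf{v}_t$ diagonal contributions gives $\sigma^2(\ca{Z})=\mathbb{E}[\ca{X}^{2}]\,\sigma^{2}(\ca{Y})\prod_{t=0}^{d-1}\mathbf{v}_t$, and I would then invoke the standing zero-mean convention on data-flows (Assumption~(3) in Section~\ref{sec:xavier}) to replace $\mathbb{E}[\ca{X}^{2}]$ by $\sigma^{2}(\ca{X})$, obtaining the stated identity. I would also remark that the symmetry of $\ca{Y}$ is not actually needed for this step — zero mean suffices — but it is what later makes these second moments compose consistently with the ReLU analysis.

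The main obstacle is not analytic but combinatorial bookkeeping: one must make fully precise that ranging over the product index set addresses \emph{pairwise-distinct}, hence independent, entries of \emph{both} $\ca{X}$ and $\ca{Y}$ simultaneously (this is exactly where the hypotheses $\mathbf{x}_t\neq\mathbf{x}_u$ and $\mathbf{y}_t\neq\mathbf{y}_u$ are used), since otherwise repeated entries would create surviving cross terms. A secondary point to flag carefully is the status of $\mathbb{E}(\ca{X})$: the displayed formula uses $\sigma^{2}(\ca{X})$ rather than the second moment $\mathbb{E}[\ca{X}^{2}]$, so the derivation tacitly assumes — consistently with the Xavier/Kaiming setting adopted throughout the paper — that the contracted data-flow is centered. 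Once these two points are granted, the vanishing of the cross terms and the collection of the diagonal terms are routine.
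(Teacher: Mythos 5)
Your proposal is correct and follows essentially the same route as the paper: reduce to a single entry of $\ca{Z}$, view it as an inner product of the vectorized sub-tensors of $\ca{X}$ and $\ca{Y}$, and show that all cross terms vanish because $\mathbb{E}(\ca{Y})=0$, leaving $\prod_t \mathbf{v}_t$ diagonal terms. The only difference is that the paper delegates this computation to a stated-but-unproved Lemma (its Lemma on inner products of i.i.d.\ vectors), whereas you carry it out explicitly and correctly flag the one hypothesis the paper's lemma glosses over — namely that the diagonal terms yield $\mathbb{E}[\ca{X}^2]$ rather than $\sigma^2(\ca{X})$ unless the data-flow is also centered, consistent with Assumption~(3) of Section~\ref{sec:xavier}.
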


The proofs of the two propositions are given in Appendix~\ref{sec:pro-corosum} and \ref{sec:pro-coroprod}. It is worth mentioning that $\ca{X}$ in Proposition~\ref{pps:tensorprod} is hard to satisfy i.i.d, but assuming $\ca{X}$ non-i.i.d is still applicable in practice as the empirical elaboration in Appendix~\ref{sec:pro-validation}.

\subsubsection{Backbone Graph}

According to Proposition~\ref{pps:tensorprod}, variance change depends not only on weight and input, but also on contracted dimension ${\mathbf v}_t$.
Therefore, we introduce Backbone Graph (BG) that only contains contracting edges (i.e., contracted dimensions). Figure~\ref{fig:forward-init} shows a process to derive BG from a dummy tensor based convolution. An adjacency matrix of $\tau$-vertex BG is defined as $\mathbf{E} \in \mathbb{R}^{\tau\times \tau}$, whose element $e_{ij}$ satisfying $e_{ij} = e_{ji}$ and diagonal element $e_{ii}=1$, where $i, j\in \{0, 1,\dots, \tau-1\}$.
As shown in Figure~\ref{fig:forward-init}, the adjacency matrix in the tensor diagram is specially designed to fit the calculation of the variance where each element denotes the contraction between two nodes.
Thus, $e_{ij}=1$ means the contracting dimension between node $i$ and node $j$ is equal to 1, suggesting that there is no edge between node $i$ and node $j$. $\mathbf{E}$ is symmetric and each vertex does not connect to itself. A supergraph denotes an output tensor $\ca{Y}$.
We use $BG(\mathbf{E})$ to denote the Backbone Graph that comes from $\ca{Y}$. $BG(\mathbf{E})$ can be regarded as an element $\ca{Y}_{\ast}$ of $\ca{Y}$.

\subsubsection{Derivation for Unified Paradigm}
\label{sec:derive-unify}

Since $\mathbf{E} \in \mathbb{R}^{\tau \times \tau}$ is symmetric, we consider  edges $e_{ij}$ satisfying $ i < j$ only. Then based on Proposition~\ref{pps:tensorprod}, we present Theorem~\ref{thm:total} to reveal the scale after the input through a TCNN. The proof of Theorem~\ref{thm:total} is in Appendix~\ref{sec:prototal}.
\begin{theorem}
	\label{thm:total}
	Assume the input $\ca{X}$ contracts with $n$ weight vertices $\{\ca{W}^{(i)}\}_{i=0}^{n-1}$. Meanwhile, input variance is ${\sigma^2}(\ca{X})$ and output variance is ${\sigma^2}(\ca{Y})$, then
	\begin{align}
	\label{eq:varequal}
	    {\sigma^2}(\ca{Y}) = {\sigma^2}(\ca{X})\prod_{k=0}^{n-1}{{\sigma^2}(\ca{W}^{(k)})}\prod_{i=0}^{n-1}\prod_{j=i+1}^{\tau-1}{{e}_{ij}}.
	\end{align}
\end{theorem}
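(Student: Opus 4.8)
The plan is to unfold the contraction of $\ca{X}$ with all $n$ weight vertices into a sequence of pairwise contractions, track the variance at each step using Proposition~\ref{pps:tensorprod}, and then reorganize the accumulated product of contracted dimensions into the adjacency-matrix form $\prod_{i}\prod_{j>i} e_{ij}$ dictated by the Backbone Graph. The starting point is that, by the definition of BG, every contracting edge in the hypergraph corresponds to an off-diagonal entry $e_{ij}>1$ of $\mathbf{E}$, while $e_{ij}=1$ encodes "no edge"; hence any product of the form $\prod e_{ij}$ automatically picks out exactly the dimensions that are genuinely summed over. So the combinatorial bookkeeping reduces to showing that the multiset of contracted dimensions encountered while building $\ca{Y}$ coincides with $\{e_{ij} : 0\le i< j\le \tau-1,\ i\le n-1\}$.

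First I would set up an induction on $n$, the number of weight vertices contracted with $\ca{X}$. For the base case $n=1$, we have $\ca{Y}=\ca{X}\times \ca{W}^{(0)}$ over some set of shared dimensions; Proposition~\ref{pps:tensorprod} (whose hypotheses — $\ca{W}^{(0)}$ i.i.d.\ zero-mean symmetric, $\ca{X}$ treated as i.i.d.\ per the remark after the propositions) give $\sigma^2(\ca{Y}) = \sigma^2(\ca{X})\,\sigma^2(\ca{W}^{(0)})\prod_t \mathbf{v}_t$, and each $\mathbf{v}_t$ is precisely an edge $e_{0j}$ of the BG incident to vertex $0$. For the inductive step, I would contract the partial result (call it $\ca{X}^{(k)}$, the contraction of $\ca{X}$ with $\ca{W}^{(0)},\dots,\ca{W}^{(k-1)}$) with the next weight vertex $\ca{W}^{(k)}$. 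Applying Proposition~\ref{pps:tensorprod} again multiplies the variance by $\sigma^2(\ca{W}^{(k)})$ and by the product of the newly contracted dimensions; the key observation is that these newly contracted dimensions are exactly the edges $e_{kj}$ with $j>k$ — the edges of the BG incident to vertex $k$ that have not yet been "used up" by earlier vertices (edges $e_{ik}$ with $i<k$ were already accounted for when $\ca{W}^{(i)}$ was contracted). Collecting the factors across all $n$ steps yields $\sigma^2(\ca{X})\prod_{k=0}^{n-1}\sigma^2(\ca{W}^{(k)})\prod_{k=0}^{n-1}\prod_{j=k+1}^{\tau-1} e_{kj}$, which is the claimed formula once the outer index is renamed from $k$ to $i$.

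A couple of technical points need care. One is that Proposition~\ref{pps:tensorprod} requires the tensor playing the role of "$\ca{X}$" to be i.i.d., but after the first contraction $\ca{X}^{(k)}$ need not be; I would invoke the paper's own justification (the empirical validation referenced in Appendix~\ref{sec:pro-validation}) and proceed as if the i.i.d.\ approximation holds at each stage, since exact independence is not needed for the variance recursion to go through at the level of formal second moments. The other is that the order in which the weight vertices are contracted is immaterial: different contraction orders correspond to different ways of summing the BG edges, but the final product $\prod_{i<j} e_{ij}$ over the relevant index range is order-independent, so the formula is well defined.

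The main obstacle I expect is the bookkeeping in the inductive step — precisely arguing that at stage $k$ the set of "freshly contracted" dimensions is $\{e_{kj} : k<j\le \tau-1\}$ and nothing more, i.e.\ that no edge is double-counted and none is omitted. This is really a statement about how the BG adjacency matrix is constructed from the hypergraph (the "remove dummy tensors, merge repeated edges, delete the output edge" procedure illustrated in Figure~\ref{fig:forward-init}), so the crux is to pin down that construction carefully enough that the edge-to-contraction correspondence is exact; once that is in hand, the variance recursion via Proposition~\ref{pps:tensorprod} is routine.
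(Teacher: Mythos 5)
Your proposal follows essentially the same route as the paper's proof: write $\ca{Y}$ as a chain of pairwise contractions of $\ca{X}$ with the weight vertices, apply Proposition~\ref{pps:tensorprod} at each step (with the same non-i.i.d.\ caveat deferred to the empirical validation), and collect the accumulated contracted dimensions into $\prod_{i<j}e_{ij}$ --- the paper merely peels the chain from the last weight vertex rather than inducting from the first. One bookkeeping slip to fix: when $\ca{W}^{(k)}$ is absorbed into the partial result $\ca{X}^{(k)}$, the freshly contracted dimensions are the edges joining vertex $k$ to the vertices already absorbed (i.e.\ $e_{ik}$ with $i<k$, together with the edge to $\ca{X}$), not $e_{kj}$ with $j>k$, since the latter remain free modes of $\ca{X}^{(k+1)}$ until $\ca{W}^{(j)}$ arrives; as you yourself observe, the union over all steps is the full edge set either way, so the final formula is unaffected.
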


\begin{figure*}[t]
	\centering
	\subfigure[HOdd-5 ($\varphi$=1)  Layer1]{
		\includegraphics[width=0.22\textwidth]{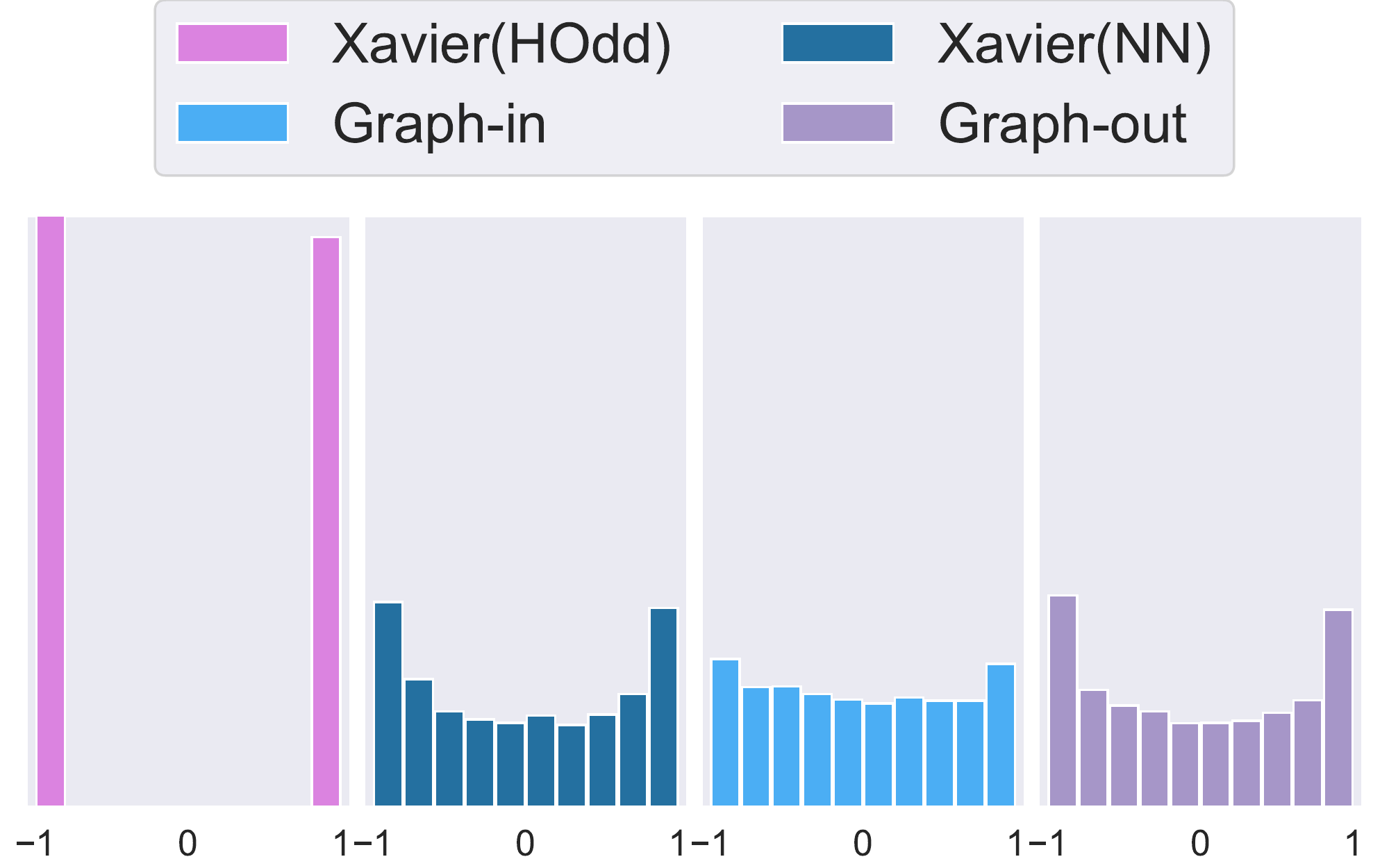}
		\label{fig:hoddc1-layer1}
	}
	\subfigure[HOdd-5 ($\varphi$=1)  Layer4]{
		\includegraphics[width=0.22\textwidth]{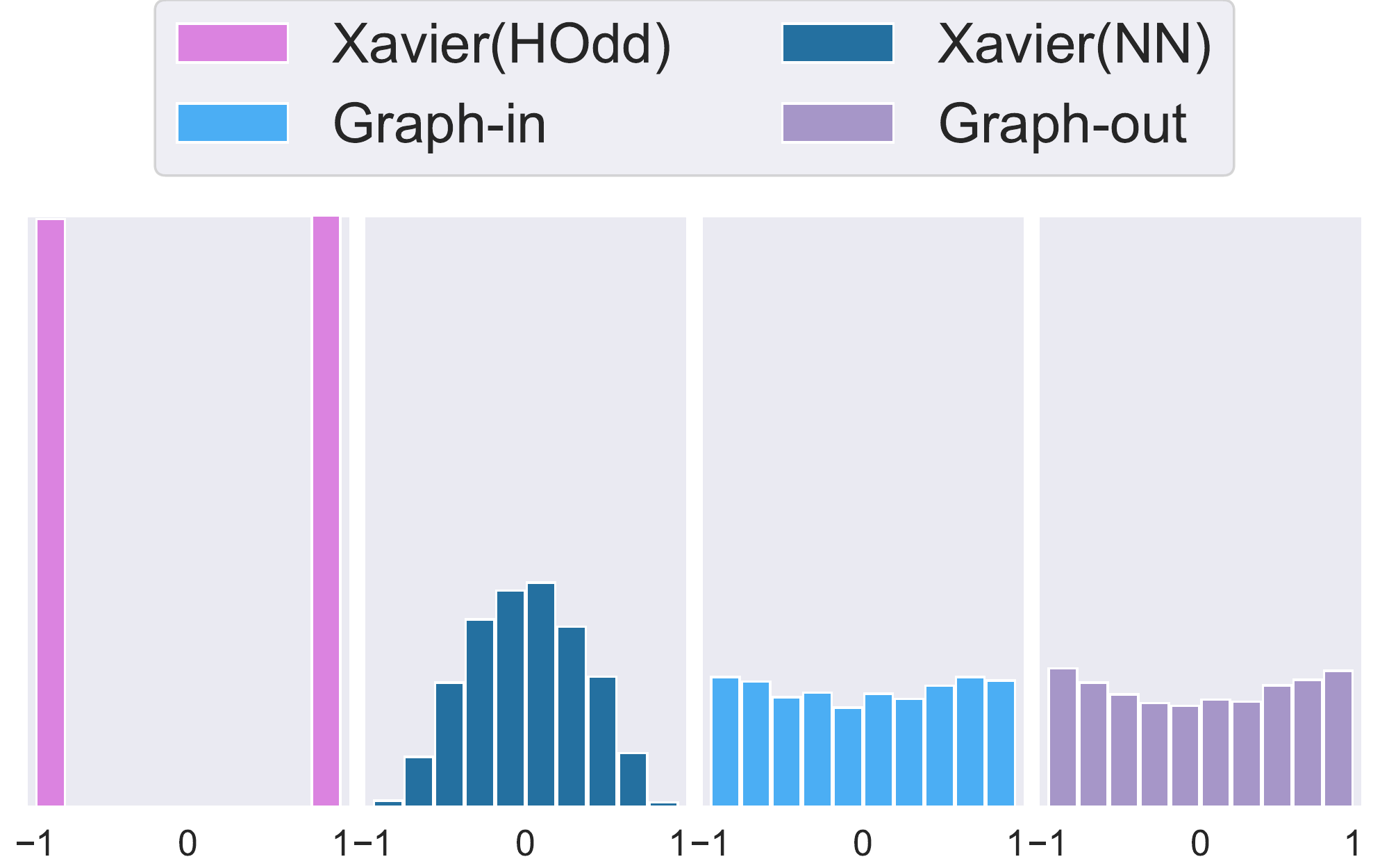}
		\label{fig:hoddc1-layer4}
	}
	\subfigure[HOdd-5 ($\varphi$=4)  Layer1]{
		\includegraphics[width=0.22\textwidth]{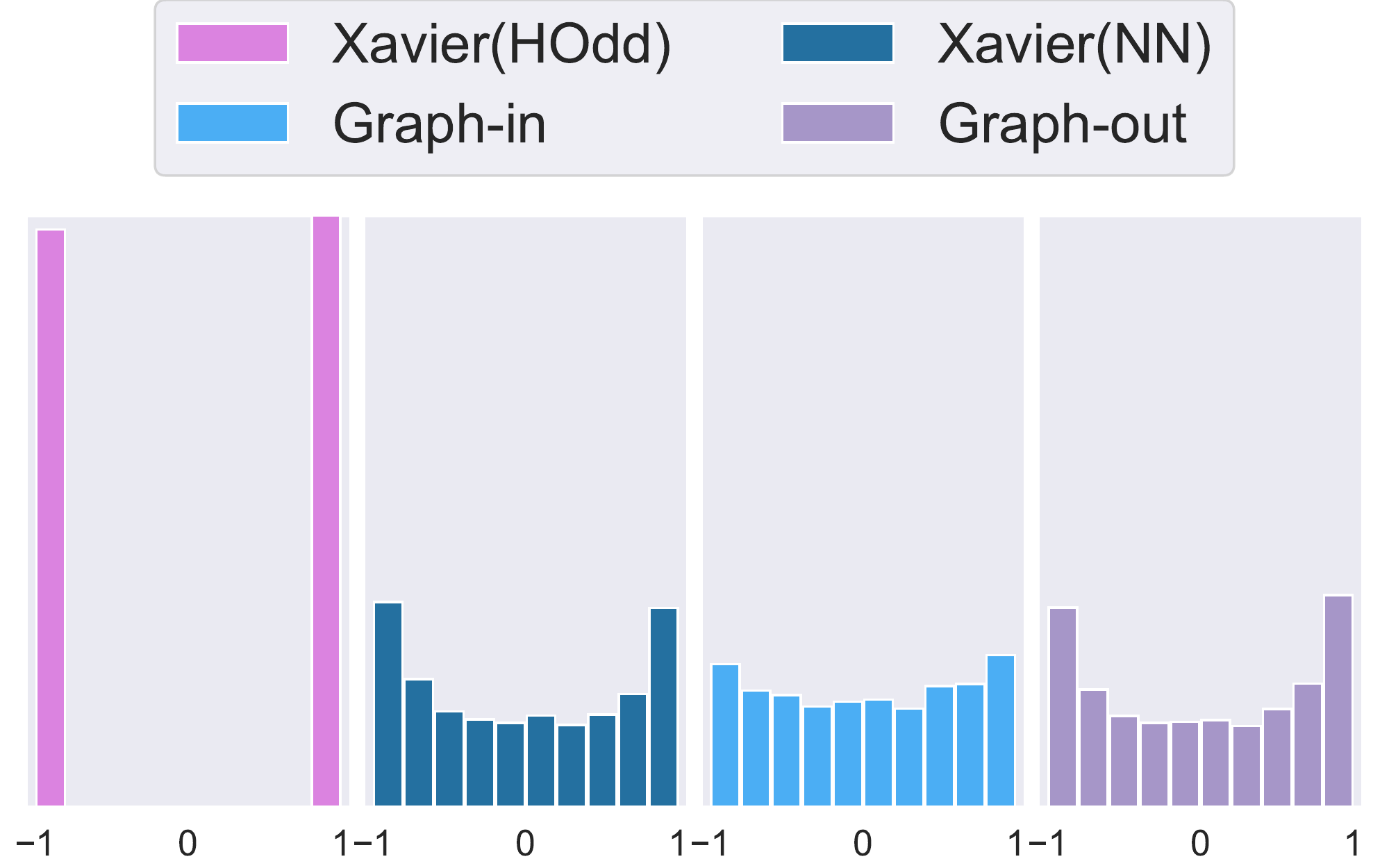}
		\label{fig:hoddc4-layer1}
	}
	\subfigure[HOdd-5 ($\varphi$=4) Layer4]{
		\includegraphics[width=0.22\textwidth]{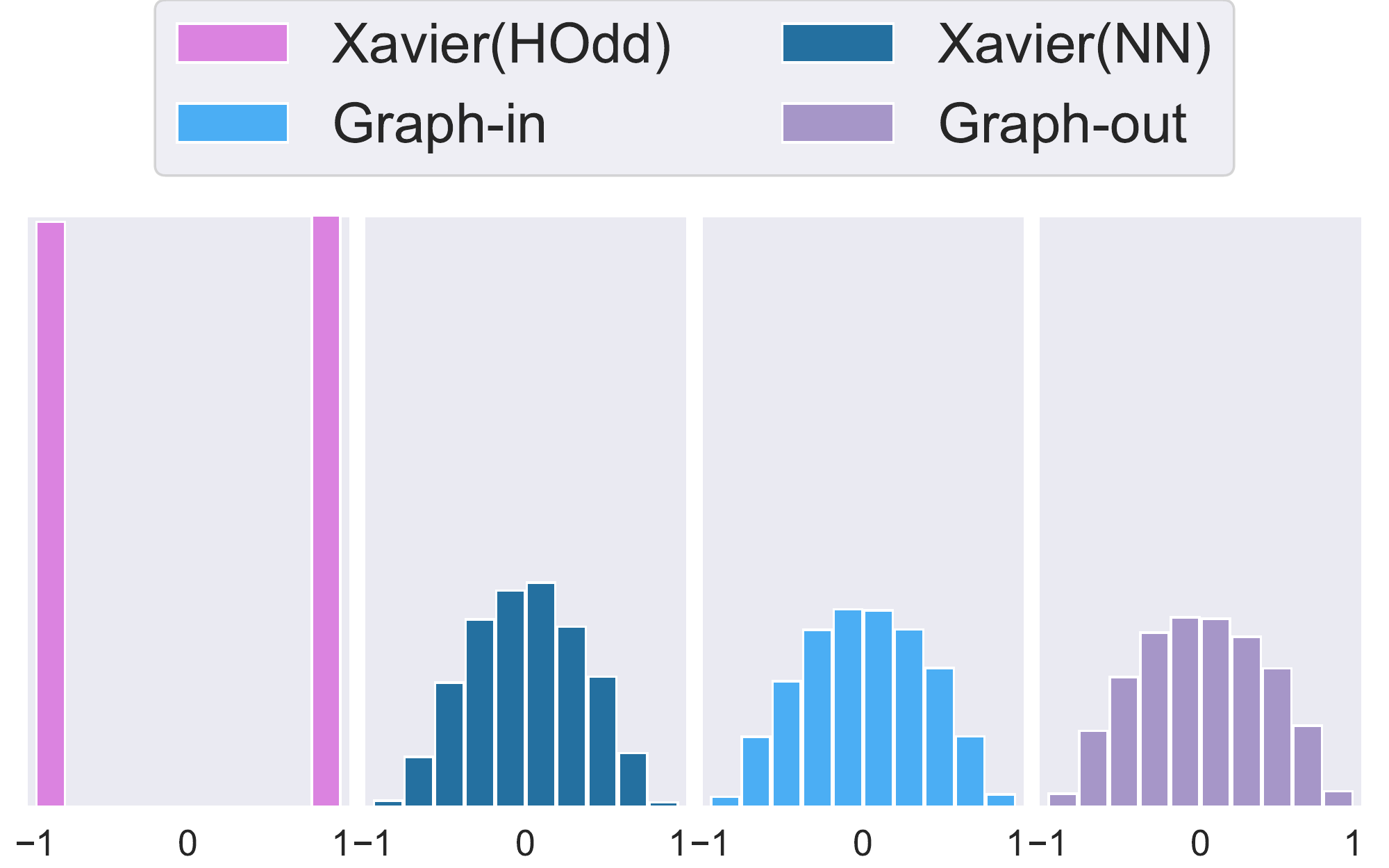}
		\label{fig:hoddc4-layer4}
	}
	
	\subfigure[HOdd-5 ($\varphi$=1)  Loss]{
		\includegraphics[width=0.22\textwidth]{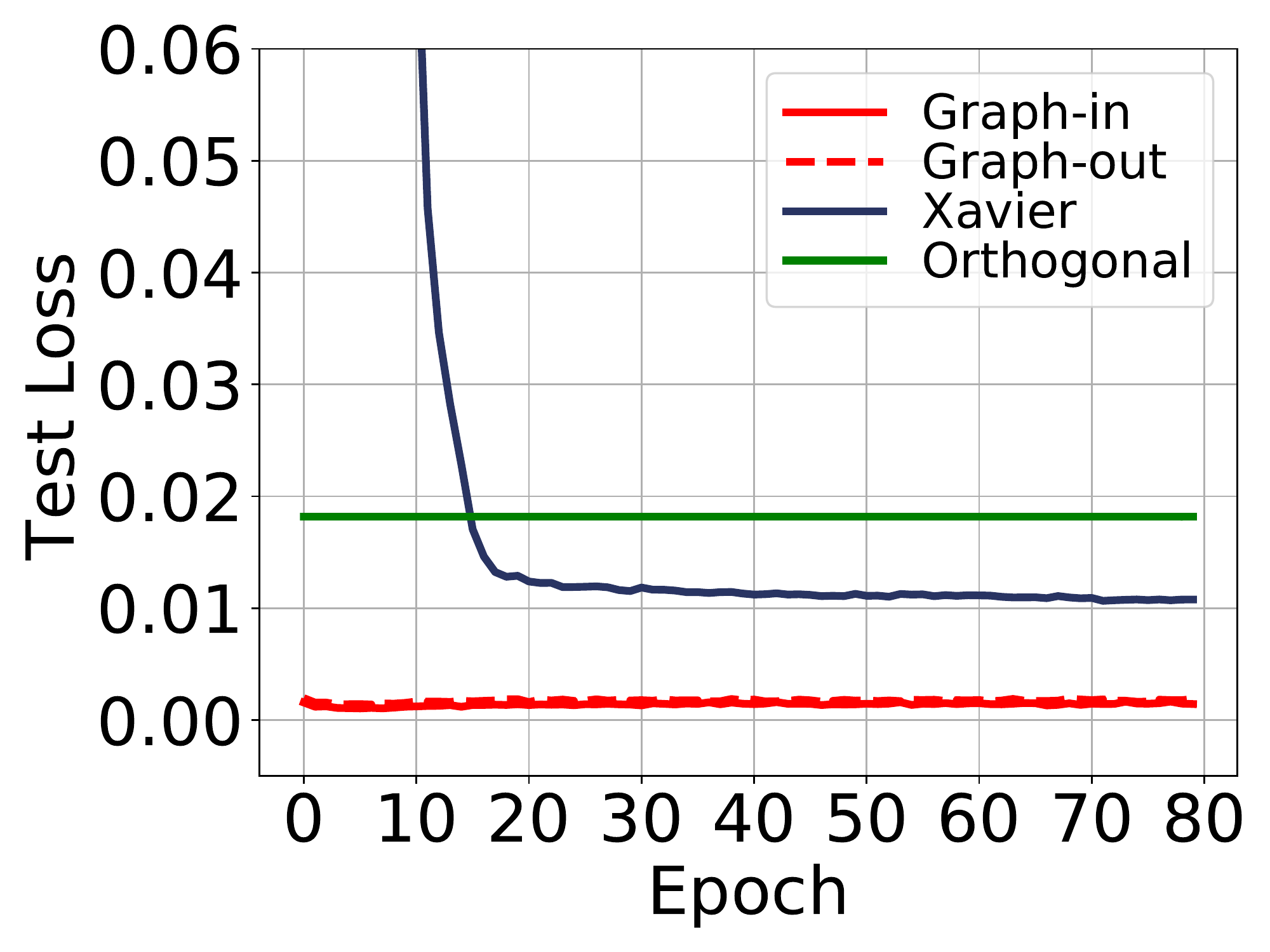}
		\label{fig:hoddc1-loss}
	}
	\subfigure[HOdd-5 ($\varphi$=1) Accuracy]{
		\includegraphics[width=0.22\textwidth]{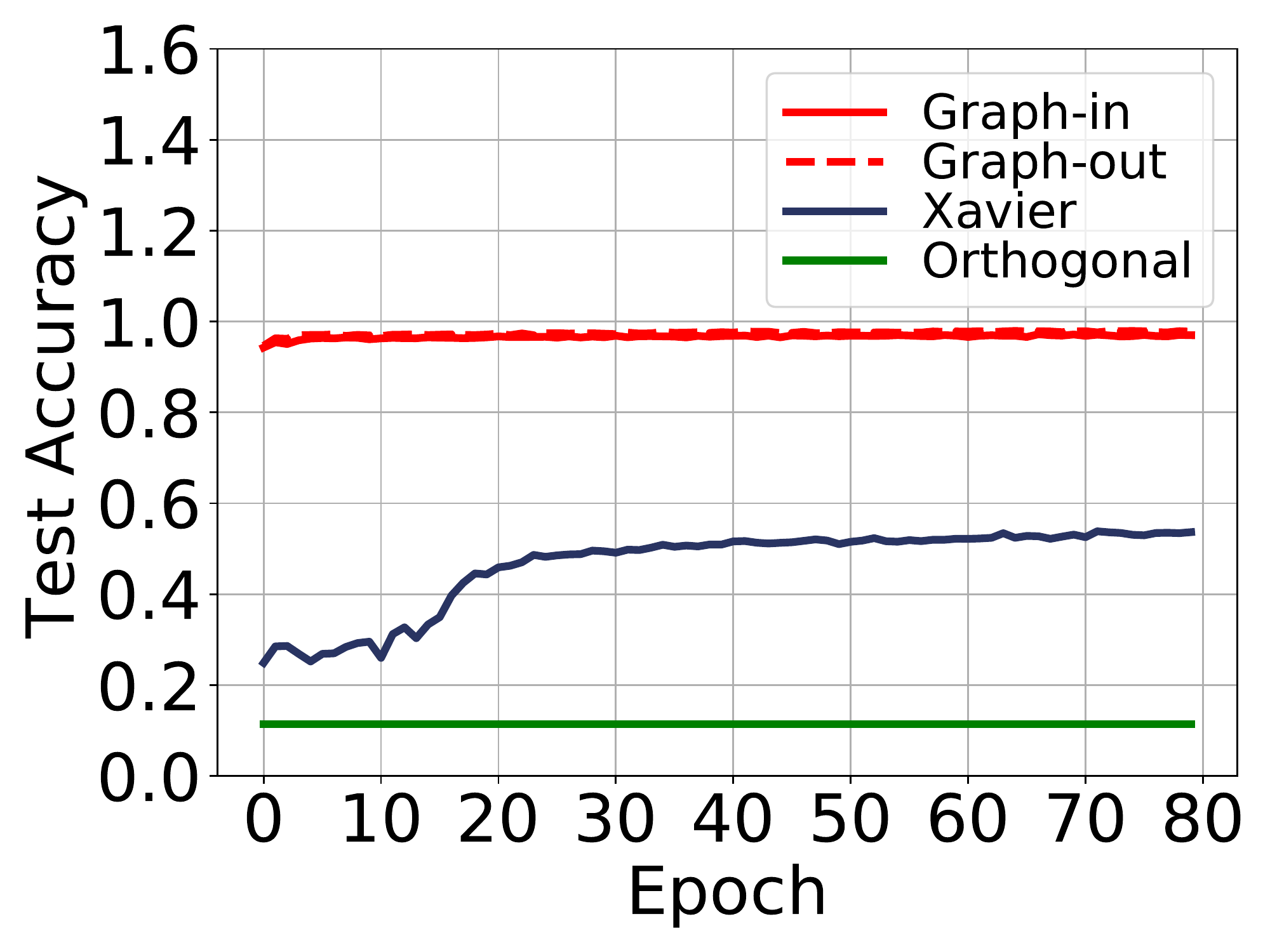}
		\label{fig:hoddc1-acc}
	}
	\subfigure[HOdd-5 ($\varphi$=4) Loss]{
		\includegraphics[width=0.22\textwidth]{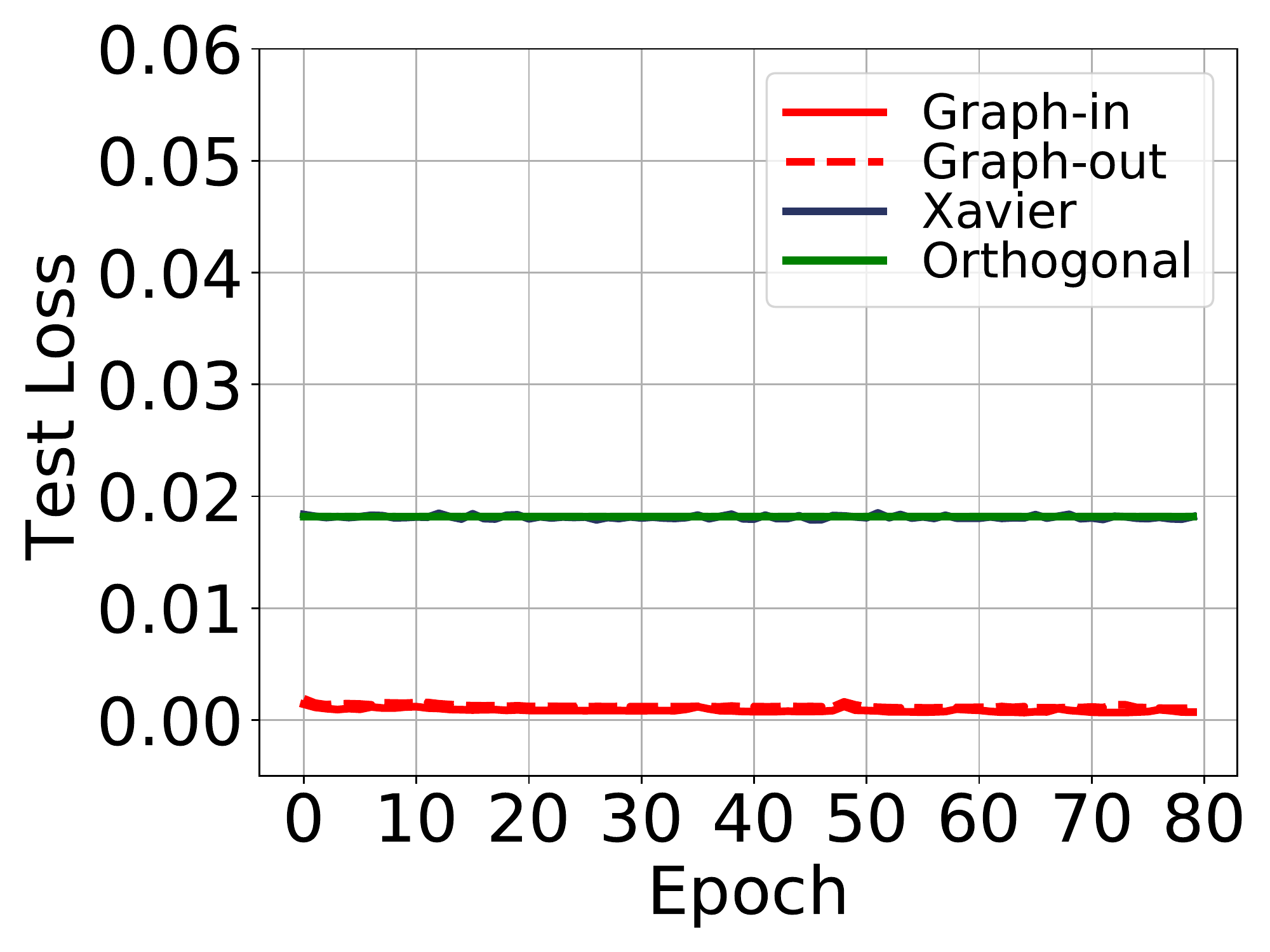}
		\label{fig:hoddc4-testloss}
	}
	\subfigure[HOdd-5 ($\varphi$=4) Accuracy]{
		\includegraphics[width=0.22\textwidth]{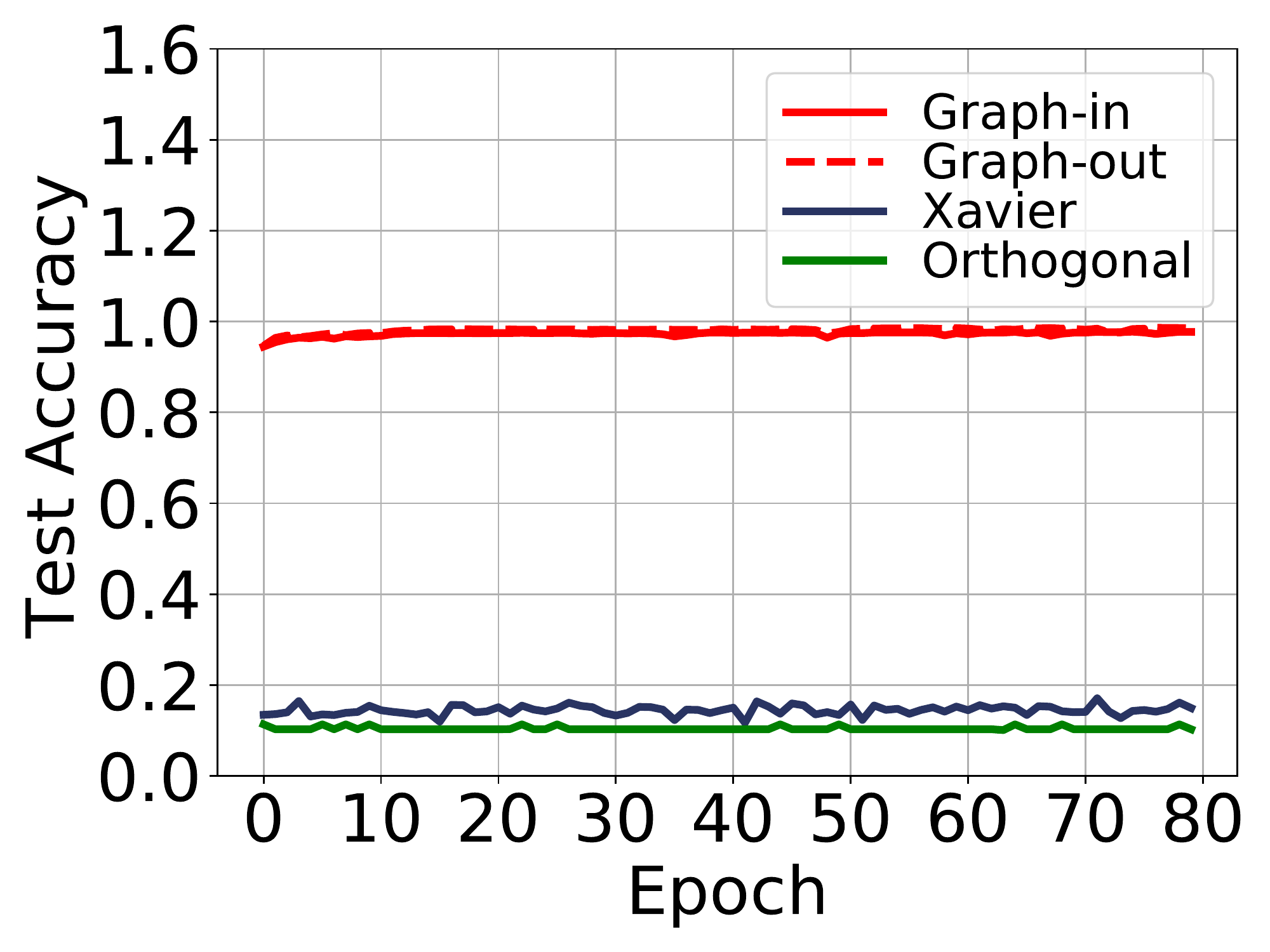}
		\label{fig:hoddc4-testacc}
	}
	\caption{Activation distribution before training and results of the activation propagation analysis. Xavier (NN) and Xavier (HOdd) represent the case of applying Xavier initialization to Linear-5 and HOdd-5 respectively. Graph-in and Graph-out represent the case of applying the proposed initialization to HOdd-5. $\varphi$ means a hyperedge. Xavier (NN) works since maintaining activation in the unsaturated region of activation function tanh, and Graph(-in/-out) also benefits from this. Under $\varphi$=1 and $\varphi$=4, activation of Graph(-in/-out) distributes in the unsaturated region, which indicates that Graph(-in/-out) can fit the sophisticated HOdd format and integrate with a hyperedge. Nevertheless, Xavier (HOdd) suffers from activation explosion in the saturated region and fails to train HOdd-5. Orthogonal initialization cannot train HOdd-5 either. By contrast, Graph(-in/out) successfully trains the model and derives relatively good results.}
	\label{fig:linear5distribution}
\end{figure*}

Next, considering activation function and a hyperedge $\varphi$, variance of final output $\ca{Y}_o$ is ${\sigma^2}(\ca{Y}_o) = \bm{p}_{\bm{a}}\varphi{\sigma^2}(\ca{Y})$ according to Proposition~\ref{pps:tensorsum}, where $\bm{a}$ is an activation map, $\varphi$ means a hyperedge value, and $\bm{p}_{\bm{a}}$ denotes scale caused by activation function. For example, $\bm{p}_{\text{ReLU}}=\frac{1}{2}$ and $\bm{p}_{\text{tanh}}=1$. We set ${\sigma^2}(\ca{Y}_o) = {\sigma^2}(\ca{X})$ to maintain the data-flow variance equal. 
Thus, We can re-formulate Eq.~\eqref{eq:varequal} as 
\begin{align}
    \label{eq:equal-relation} \frac{{\sigma^2}(\ca{X})}{\bm{p}_{\bm{a}}\varphi} = {\sigma^2}(\ca{X})\prod_{k=0}^{n-1}{{\sigma^2}(\ca{W}^{(k)})}\prod_{i=0}^{n-1}\prod_{j=i+1}^{\tau-1}{{e}_{ij}}.
\end{align}
From Eq.~\eqref{eq:equal-relation}, We find that ${\sigma^2}(\ca{Y}_o)$ is highly related to $\varphi$ and edges of BG, and will change exponentially when edge number increases. Notably, Xavier and Kaiming fail since they only consider channel edges and convolutional window size edges, namely, part of edges. An expository example is in Appendix~\ref{sec:variance-example}.
As a result, we can derive
\begin{align}
\label{eq:result}
\prod_{k=0}^{n-1}{\sigma^2}(\ca{W}^{(k)}) = \frac{1}{\bm{p}_{\bm{a}}\varphi\prod_{i=0}^{n-1}\prod_{j=i+1}^{\tau-1}{e_{ij}}}.
\end{align}
If the initialized weight satisfies Eq.~\eqref{eq:result}, then we can attain the same effects as what Xavier and Kaiming achieve, even on multi-vertex tensor graphs. Thus, Eq.~\eqref{eq:result} can serve as a \textbf{unified paradigm} to ensure the effectiveness of weight initialization methods on TCNNs.

\subsection{A Simple Initialization Exemplar}
\label{sec:simple-exemplar}

To ensure that Eq.~\eqref{eq:result} holds, there are plenty of choices to set the variance of weight vertices, which indicates potentially numerous weight initialization schemes. To verify the feasibility of our paradigm, we propose an exemplar choice by setting all the variance of weight vertices the same through
\begin{align}
    \label{eq:avg-set}
    {\sigma^2}(\ca{W}^{(\ast)}) = \frac{1}{\sqrt[n]{\bm{p}_{\bm{a}}\varphi\prod_{i=0}^{n-1}\prod_{j=i+1}^{\tau-1}{e_{ij}}}}.
\end{align}

In this way, we can determine a specific weight initialization method, to which we refer as Graph Initialization. It has two modes, \textbf{Graph-in} and \textbf{Graph-out}, similar to fan-in and fan-out modes of Kaiming initialization.


Graph-in and Graph-out are constructed by applying Eq.~\eqref{eq:avg-set} on a TCNN's $\mathbf{G_f}$ and $\mathbf{G_{bt}}$, respectively. We take derivation of Graph-in for HTK2 convolution as an instance in Figure~\ref{fig:forward-init}. After extracting BG from the HTK2 $\mathbf{G_f}$, we can calculate suitable initial variance for weights of HTK2 convolution, by applying Eq.~\eqref{eq:avg-set} on the BG's adjacency matrix. Graph-out is derived from $\mathbf{G_{bt}}$ exactly the same as Graph-in. We show some Graph Initialization demos in Figure~\ref{fig:super-figures}.


\section{Experiment}
\label{sec:exp}

In this section, we first give an illustrative example on linear layers (i.e., 1$\times1$ convolution) to show the damage of activation amplification.
Then, we use randomly generated tensor formats to show statistic results of our Graph(-in/-out) initialization. Next, by experiments on complex HOdd networks, we verify the adaption of the proposed method in complicated situations.
Finally, via experiments on ImageNet with random networks, we show that our initialization is suitable for arbitrary TCNNs. We compare it with Xavier or Kaiming when the activation function is tanh or ReLU, respectively.
We put details of all experiments in Appendix~\ref{sec:exp-detail}, including the batch size, the learning rate, the network architectures and the training machine.

\begin{figure*}[t]
	\centering
	\subfigure[Sam. 1]{
	\includegraphics[width=0.105\textwidth]{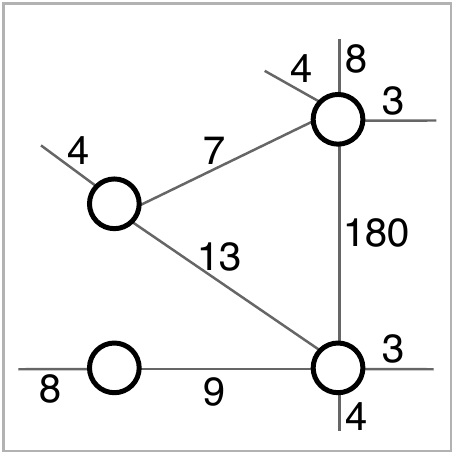}
	\label{fig:rand-sample1}
	}
	\subfigure[Sam. 2]{
	\includegraphics[width=0.105\textwidth]{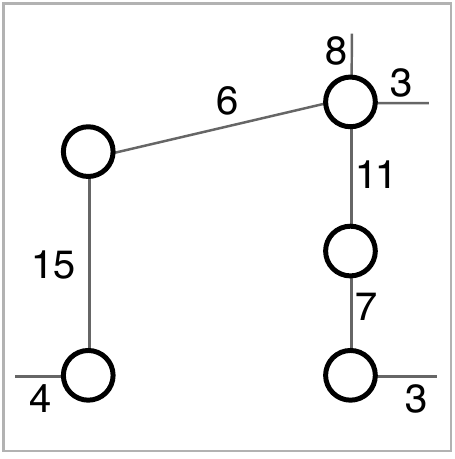}
	\label{fig:rand-sample2}
	}
	\subfigure[Sam. 3]{
	\includegraphics[width=0.105\textwidth]{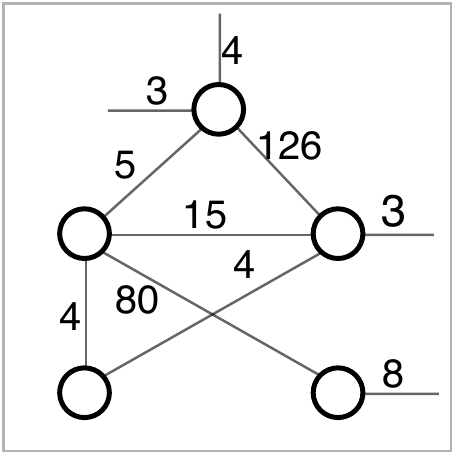}
	\label{fig:rand-sample3}
	}
	\subfigure[Sam. 4]{
	\includegraphics[width=0.105\textwidth]{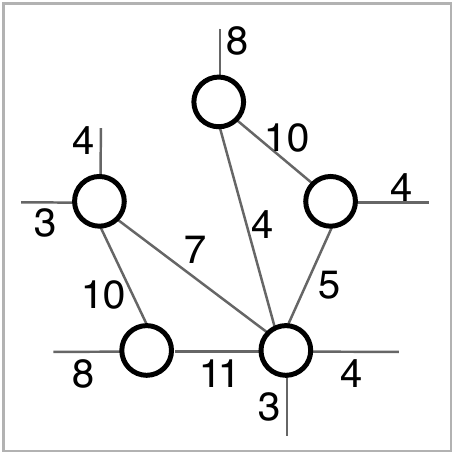}
	\label{fig:rand-sample4}
	}
	\subfigure[Sam. 5]{
	\includegraphics[width=0.105\textwidth]{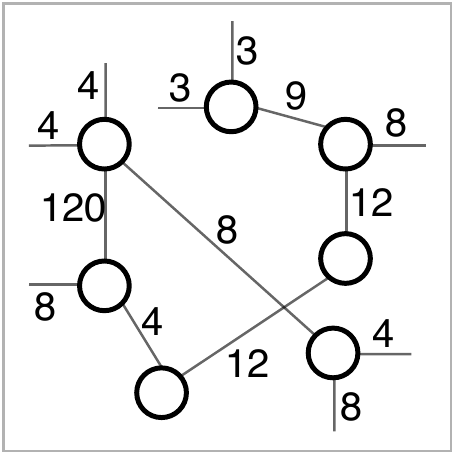}
	\label{fig:rand-sample5}
	}
	\subfigure[Sam. 6]{
	\includegraphics[width=0.105\textwidth]{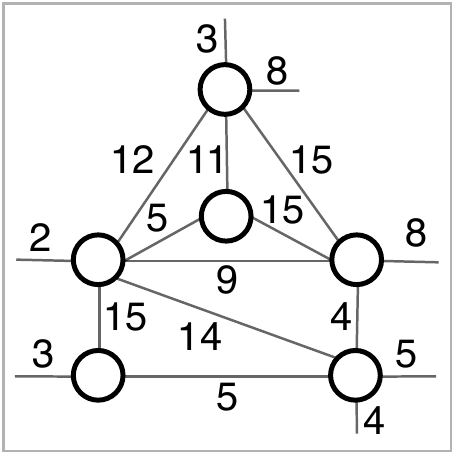}
	\label{fig:rand-sample6}
	}
	\subfigure[Sam. 7]{
	\includegraphics[width=0.105\textwidth]{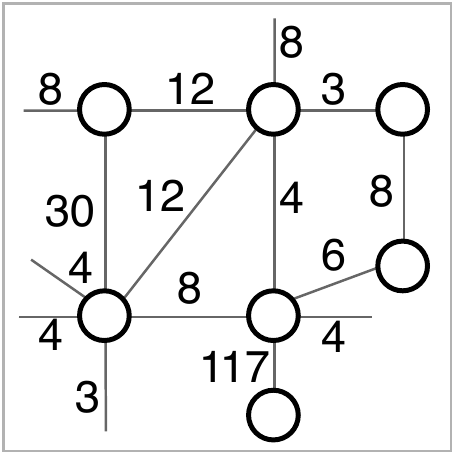}
	\label{fig:rand-sample7}
	}
	\subfigure[Sam. 8]{
	\includegraphics[width=0.105\textwidth]{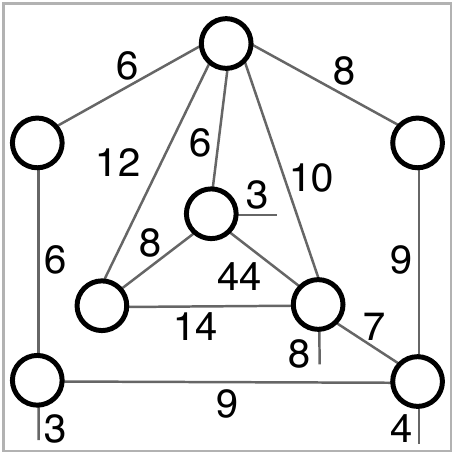}
	\label{fig:rand-sample8}
	}
	\caption{Random Tensor Formats. For more clear observation, we draw
\subref{fig:rand-sample1}~-~\subref{fig:rand-sample8}, in total eight random sub-structure TCNN samples without hyperedges ("Sam." is the abbreviation of "sample"). Notably, tensor nodes are largely different in size. For example, in \subref{fig:rand-sample7}, a large node can be of size $117\times8\times4\times6\times4=89856$ and a small one is $3\times8=24$, which indicates the random tensor formats are sophisticated sufficiently.}
	\label{fig:random-layer}
\end{figure*}

\begin{figure}[t]
	\centering
	\subfigure[MNIST Loss]{
		\includegraphics[width=0.22\textwidth]{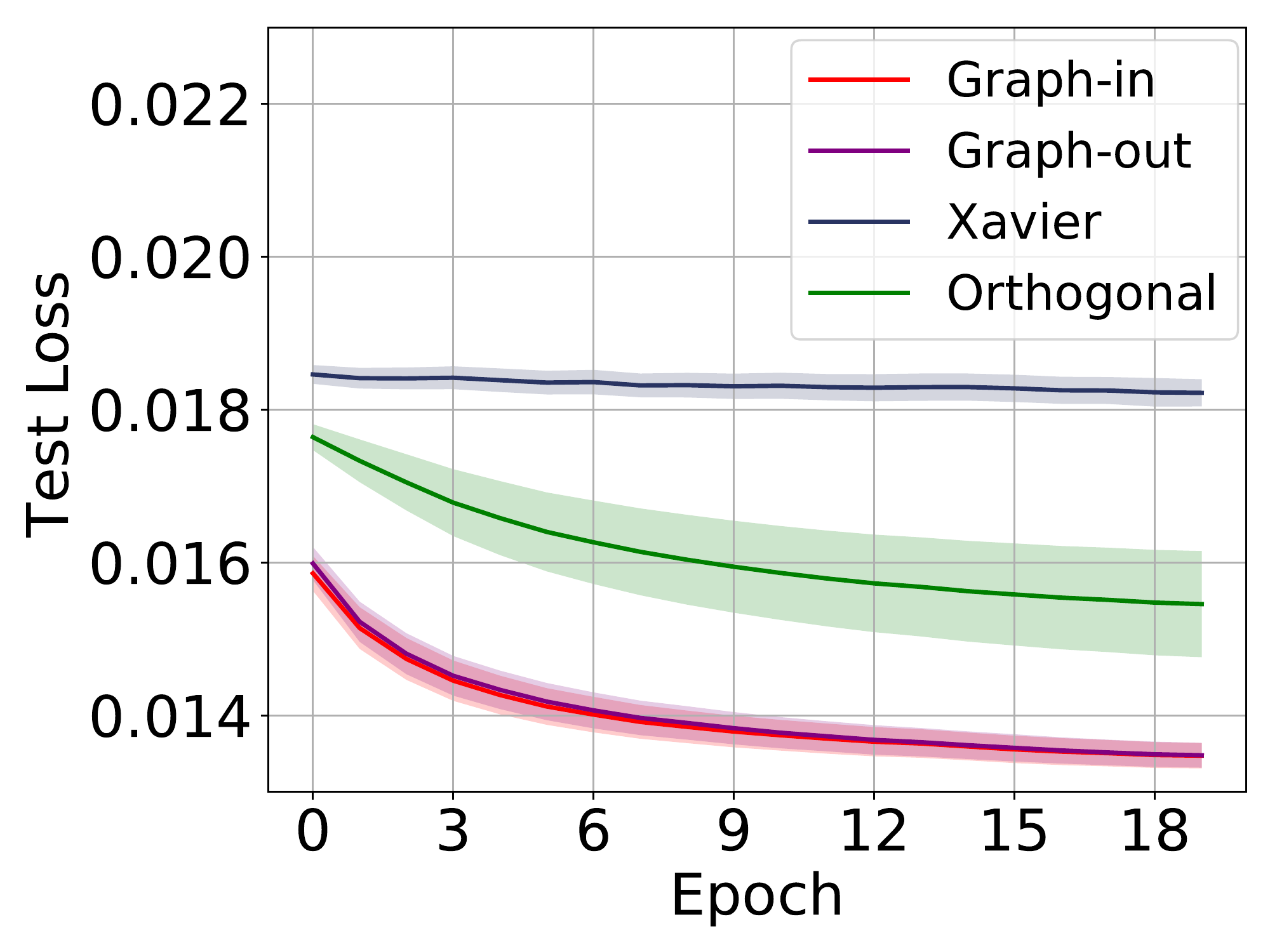}
		\label{fig:conv4trainingloss}
	}
	\subfigure[MNIST Accuracy]{
		\includegraphics[width=0.22\textwidth]{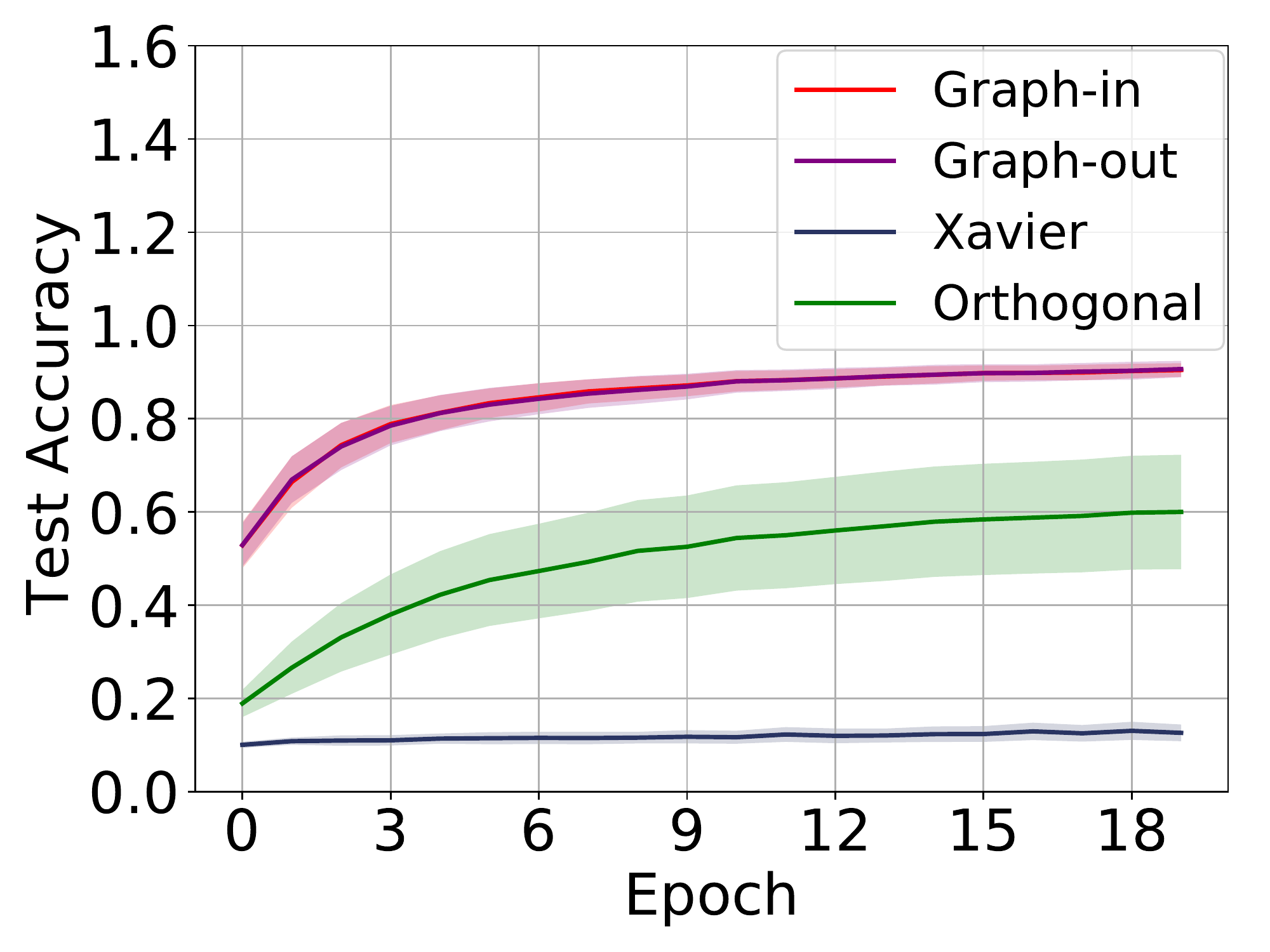}
		\label{fig:conv4trainingacc}
	}
	\caption{Results of the random tensor format experiment. \subref{fig:conv4trainingloss} and \subref{fig:conv4trainingacc} draw 150 round results on MNIST. The results show that our method works consistently well.}
	\label{fig:conv4training}
\end{figure}

\begin{figure}[t]
	\centering
	\subfigure[Cifar10 Loss]{
		\includegraphics[width=0.22\textwidth]{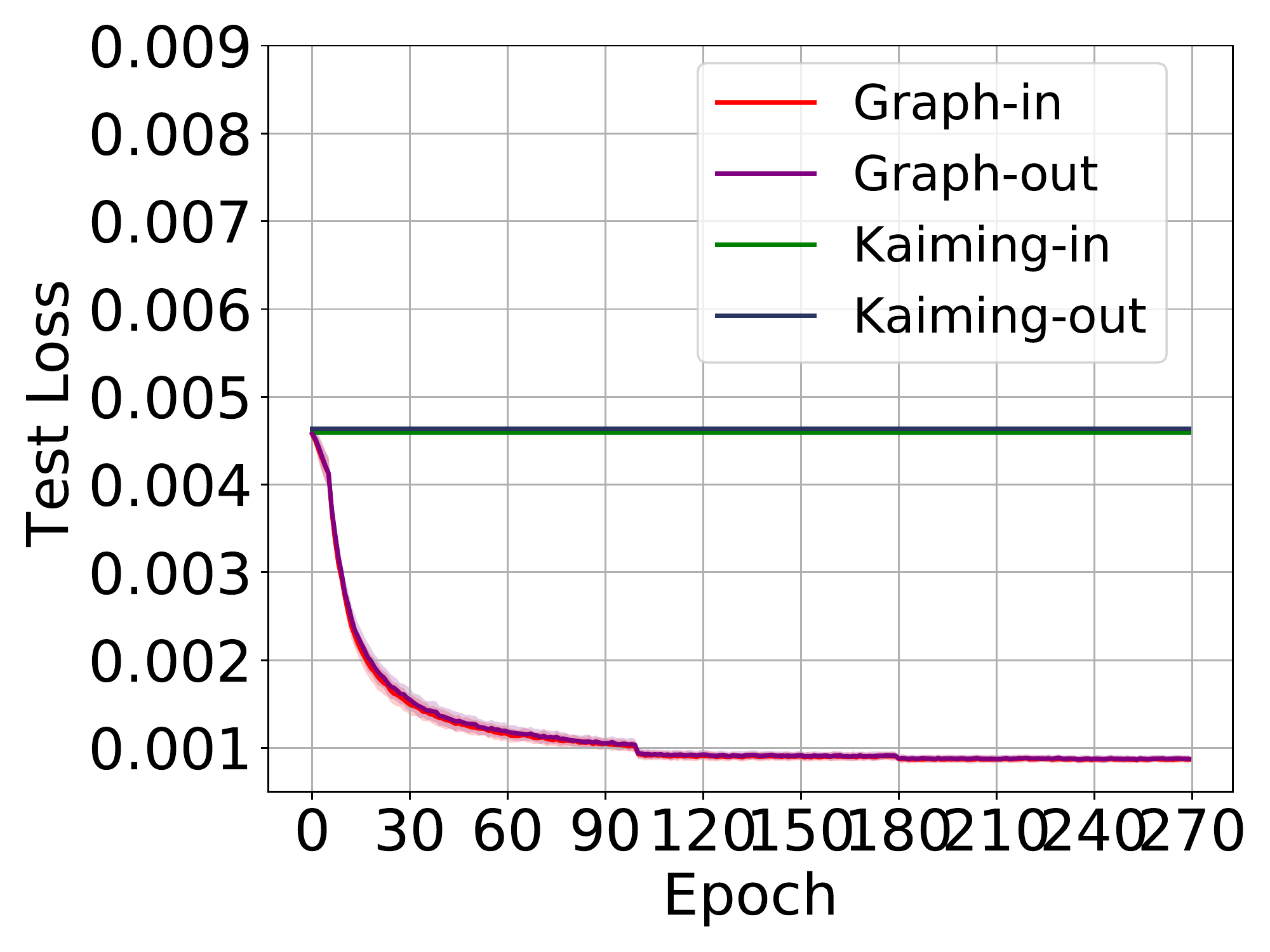}
		\label{fig:cifar10-trainingloss}
	}
	\subfigure[Cifar10 Accuracy]{
		\includegraphics[width=0.22\textwidth]{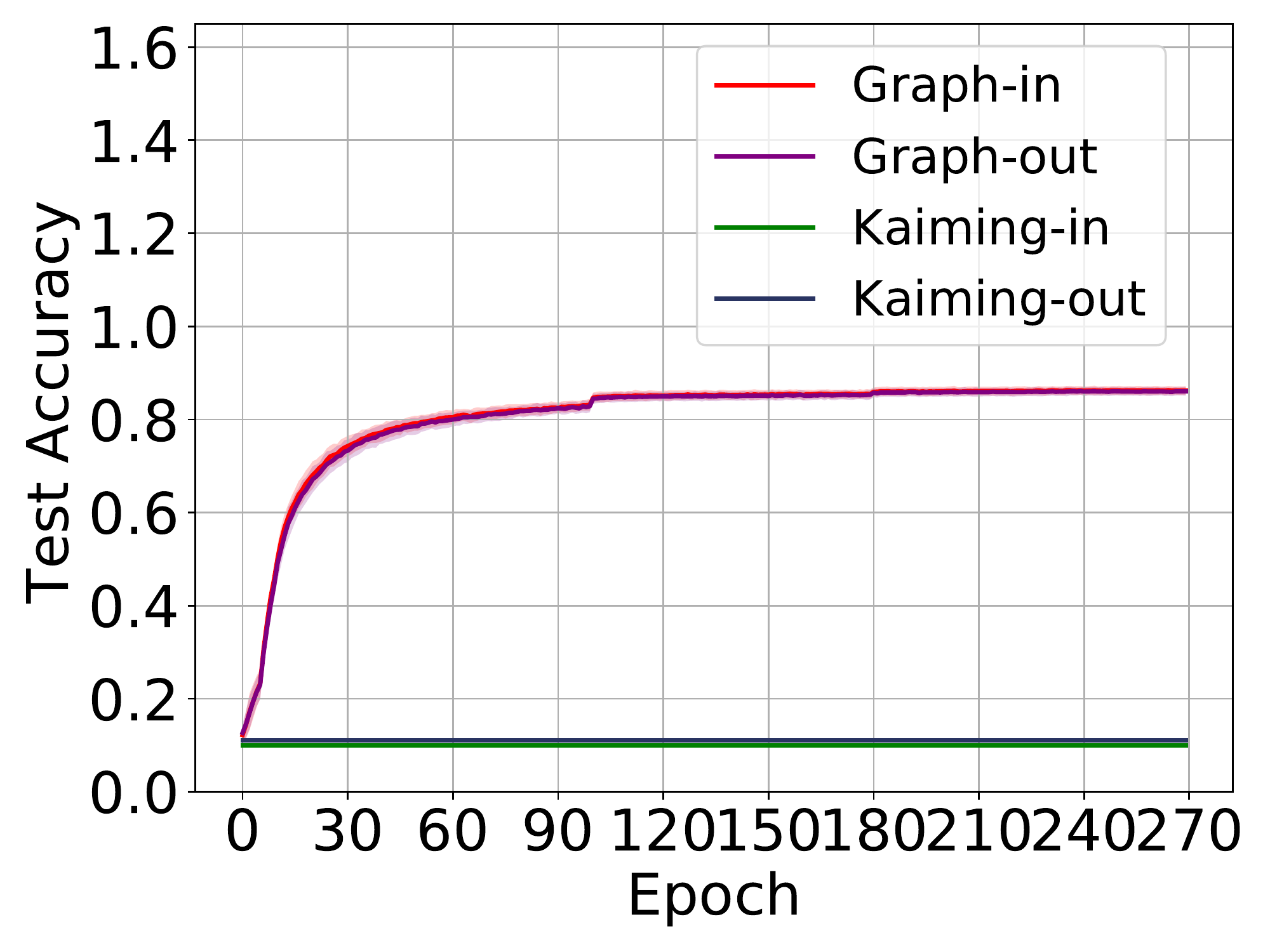}
		\label{fig:cifar10-trainingacc}
	}
	\caption{Results of the Cifar10 experiment.  \subref{fig:cifar10-trainingloss} and \subref{fig:cifar10-trainingacc} draw 30 round results on Cifar10. The results show that our method is robust enough to initialize 270 tensor formats.}
	\label{fig:allconvtraining}
\end{figure}

\subsection{Evaluation on MNIST}
\subsubsection{Activation Propagation Analysis}
\label{sec:converge}
We conduct this experiment on MNIST to show that activation amplification in propagation is the main factor to cause unstable training. In this experiment, the network consists of 5 linear layers (called Linear-5) or 5 HOdd based linear layers (called HOdd-5). 
Each layer has 500 hidden units. We use Xavier initialization for comparison since the activation function here is a hyperbolic tangent function. For completeness, we also include Orthogonal initialization from PYTORCH~\citep{DBLP:conf/nips/PaszkeGMLBCKLGA19} for comparison. The training process is optimized by Adam with the learning rate 1e-4. Xavier(NN) and Xavier(HOdd) represent the case of applying Xavier initialization to Linear-5 and HOdd-5, respectively. Graph-in and Graph-out represent the cases of applying the graphical initialization to HOdd-5.

Figure~\ref{fig:linear5distribution} shows distribution of activations when applying Graph Initialization and Xavier initialization. The activation distributions of both Graph-in and Graph-out are similar to these of Xavier(NN). Specifically, the activations of these three cases are mostly distributed in the unsaturated area (-1, 1) of the activation function tanh, which
benefits the training. However, Xavier(HOdd) encounters explosion and the activations are distributed mostly in the saturated region around -1 and 1, which shows the limitation of Xavier when applied to HOdd-5. As shown from Figure~\ref{fig:hoddc1-loss} to Figure~\ref{fig:hoddc4-testacc}, Graph(-in/-out) initialization leads to convergence almost at the beginning of the training, while Xavier initialization costs several epochs to converge, and Orthogonal initialization fails to train the network. In addition, when the hyperedge $\varphi$ increases from 1 to 4, Xavier loses the ability to train HOdd-5, indicating that considering the hyperedge is necessary for initialization.

\begin{table*}[t]
\centering
\caption{Top-1 accuracy on Cifar10 and Tiny-ImageNet.
Rank-Edge Number means the least number of edges only connected to weight vertices in layers. Random-$\ast$ denotes randomly generating models. More results are in Appendix~\ref{tbl:cifar-tiny-full}.}
\label{tbl:cifar-tiny}
\scalebox{0.9}{
\begin{tabular}{c|c|ccc|ccc}
\hline
        &                                                    & \multicolumn{3}{c|}{Cifar10}                                                        & \multicolumn{3}{c}{Tiny-ImageNet}                                                  \\ \hline
        & \begin{tabular}[c]{@{}c@{}}Rank-Edge\\ Number\end{tabular} & \begin{tabular}[c]{@{}c@{}}Kaiming\\ (-in/-out)\end{tabular} & Graph-in & Graph-out & \begin{tabular}[c]{@{}c@{}}Kaiming\\ (-in/-out)\end{tabular} & Graph-in & Graph-out \\ \hline\hline
Low-Rank & 1                                                  & 0.1                                                          & 0.8141   & 0.8163    & 0.307/0.2776                                                          & 0.3153   & 0.3076    \\
Tensor Ring      & 4                                                  & 0.1                                                          & 0.8308   & 0.8311    & 0.005                                                          & 0.2494   & 0.249    \\
\hline
HTK2($\varphi$=4)     & 2                                                  & 0.1                                                          & 0.8638   & 0.8705    & 0.005                                                          & 0.4014   & 0.4126    \\
HOdd($\varphi$=4)     & 14                                                  & 0.1                                                          & 0.8826   & 0.8806     & 0.005         & 0.5048   & 0.5045     \\ \hline
Random-1      & -                                                  & 0.1                                                          & 0.8538  & 0.8483     & 0.005                                                          & 0.4965   & 0.5015     \\
Random-2      & -                                                  & 0.1                                                          & 0.8801   & 0.876     & 0.005                                                          & 0.5379   & 0.5356     \\
Random-3      & -                                                  & 0.1                                                          & 0.8648   & 0.863     & 0.005                                                          & 0.5475   & 0.5403     \\
Random-4      & -                                                  & 0.1                                                          & 0.8789   & 0.8816     & 0.005                                                          & 0.5295   & 0.5306     \\ 
Random-5      & -                                                  & 0.1                                                          & 0.8622   & 0.8644     & 0.005                                                          & 0.5444   & 0.5428     \\  
\hline
\end{tabular}
}
\end{table*}

\begin{figure*}[t]
	\centering
	\subfigure[HRand-RN-50]{
		\includegraphics[width=0.23\textwidth]{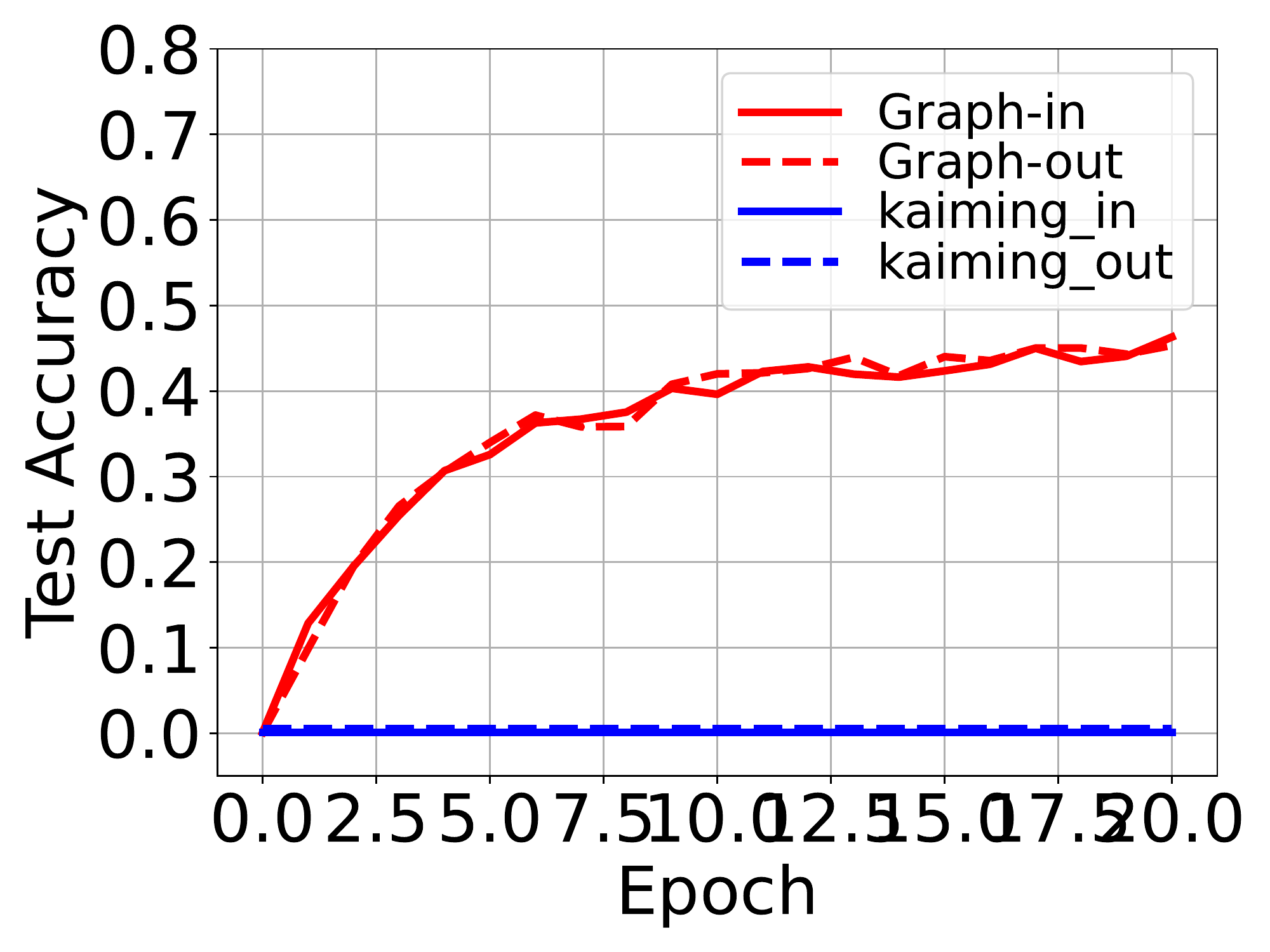}
	} 
	\subfigure[HRand-RN-101]{
		\includegraphics[width=0.23\textwidth]{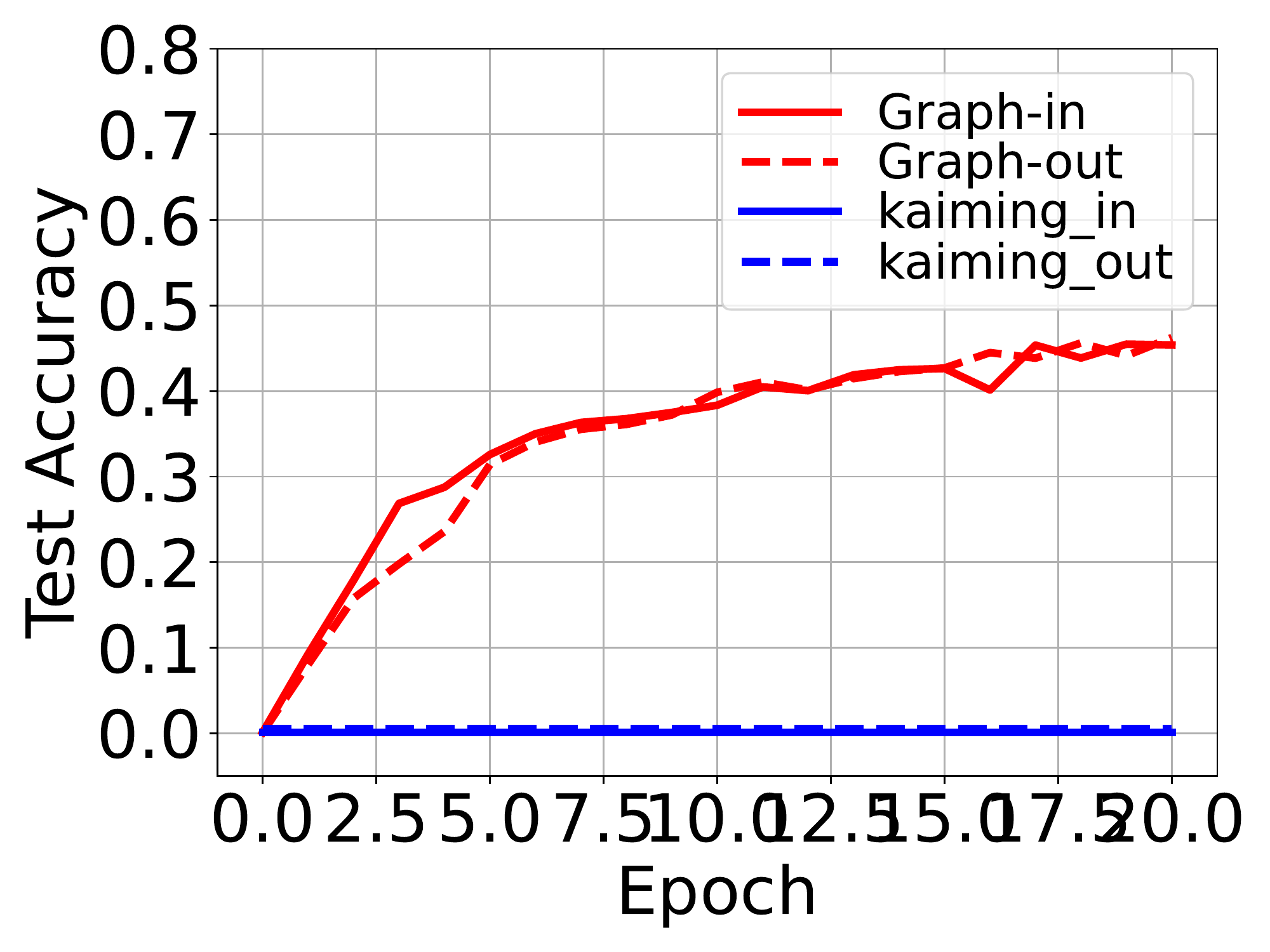}
	}
	\subfigure[HRand-gMLP-S16]{
		\includegraphics[width=0.23\textwidth]{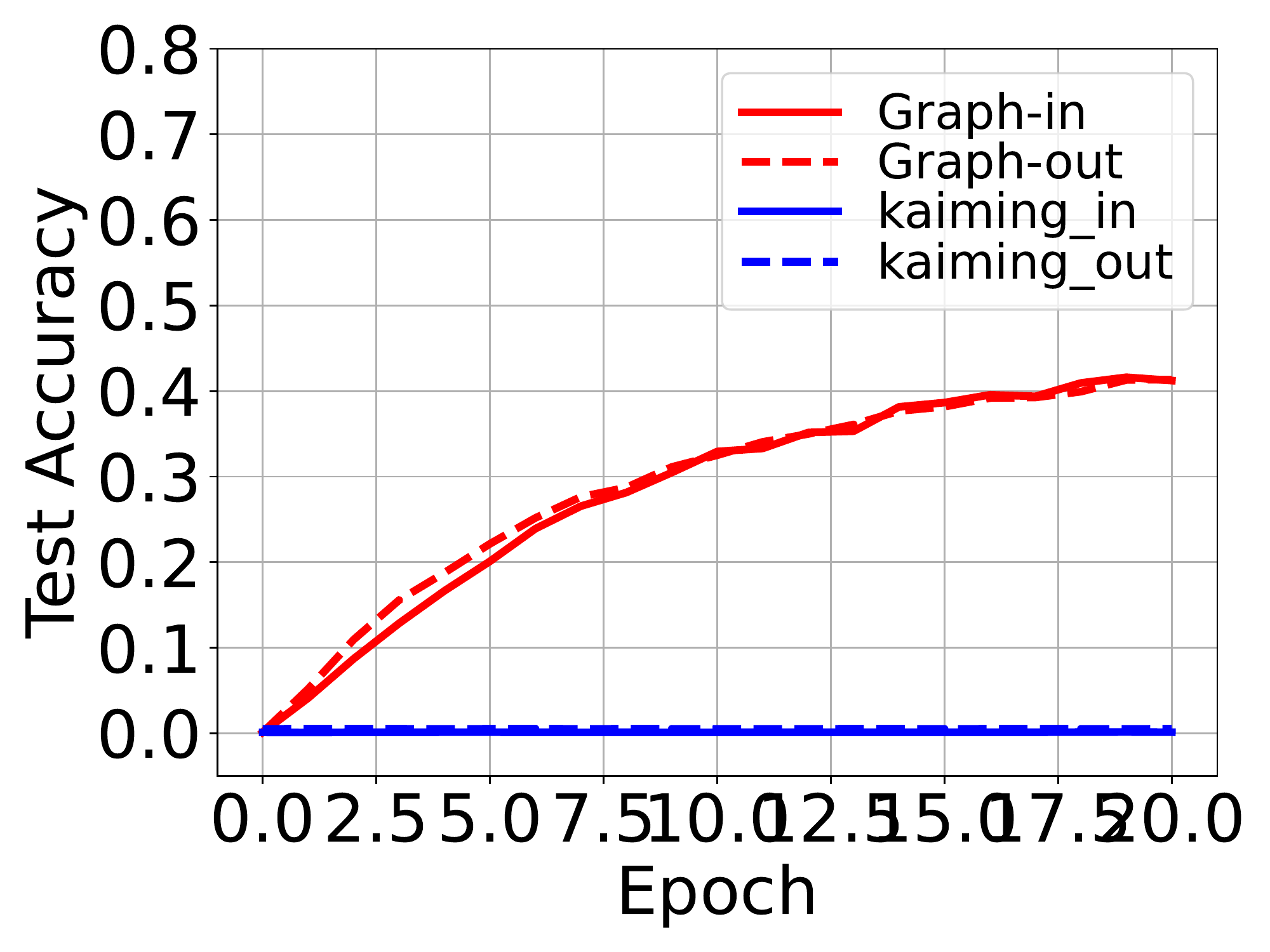}
	} 
	\subfigure[HRand-MLP-Mixer-B16]{
		\includegraphics[width=0.23\textwidth]{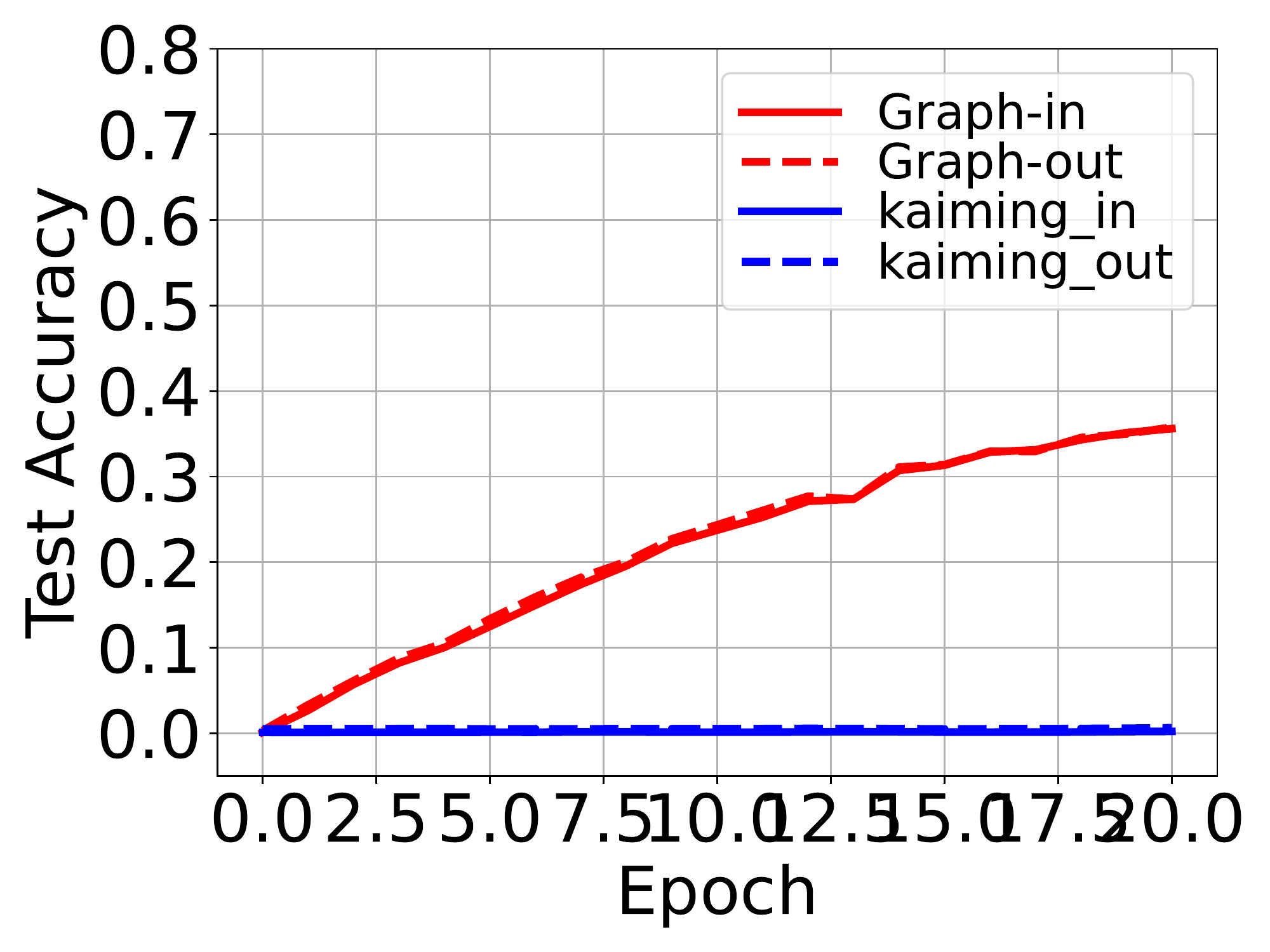}
	}
	\caption{Top-1 accuracy plots on ImageNet. RN is short for ResNet.}
	\label{fig:imagenet101}
\end{figure*}

\subsubsection{Random Tensor Formats}
\label{sec:exrandom}

As a unified paradigm, our graphical initialization has the ability to initialize a variety of TCNNs successfully.
In this section, we present the random layer experiment on MNIST to evaluate the generalization ability.
To be specific, we design a TCNN consisting of four convolutional layers (referred to as Conv-4) and structure of each layer is generated randomly. 
Examples of the random layer are shown in Figure~\ref{fig:random-layer}. Obviously, these examples are quite different in vertex numbers and edge values. 
In this experiment, we conduct 150 rounds of sub-experiments repeatedly and generate random kernels in every sub-experiment (up to 600 different kernel structures if not considering the circumstance of the same shapes). The activation function here is still a hyperbolic tangent function. Thus we compare four kinds of initialization: Graph(-in/-out), Xavier and Orthogonal. The batch size is 128. The optimizer is Adam with the learning rate 1e-4.

Results are shown in Figure~\ref{fig:conv4training}. As for Xavier initialization, the network is hard to train and the test accuracy remains low. When using Orthogonal initialization, the network becomes trainable and obtains mediocre test results. However, our initialization Graph(-in/out) achieves the highest accuracy and the smallest loss with a faster speed, which shows great advantage when applied to TCNNs. Meanwhile, due to the good performance in these random-structured networks, we believe that the proposed initialization methods will be widely applicable to arbitrary TCNNs.

\subsection{Evaluation on Real-world Datasets}

\subsubsection{Evaluation on Cifar10}
In this experiment, we construct experiments on Cifar10.
Following \citet{DBLP:conf/iclr/ChangFL20}, to clearly showing the importance of weight initialization, we adopt the All Convolutional Net (All-Conv)~\citep{DBLP:journals/corr/SpringenbergDBR14} (containing 9 convolutional layers without Batch Normalization~\citep{DBLP:conf/icml/IoffeS15}) with the SGD optimizer with the learning rate 5e-3, while dropping the normalization tricks and the Adam optimization. 
We set two cases: (i) training tensorial All-Conv with random tensor formats for 30 round, totally $30\times 9=270$ tensor formats; and (ii) training tensorial All-Conv with common tensor formats (e.g., Tensor Ring and HOdd) for 1 round.

Results of Case~(i) are shown in Figure~\ref{fig:allconvtraining},  Kaiming(-in/-out) is statistically hard to initialize a random tensorial All-Conv net. By contrast, Graph(-in/-out) performs a flexible adaption in a wide range of tensor formats. We show results of Case~(ii) in Table~\ref{tbl:cifar-tiny}. Graph(-in/-out) works fine from the simplest Low-Rank convolution to the most complex HOdd convolution, which indicates good applicability. However, Kaiming(-in/-out) still fails to initialize a common TCNN, even the Low-Rank convolution. Some comparisons with more common tensor formats (e.g., CP and TT) show the same results as Case~(ii) in Appendix~\ref{sec:formats}.

In this paper, we focus on a unified method that adapts arbitrary TCNNs. Therefore, a comparison with ad-hoc methods are not the point. For completeness, we put such a comparison with the Cifar10 experiment setting in Appendix~\ref{sec:adhoc}, in which our method derives comparably good results.

\subsubsection{Evaluation on Tiny-ImageNet}

Tiny-ImageNet contains a subset of ImageNet's images and 
is a challenging large-scale dataset. As Kaiming initialization is often applied to ResNet, we also evaluate our initialization for tensorial ResNet-50 on Tiny-ImageNet. The optimizer is SGD with the learning rate 1e-1.

As shown in Table~\ref{tbl:cifar-tiny}, 
Kaiming initialization almost fails to train
all the listed models except the low-rank one, which can be interpreted by Eq.~\eqref{eq:varequal}, i.e., variance of the output is sensitive to the number of edges.
Kaiming-in seems to work for the low-rank ResNet, probably because the Low-rank convolution has much fewer edges than other TCNNs and is very close to a vanilla CNN. Also, batch normalization is used to aid the training, which is not always applicable in memory-constrained scenarios. Besides Kaiming-out still cannot work even with batch normalization.
Based on the above observations, we can see that although Kaiming initialization may work for some extremely simple TCNNs, it fails for complicated TCNNs. By contrast, our method can fit various TCNNs, including randomly generated architectures.

\subsubsection{Evaluation on ImageNet}

In this section, we employ ResNet~\citep{DBLP:conf/cvpr/HeZRS16}, and two recent models gMLP~\citep{DBLP:journals/corr/abs-2105-08050} and MLP-Mixer~\citep{DBLP:journals/corr/abs-2105-01601} for validation of our initialization. Tensorial layers are all random layers (termed as HRand). We adopt these models' original settings that tensorial ResNet uses SGD, and tensorial gMLP/MLP-Mixer uses Adam.

As shown in Figure~\ref{fig:imagenet101}, results are highly consistent, namely, Graph(-in/-out) performs well in all three nets (i.e., ResNet, gMLP and MLP-Mixer), while Kaiming fails in all the situations. This phenomenon is reasonable since that data-flow variance will change exponentially when edge number increases. However, Kaiming ignores the inner production in tensor formats, leading to failure. More similar results are given in Appendix~\ref{sec:detail-imagenet}. Notably, Graph(-in/-out) always derive similar results, which is hard to be explained as claimed in ~\citet{DBLP:conf/iclr/ChangFL20}. In practice, the two modes can be used optionally.

\section{Conclusion}
\label{sec:conlusion}
In this paper, we present Reproducing Transformation to denote backward process as a convolution, and then derive the unified paradigm to help stabilize arbitrary TCNNs. Based on this variance-control principle, the proposed graphical initialization method  can avoid data-flow explosion, and have shown the ability to train diverse TCNNs successfully.
In the future, we plan to explore the application of our method to Tensorial Recurrent Neural Networks.

\section*{Acknowledgments}
This work was partially supported by the National Key Research and Development Program of China (No. 2018AAA0100204), and a key  program of fundamental research from Shenzhen Science and Technology Innovation Commission (No. JCYJ20200109113403826).


\bibliography{ref}
\bibliographystyle{icml2022}

\newpage
\appendix
\onecolumn

\section{Proof of Theorem~\ref{thm:backdummy}}
\label{sec:profthmdummy}

\begin{proof}
As defined in Section~\ref{sec:hypergraph}, the forward dummy $\ca{P} \in \mathbb{R}^{\alpha\times \alpha'\times \beta}$ is defined as $\ca{P}_{j, j', k} = 1$, if $j = sj' + k -p$ and otherwise $\ca{P}_{j, j', k} = 0$.
Then, $j \in \{0, 1, \dots \alpha - 1\}$, $j' \in \{0, 1, \dots, \alpha'-1\}$ and $k\in \{0, 1, \dots, \beta-1\}$. $s$ means the stride and $p$ means the padding.

The backward dummy $\ca{P}' \in \mathbb{R}^{\alpha\times \tilde{\alpha}'\times \beta}$ is expected as ${\ca{P}'}_{j, \tilde{j}', k} = 1$, if $\tilde{j}' = \tilde{s}j + \tilde{k} - \tilde{p}$ and otherwise ${\ca{P}'}_{j, \tilde{j}', k} = 0$, where $\tilde{s}=1$ , $\tilde{k} = \beta-k-1$ and  $\tilde{p}=\beta-p-1$.

The transformation matrix $\mathbf{T} \in \mathbb{R}^{\alpha' \times \tilde{\alpha}'}$ has $\tilde{\alpha}' = s\alpha'- s + 1$. The expansion of $\Delta \mathbf{y} \in \mathbb{R}^{\alpha'}$ into $ \Delta \tilde{\mathbf{y}} \in \mathbb{R}^{\alpha'}$  can be represented as $\Delta \tilde{\mathbf{y}} = \Delta \mathbf{y}\mathbf{T}$.
Here, we give an example of $s=2$
\begin{align}
\left[\begin{array}{lll}
\Delta\mathbf{y}_{0} & \Delta\mathbf{y}_{1} & \Delta\mathbf{y}_{2}
\end{array}\right] \cdot\left[\begin{array}{lllll}
1 & 0 & 0 & 0 & 0 \\
0 & 0 & 1 & 0 & 0 \\
0 & 0 & 0 & 0 & 1
\end{array}\right]=\left[\begin{array}{lllll}
\Delta\mathbf{y}_{0} & 0 & \Delta\mathbf{y}_{1} & 0 & \Delta\mathbf{y}_{2} 
\end{array}\right].
\end{align}
Obviously, $\Delta \mathbf{y}_{j} = \Delta \tilde{\mathbf{y}}_{\tilde{j}} = \Delta \tilde{\mathbf{y}}_{2j}$.

A reversal matrix $\mathbf{R} \in \mathbb{R}^{\beta \times \beta}$ is an anti-diagonal matrix, where $\mathbf{R}_{ij}=1$ when $i+j =\beta-1$ and $\mathbf{R}_{ij}=0$ for other situation. Reverse $\mathbf{b}\in \mathbb{R}^{\beta}$ into $\tilde{\mathbf{b}}=\mathbf{R}\mathbf{b} \in \mathbb{R}^{\beta}$. Here is an example when $\beta = 3$
$$
\left[\begin{array}{lll}
\mathbf{b}_{0} & \mathbf{b}_{1} & \mathbf{b}_{2}
\end{array}\right] \cdot \left[\begin{array}{lll}
0 & 0 & 1 \\
0 & 1 & 0 \\
1 & 0 & 0
\end{array}\right]=\left[\begin{array}{lll}
\mathbf{b}_{2} & \mathbf{b}_{1} & \mathbf{b}_{0}
\end{array}\right].
$$

Demonstrating $\Delta \mathbf{a}=\bm{dm}(\Delta \tilde{\mathbf{y}}, \tilde{\mathbf{b}})$ is equal to prove 
\begin{align}
    \ca{P} = \ca{P}' \times_1^1 \mathbf{T} \times_1^0 \mathbf{R},
\end{align}
namely the equvilent replacement in Figure~\ref{fig:replace}.

Contracting $\ca{P}'$ with $\mathbf{T}$
\begin{align}
    \hat{\ca{P}}' = \ca{P}' \times_1^1 \mathbf{T},
\end{align}
where $\hat{\ca{P}}' \in \mathbb{R}^{\alpha\times \beta \times \alpha'}$. According to the definition of the transformation matrix, $\tilde{j}' = sj'$, where $j'$ is the index of $\hat{\ca{P}}'$ dimension $\alpha'$.
Therefore, a element $\hat{\ca{P}}'_{j, k, j'}$ follows
\begin{align}
sj'=\tilde{s}j + \tilde{k} - \tilde{p}.
\end{align}

Contracting $\hat{\ca{P}}'$ with $\mathbf{R}$
\begin{align}
    \hat{\ca{P}} = \hat{\ca{P}}' \times_1^0 \mathbf{R},
\end{align}
where $\hat{\ca{P}} \in \mathbb{R}^{\alpha\times \alpha'\times \beta}$. Interacting with the reversal matrix, the index $\hat{k}$ of $\hat{\ca{P}}$ dimension $\beta$ follows $\hat{k} = \beta - \tilde{k} - 1$. Thus, $\hat{k} \equiv k$. Then, $\hat{\ca{P}}_{j, j', \hat{k}} = 1$, if $sj'=\tilde{s} j+ \beta-1-\hat{k}-\tilde{p}$ can also be formulated as $\hat{\ca{P}}_{j, j', k} = 1$, if
\begin{align}
sj'=\tilde{s}{j}-k+\beta-\tilde{p}-1.
\end{align}
By replacing $\tilde{s}$ and $\tilde{p}$ with $1$ and $\beta - p- 1$, we obtain
\begin{align}
    sj'={j}-k+p,
\end{align}
namely
\begin{align}
    {j}=sj'+k-p,
\end{align}
which indicates that $\hat{\ca{P}} \equiv \ca{P}$.
In conclusion, we demonstrate
\begin{align}
    \ca{P} = \ca{P}' \times_1^1 \mathbf{T} \times_1^0 \mathbf{R}.
\end{align}
\end{proof}

\section{Proof of Proposition~\ref{pps:tensorsum}}
\label{sec:pro-corosum}
\begin{lemma}
    \label{lemma:sum}
    Assuming that element in both vector $\mathbf{a} \in \mathbb{R}^n$ and $\mathbf{b} \in \mathbb{R}^n$ is independent to other elements in the same vector, the variance of vector $\mathbf{c} = \mathbf{a} + \mathbf{b}$ is ${\sigma^2}{(\mathbf{c})} = {\sigma^2}{(\mathbf{a})} + {\sigma^2}{(\mathbf{b})}$.
\end{lemma}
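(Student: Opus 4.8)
\textbf{Proof proposal for Lemma~\ref{lemma:sum}.}

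The plan is to reduce the vector statement to a coordinate-wise scalar statement and then invoke the elementary identity for the variance of a sum of two independent random variables. First I would note that the quantity $\sigma^2(\mathbf{c})$ in the paper's convention is the common variance of the (identically distributed) entries of $\mathbf{c}$, so it suffices to analyze a single fixed coordinate $i \in \{0, 1, \dots, n-1\}$ and compute $\sigma^2(\mathbf{c}_i) = \sigma^2(\mathbf{a}_i + \mathbf{b}_i)$. The hypothesis gives independence of each entry from the others \emph{within} each vector; combined with the standing i.i.d.\ setup (and treating $\mathbf{a}$ and $\mathbf{b}$ as drawn from independent sources, as in the Xavier/Kaiming assumptions recalled in Section~\ref{sec:xavier}), the pair $(\mathbf{a}_i, \mathbf{b}_i)$ is a pair of independent scalar random variables.

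The key step is then the standard computation
\begin{equation}
\sigma^2(\mathbf{a}_i + \mathbf{b}_i) = \sigma^2(\mathbf{a}_i) + \sigma^2(\mathbf{b}_i) + 2\,\mathrm{Cov}(\mathbf{a}_i, \mathbf{b}_i) = \sigma^2(\mathbf{a}_i) + \sigma^2(\mathbf{b}_i),
\end{equation}
where the cross term vanishes because $\mathbf{a}_i$ and $\mathbf{b}_i$ are independent, hence uncorrelated. Since $\sigma^2(\mathbf{a}_i) = \sigma^2(\mathbf{a})$ and $\sigma^2(\mathbf{b}_i) = \sigma^2(\mathbf{b})$ by the identical-distribution assumption, and the right-hand side does not depend on $i$, we conclude $\sigma^2(\mathbf{c}) = \sigma^2(\mathbf{a}) + \sigma^2(\mathbf{b})$.

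There is essentially no hard part here; the only thing to be careful about is making the independence assumptions explicit — specifically that entries of $\mathbf{a}$ are independent of entries of $\mathbf{b}$, which is implicit in the paper's i.i.d.\ framework but not spelled out in the lemma statement itself — so that the covariance term is legitimately zero. Once that is granted, the proof is a one-line application of the bilinearity of covariance. (The subsequent Proposition~\ref{pps:tensorsum} follows by applying this lemma entrywise to the flattened tensors, since element-wise tensor addition is just vector addition after vectorization.)
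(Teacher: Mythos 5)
Your proof is correct, and it is worth noting that the paper itself never actually proves Lemma~\ref{lemma:sum}: the proof environment that follows the lemma in Appendix~B is the proof of Proposition~\ref{pps:tensorsum}, which merely vectorizes the tensors and \emph{cites} the lemma as an elementary fact. Your coordinate-wise argument --- fix $i$, expand $\sigma^2(\mathbf{a}_i+\mathbf{b}_i)=\sigma^2(\mathbf{a}_i)+\sigma^2(\mathbf{b}_i)+2\,\mathrm{Cov}(\mathbf{a}_i,\mathbf{b}_i)$, and kill the cross term --- is exactly the missing content. Your side remark is also the substantive point here: as literally stated, the lemma only assumes independence of each entry from the \emph{other entries of the same vector}, which says nothing about the joint distribution of $\mathbf{a}_i$ and $\mathbf{b}_i$; without cross-independence (or at least uncorrelatedness) of $\mathbf{a}_i$ and $\mathbf{b}_i$ the covariance term need not vanish and the claimed identity is false (take $\mathbf{b}=\mathbf{a}$). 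The hypothesis you add is the one the paper actually supplies one level up, in Proposition~\ref{pps:tensorsum} (``elements of $\ca{X}$ and $\ca{Y}$ are independent with each other''), so your proof is faithful to the intended use even though it repairs the lemma's literal statement. The only other convention you rely on --- that $\sigma^2$ of a vector means the common variance of its identically distributed entries, so the per-coordinate computation suffices --- is consistent with how the paper uses $\sigma^2(\cdot)$ throughout.
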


\begin{proof}
    \begin{align*}
        {\sigma^2}{(\ca{Z})} &= {\sigma^2}{({\bm{vec}}(\ca{Z}))} \\
        &= {\sigma^2}{({\bm{vec}}(\ca{X} + \ca{Y}))} \\
        &= {\sigma^2}{({\bm{vec}}(\ca{X}) + {\bm{vec}}(\ca{Y}))},\\
    \end{align*}
    where ${\bm{vec}}(\cdot)$ represents the vectorization operation. According to Lemma~\ref{lemma:sum}, we get:
    \begin{align*}
        {\sigma^2}{(\ca{Z})} &= {\sigma^2}{({\bm{vec}}(\ca{X}))} + {\sigma^2}{({\bm{vec}}(\ca{Y}))}\\
        &= {\sigma^2}{(\ca{X})} + {\sigma^2}{(\ca{Y})}.
    \end{align*}
\end{proof}

\section{Proof of Proposition~\ref{pps:tensorprod}}
\label{sec:pro-coroprod}
\begin{lemma}
    \label{lemma:prod}
    Assume that elements of a vector $\mathbf{a} \in \mathbb{R}^n$ is i.i.d.. Meanwhile, a vector $\mathbf{b} \in \mathbb{R}^n$ (also i.i.d.) follows a symmetrical distribution with  zero mean. Then the variance of $c = \mathbf{a} \odot \mathbf{b}$ is $\sigma^2{(c)} = n\sigma^2{(\mathbf{a})}\sigma^2{(\mathbf{b})}$, where $\odot$ denotes the inner-product, and ${\sigma^2}(\cdot)$ denotes the variance.
\end{lemma}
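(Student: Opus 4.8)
The plan is to prove Lemma~\ref{lemma:prod} by a direct second-moment computation on $c = \mathbf{a}\odot\mathbf{b} = \sum_{i=0}^{n-1} a_i b_i$. First I would record the consequences of the hypotheses: since $\mathbf{b}$ follows a symmetric distribution about the origin it is zero-mean, so $\mathbb{E}[b_i]=0$ and $\mathbb{E}[b_i^2]=\sigma^2(\mathbf{b})$; and since $\mathbf{a}$ is independent of $\mathbf{b}$, we get $\mathbb{E}[a_ib_i]=\mathbb{E}[a_i]\mathbb{E}[b_i]=0$, hence $\mathbb{E}[c]=0$ and therefore $\sigma^2(c)=\mathbb{E}[c^2]$.

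Next I would expand $\mathbb{E}[c^2]=\sum_{i,j}\mathbb{E}[a_ib_ia_jb_j]$ and separate the off-diagonal terms ($i\neq j$) from the diagonal terms ($i=j$). For $i\neq j$, independence of the two vectors gives $\mathbb{E}[a_ib_ia_jb_j]=\mathbb{E}[a_ia_j]\,\mathbb{E}[b_ib_j]$, and because the $b_i$ are i.i.d. with zero mean, $\mathbb{E}[b_ib_j]=\mathbb{E}[b_i]\mathbb{E}[b_j]=0$, so every cross term vanishes. For $i=j$, independence gives $\mathbb{E}[a_i^2b_i^2]=\mathbb{E}[a_i^2]\,\mathbb{E}[b_i^2]=\mathbb{E}[a_i^2]\,\sigma^2(\mathbf{b})$, and since the $n$ diagonal terms are identically distributed (the $a_i$ are i.i.d.), summing yields $\sigma^2(c)=n\,\mathbb{E}[a_0^2]\,\sigma^2(\mathbf{b})$. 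The same conclusion can be reached more compactly via the law of total variance by conditioning on $\mathbf{a}$: $\mathbb{E}[c\mid\mathbf{a}]=0$ and $\mathrm{Var}(c\mid\mathbf{a})=\sigma^2(\mathbf{b})\sum_i a_i^2$, so $\sigma^2(c)=\sigma^2(\mathbf{b})\,\mathbb{E}\big[\sum_i a_i^2\big]=n\,\mathbb{E}[a_0^2]\,\sigma^2(\mathbf{b})$.

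The only delicate step is the final identification $\mathbb{E}[a_0^2]=\sigma^2(\mathbf{a})$: in general $\mathbb{E}[a_0^2]=\sigma^2(\mathbf{a})+(\mathbb{E}[a_0])^2$, so the stated equality holds exactly when $\mathbf{a}$ is (or is treated as) zero-mean, which is the same zero-mean hypothesis on the data-flow used in Xavier/Kaiming-style analyses; I would state this assumption explicitly, or, following the convention already adopted elsewhere in the paper, flag it as the working approximation under which the variance calculus is carried out. Everything else is routine bookkeeping. Finally I would remark that this lemma is the vector instance of Proposition~\ref{pps:tensorprod}: the general tensor-contraction case follows by vectorizing $\ca{X}$ and $\ca{Y}$ along the contracted modes so that the single contracted length $n$ is replaced by $\prod_{t=0}^{d-1}\mathbf{v}_t$, and the identical argument applies coordinatewise to the free indices.
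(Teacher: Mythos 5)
Your argument is correct, and it is worth noting that the paper itself never actually proves this lemma: the \verb|proof| environment that follows the lemma's statement in Appendix~C is the vectorization argument reducing Proposition~\ref{pps:tensorprod} to the lemma, so the lemma is used as an unproved primitive. Your direct second-moment expansion (or the law-of-total-variance variant) supplies exactly the missing step, and the bookkeeping is right: the cross terms $i\neq j$ vanish because $\mathbb{E}[b_ib_j]=\mathbb{E}[b_i]\mathbb{E}[b_j]=0$, and the diagonal terms give $n\,\mathbb{E}[a_0^2]\,\sigma^2(\mathbf{b})$. Two remarks on hypotheses. First, you are right to flag that the stated conclusion $n\sigma^2(\mathbf{a})\sigma^2(\mathbf{b})$ requires $\mathbb{E}[a_0]=0$ (or requires reading $\sigma^2(\mathbf{a})$ as the second moment $\mathbb{E}[a_0^2]$); in general one only gets $\sigma^2(c)=n\,\mathbb{E}[a_0^2]\,\sigma^2(\mathbf{b})\ge n\sigma^2(\mathbf{a})\sigma^2(\mathbf{b})$. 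This is the same gap that Xavier/Kaiming close either by assuming zero-mean activations or by tracking $\mathbb{E}[x^2]$ explicitly (the $\bm{p}_{\bm{a}}$ factor in the paper plays that role for ReLU), so stating it as a working assumption, as you do, is the right move. Second, the lemma as written never says that $\mathbf{a}$ and $\mathbf{b}$ are mutually independent, yet every factorization $\mathbb{E}[a_ib_ia_jb_j]=\mathbb{E}[a_ia_j]\mathbb{E}[b_ib_j]$ needs it; you use it tacitly, and it should be added to the hypotheses (it is harmless in context, since $\mathbf{b}$ is a freshly initialized weight). Finally, note that only the zero mean of $\mathbf{b}$ is consumed by this computation; the full symmetry assumption is not needed here, though it matters when the lemma is applied recursively through nonlinearities.
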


\begin{proof}
    Let $\ca{X}^{({\mathbf{x}}_{\ast})} \in \mathbb{R}^{{\mathbf{i}}_{{\mathbf{x}}_0} \times {\mathbf{i}}_{{\mathbf{x}}_1} \times \dots \times {\mathbf{i}}_{{\mathbf{x}}_{d-1}}}$ and $\ca{Y}^{({\mathbf{y}}_{\ast})} \in \mathbb{R}^{{\mathbf{j}}_{{\mathbf{y}}_0} \times {\mathbf{j}}_{{\mathbf{y}}_1} \times \dots \times {\mathbf{j}}_{{\mathbf{y}}_{d-1}}}$ denote sub-tensors where subscripts ${\mathbf{x}}_{\ast}$ and ${\mathbf{y}}_{\ast}$ indicate the fixed dimensions. An element of $\ca{Z}$ is calculated as $\ca{Z}_{{\mathbf{x}}_{\ast}, {\mathbf{y}}_{\ast}} = \ca{X}^{({\mathbf{x}}_{\ast})} \odot \ca{Y}^{({\mathbf{y}}_{\ast})}$. The change 
    of variance derivation is shown as following
    \begin{align*}
        {\sigma^2}{(\ca{Z})} &= {\sigma^2}{(\ca{Z}_{{\mathbf{x}}_{\ast}, {\mathbf{y}}_{\ast}})} \\
        &= {\sigma^2}{(\ca{X}^{({\mathbf{x}}_{\ast})} \odot \ca{Y}^{({\mathbf{y}}_{\ast})})} \\
        &= {\sigma^2}{({\bm{vec}}(\ca{X}^{({\mathbf{x}}_{\ast})}) \odot {\bm{vec}}(\ca{Y}^{({\mathbf{y}}_{\ast})}))},
    \end{align*}
    where ${\bm{vec}}(\cdot)$ represents the vectorization operation. According to Lemma~\ref{lemma:prod}, we get
    \begin{align*}
        {\sigma^2}{(\ca{Z})} &= {\sigma^2}({\bm{vec}}(\ca{X}^{({\mathbf{x}}_{\ast})}))  {\sigma^2}({\bm{vec}}(\ca{Y}^{({\mathbf{y}}_{\ast})})) \prod_{t=0}^{d-1}{\mathbf{v}_t}   \\
        &= {\sigma^2}{(\ca{X})}  {\sigma^2}{(\ca{Y})} \prod_{t=0}^{d-1}{\mathbf{v}_t}.
    \end{align*}
\end{proof}

\section{Proof of Theorem~\ref{thm:total}}
\label{sec:prototal}
\begin{proof}
First of all,
\begin{align}
    \sigma^2(\ca{Y}) &= \sigma^2(\ca{Y}_{\ast})=\sigma^2(BG(\mathbf{E})) \notag \\
    &= \sigma^2(\ca{X}\times_{\ast}^0 \ca{W}^{(0)}\times_{\ast}^{0, 1}\ca{W}^{(1)}\dots\times_{\ast}^{0, 1, \dots, n-1}\ca{W}^{(n-1)}).
\end{align}
According to Proposition~\ref{pps:tensorprod},
\begin{align}
    \sigma^2(\ca{Y}) &= \sigma^2(\ca{X}\times_{\ast}^0 \ca{W}^{(0)}\times_{\ast}^{0, 1}\ca{W}^{(1)}\dots)\sigma^2(\ca{W}^{(n-1)}) \prod_{i=0}^{n-1}\prod_{j=\tau-1}^{\tau-1}e_{ij} \notag \\
    &= {\sigma^2}(\ca{X})\prod_{k=0}^{n-1}{{\sigma^2}(\ca{W}^{(k)})}\prod_{i=0}^{n-1}\prod_{j=i+1}^{\tau-1}{e_{ij}}.
\end{align}
\end{proof}

\section{Tensor Basis}
\label{sec:append-tensor-basis}
\subsection{Tenosr Notation}
In this paper, a $d$th-order tensor $\pmb{\mathcal{X}}\in\mathbb{R}^{\mathbf{i}_0\times \mathbf{i}_1 \dots \times \mathbf{i}_{d-1}} $ is represented by a boldface Euler script letter. We denote a scalar $s \in \mathbb{R}^1$ by a lowercase letter, a vector $\mathbf{v} \in \mathbb{R}^i$ by a bold lowercase letter and a matrix $\mathbf{M} \in \mathbb{R}^{\mathbf{i}_0 \times \mathbf{i}_1}$ by a bold uppercase letter.

\begin{figure*}[t]
	\centering
	\includegraphics[width=0.98\textwidth]{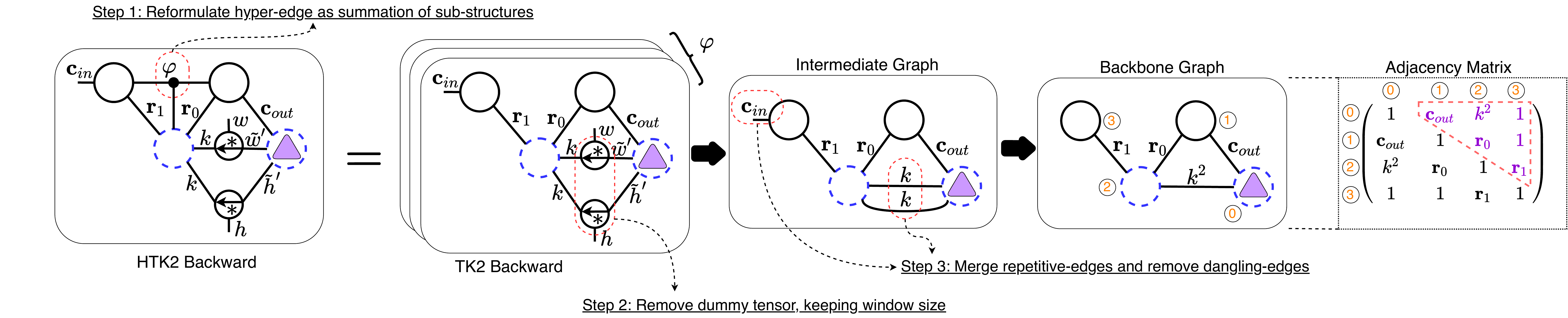}
	\caption{An example of deriving graph-out initialization for Hyper Tucker-2 (HTK2) convolution with the Backbone Graph. By unifying both the forward and backward processes, we can analyze backward propagation similarly to the forward. Then according to the adjacent matrix of Backbone Graph, the initial variance of convolutional weights can be derived as $\frac{1}{\sqrt[3]{\bm{p}_{\bm{a}}\varphi k^2 {\mathbf{c}}_{out}{\mathbf{r}}_{0}{\mathbf{r}}_{1}}}$.}
	\label{fig:backward-init}
\end{figure*}

\subsection{Dummy Tensor and Hyperedge}
\label{sec:dummy-hyper}
Traditional tensor diagram can only describe a tensor structure, which is limited to use in TCNNs. Therefore, to reinforce the representing ability of classical tensor diagram, \citet{DBLP:conf/nips/HayashiYSM19} propose a hypergraph to denote forward computation of convolutional layers by designing the dummy tensor and hyperedge.

\textbf{Dummy Tensor}~~As depicted in Figure~\ref{fig:hyper-dummy},
a vertex with a star symbol denotes a dummy tensor formulated as
\begin{equation}
\label{eq:dummy}
\mathbf{y}_{j^{'}} = \sum_{j=0}^{\alpha-1}\sum_{k=0}^{\beta-1} \ca{P}_{j, j^{'}, k}\mathbf{a}_{j} \mathbf{b}_{k},
\end{equation}
where $\mathbf{a}\in\mathbb{R}^{\alpha}$,  $\mathbf{y}\in\mathbb{R}^{\alpha '}$, $\mathbf{b}\in\mathbb{R}^{\beta}$, $\bm{dm}$ denotes the operation of dummy tensor, and $\ca{P} \in \left\{ 0, 1 \right\}^{\alpha \times {\alpha '} \times \beta}$ is a binary tensor with elements defined as $\ca{P}_{j,j^{'},k}=1$ if $j = sj' + k - p$ where $s$ and $p$ represent stride and padding operation respectively, and $0$ otherwise.

\textbf{Hyperedge}~~As illustrated in Figure~\ref{fig:hyper-edge}, an output of a special case, connecting three vectors through a hyperedge, can be calculated
\begin{equation}
\label{eq:hyper}
y = \sum_{k=0}^{\varphi-1} \mathbf{a}_{k}\mathbf{b}_{k} \mathbf{c}_{k},
\end{equation}
where $\mathbf{a}, \mathbf{b}, \mathbf{c}\in\mathbb{R}^{\varphi}$,  $y \in\mathbb{R}^{1}$, $\bm{he}$ denotes the operation of a hyperedge. Interestingly, a hyperedge is simply equal to a tensor, whose diagonal elements are 1. This tensor indicates the adding operation over several sub-structures (e.g., CNNs). For such an adding composite structure, the key point is the initialization problem of its sub-structures, which is what we focus on in this paper. \citet{DBLP:conf/nips/HayashiYSM19} show that tensor graph can represent arbitrary TCNNs by introducing dummy tensors and hyperedges.

\section{Model Extension}
\subsection{Graph-out Derivation for HTK2 Convolution}
\label{sec:backinit}
We draw Graph-out Derivation for HTK2 Convolution in Figure~\ref{fig:backward-init}.

\subsection{Example for Inter Interaction in a Tensor Format}
\label{sec:variance-example}
As the most widely used methods for CNNs,  Xavier and Kaiming initialization usually perform stable and efficient weight generation. However, they usually fail to produce appropriate weights when applied to TCNNs.
Without loss of generality, we show an instance by using Xavier to initialize the HTK2 convolution. By generating each HTK2 node $\ca{W}^{(k)}$ with the fan-in variance $\frac{1}{k^2\mathbf{c}_{in}}$, we can calculate the final output variance ${\sigma^2}(\ca{Y}_o)$ according to Eq.~\eqref{eq:equal-relation} as
\begin{align}
    \label{eq:xavier-fail}
     {\sigma^2}(\ca{Y}_o) = \bm{p}_{\bm{a}}\varphi{\sigma^2}(\ca{X})\prod_{k=0}^{n-1}{{\sigma^2}(\ca{W}^{(k)})}\prod_{i=0}^{n-1}\prod_{j=i+1}^{\tau-1}{e_{ij}}
     = \varphi k^2\mathbf{c}_{in}\mathbf{r}_0\mathbf{r}_1{\sigma^2}(\ca{X})\frac{1}{(k^2\mathbf{c}_{in})^3} = \frac{\varphi\mathbf{r}_0\mathbf{r}_1}{(k^2\mathbf{c}_{in})^2}{\sigma^2}(\ca{X}),
\end{align}
where $\mathbf{r}_0$ and $\mathbf{r}_1$ denote tensor ranks, and $\varphi$ means the value of a hyperedge. Obviously, through the output variance scale $\frac{\varphi\mathbf{r}_0\mathbf{r}_1}{(k^2\mathbf{c}_{in})^2}$, when $\mathbf{r}_{\ast}$ and $\varphi$ increase, data-flow will explode easily and cause fatal errors in the training process. However, Xavier and Kaiming initialization only consider $\mathbf{c}_{in}$ and $k$, leading to fail to train HTK2 convolution. Such problems can also be found in the fan-out mode. To resolve the dilemma, we derive a principle (i.e., Eq.~\eqref{eq:result}) that generalizes over Xavier to fit multi-node TCNNs, not limited to single-node ones (i.e., Vanilla CNNs).

\begin{figure*}[t]
	\centering
	\subfigure[Low-Rank Convolution]{
		\includegraphics[height=0.12\textheight]{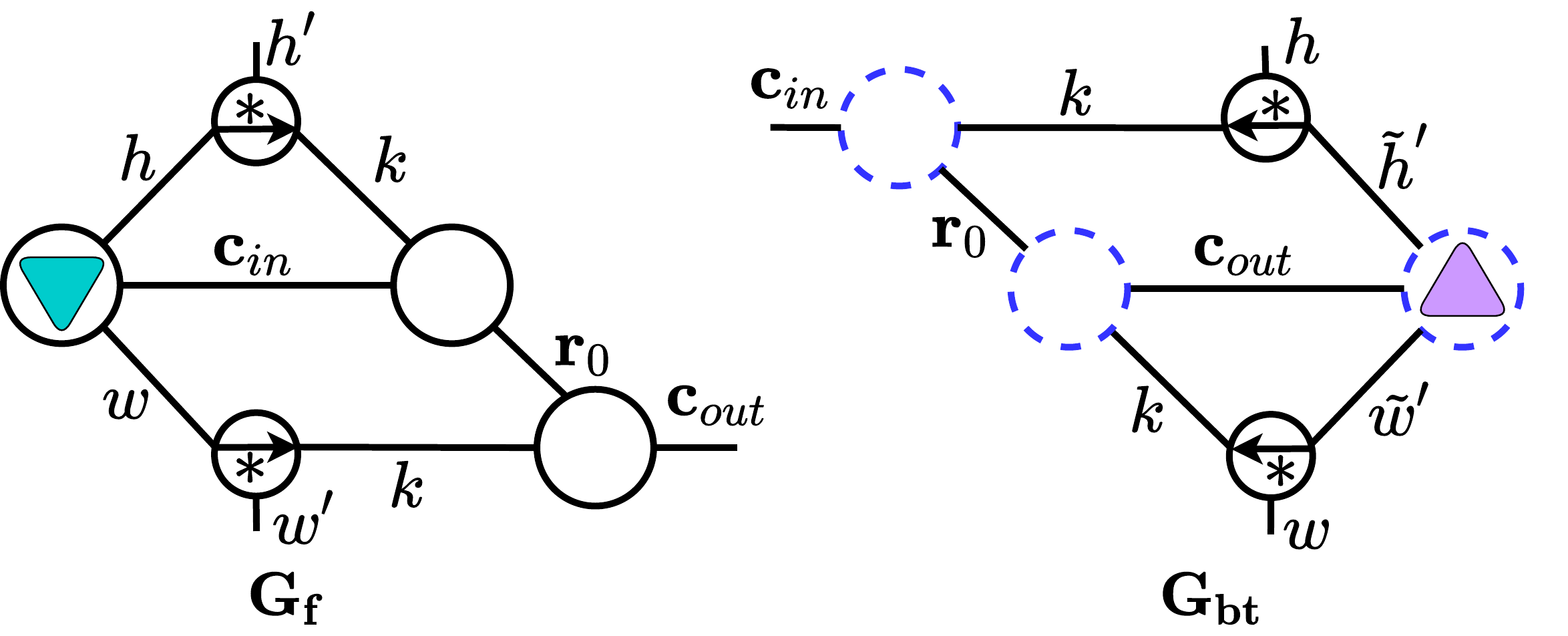}
		\label{fig:lowrankfb}
	}~~~~~~~~~~
	\subfigure[Tensor Train Convolution]{
		\includegraphics[height=0.12\textheight]{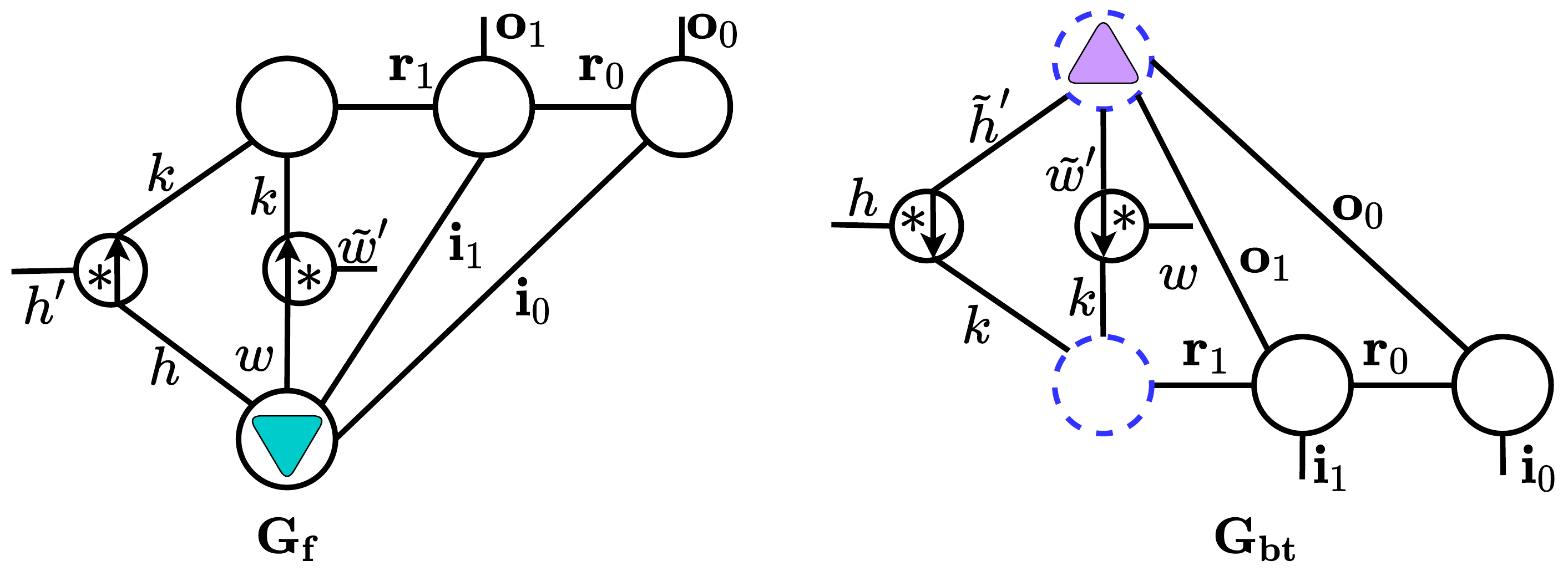}
		\label{fig:ttfb}
	}
	
	\subfigure[HyperNet Convolution]{
		\includegraphics[height=0.12\textheight]{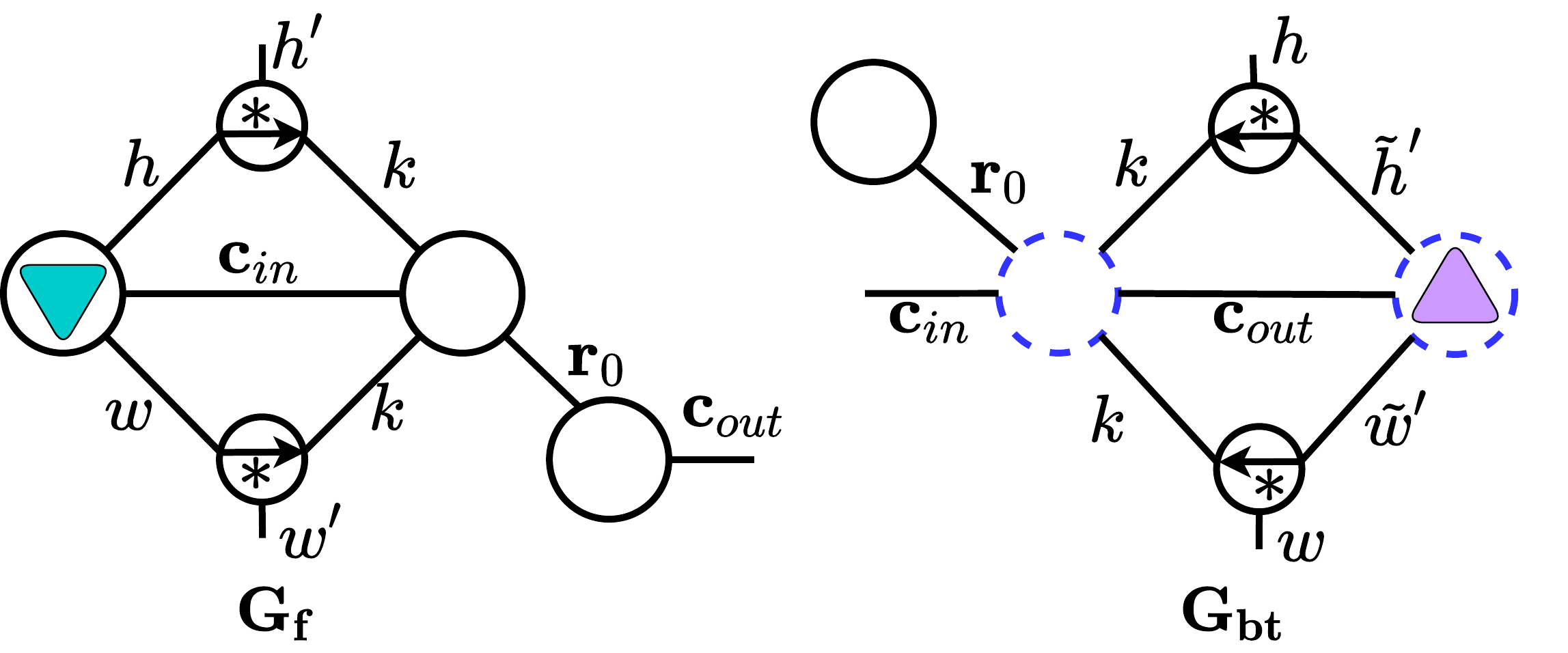}
		\label{fig:hypernetfb}
	}~~~~~~~~~~
	\subfigure[Tensor Ring Convolution]{
		\includegraphics[height=0.12\textheight]{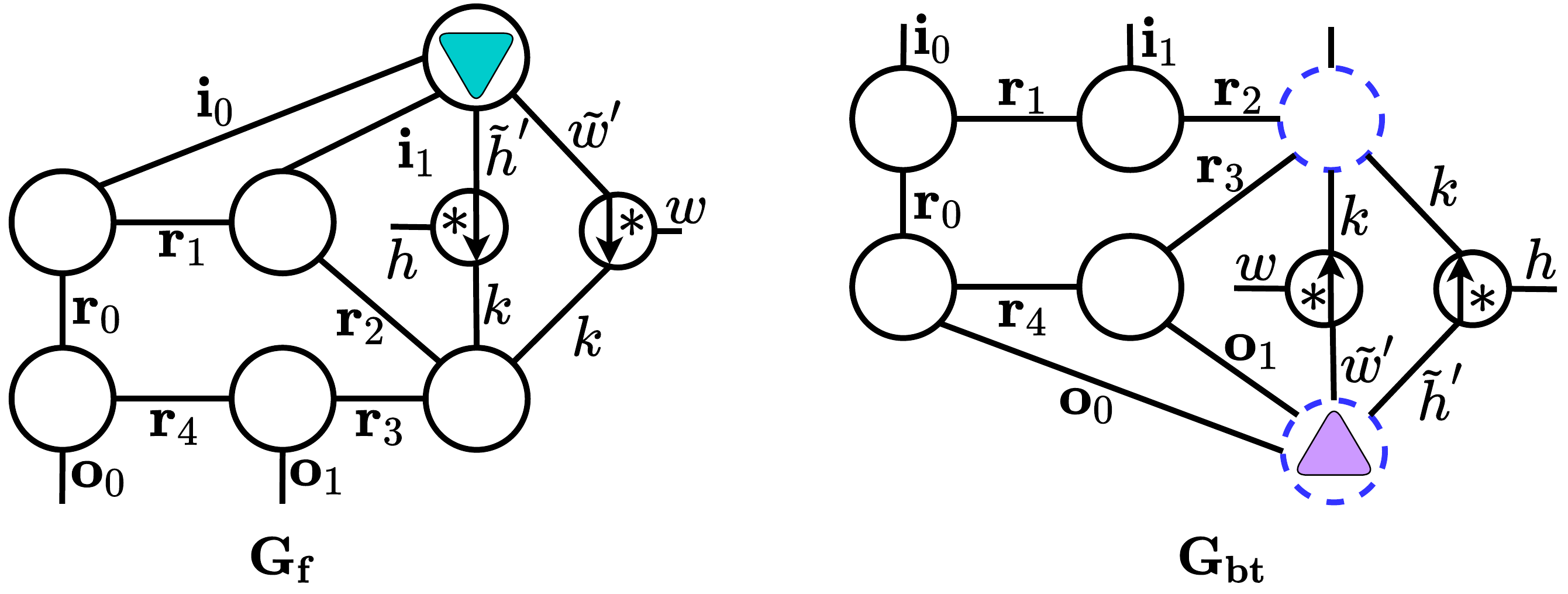}
		\label{fig:trfb}
	}
	
	\caption{More Reproducing Transformation cases.}
	\label{fig:appfb}
\end{figure*}

\section{More Reproducing Transformation Cases}
\label{sec:more-graph}
As shown in Figure~\ref{fig:appfb}, we illustrate some additional Reproducing Transformation cases, including Low-rank (LR) convolution, HyperNet convolution, Tensor Train (TT) convolution and Tensor Ring (TR) convolution. From the similar forward and backward tensor graphs, we show that Graph-out initialization can be calculated as exactly the same as the Graph-in.


\section{Details of Experiments}
\label{sec:exp-detail}

All tensorial network experiments are constructed with the code\footnote{Source code at \url{https://github.com/tnbar/tednet}.} of tednet~\citep{DBLP:journals/ijon/PanWX22}.

\subsection{Applicable Validation for Proposition~\ref{pps:tensorprod}}
\label{sec:pro-validation}

This experiment is implemented to validate Proposition~\ref{pps:tensorprod} can be recursively used under loose conditions (namely non-i.i.d input) in practice. Specifically, Proposition~\ref{pps:tensorprod} requires both  $\ca{X}$ and $\ca{Y}$ i.i.d (denoted as ($\ca{X}$: i.i.d, $\ca{Y}$: i.i.d)), however, in practice, $\ca{Y}$ can easily satisfy the i.i.d condition with an initialization, while $\ca{X}$ is hard to follow the i.i.d. condition, which is denoted as ($\ca{X}$: non-i.i.d, $\ca{Y}$: i.i.d). This situation is also faced by the Xavier and Kaiming initialization methods.   However, although Xavier and Kaiming both adopt the ($\ca{X}$: non-i.i.d, $\ca{Y}$: i.i.d) pattern when initializing networks, they work well in a number of real-world tasks.

Here we conduct a experiment to demonstrate data-flow will keep stable under the ($\ca{X}$: non-i.i.d, $\ca{Y}$: i.i.d) condition,
namely, we show that ($\mathbf{X}$: non-i.i.d, $\mathbf{Y}$: i.i.d) will maintain variance scale (i.e., $\prod_{t=0}^{d-1}v_t$ in Proposition~\ref{pps:tensorprod}). In detail, we generate $\mathbf{X} \in \mathbb{R}^{32\times 96}$ and 10 matrices, $\mathbf{W}_1 \in \mathbb{R}^{96\times 200}$, $\mathbf{W}_2 \in \mathbb{R}^{200\times 400}$, $\mathbf{W}_3 \in \mathbb{R}^{400\times 600}$, $\mathbf{W}_4 \in \mathbb{R}^{600\times 800}$, $\mathbf{W}_5 \in \mathbb{R}^{800\times 1000}$, $\mathbf{W}_6 \in \mathbb{R}^{1000\times 800}$, $\mathbf{W}_7 \in \mathbb{R}^{800\times 600}$, $\mathbf{W}_8 \in \mathbb{R}^{600\times 400}$, $\mathbf{W}_9 \in \mathbb{R}^{400\times 200}$ and $\mathbf{W}_{10} \in \mathbb{R}^{200\times 100}$. All these 10 matrices are generated by sampling from $N\sim(0, 1)$. In Table~\ref{tbl:scale}, Gaussian means $\mathbf{X}$ are generated by sampling from $N\sim(0, 1)$ and Cifar10 denotes that $\mathbf{X}$ is a Cifar10 image. The production sequence is $(\mathbf{X}\mathbf{W}_1)=\mathbf{X}\times \mathbf{W}_1, (\mathbf{X}\mathbf{W}_1\mathbf{W}_2)=(\mathbf{X}\mathbf{W}_1)\times \mathbf{W}_2, \dots, (\mathbf{X}\mathbf{W}_1\mathbf{W}_2\dots \mathbf{W}_{10})=(\mathbf{X}\mathbf{W}_1\mathbf{W}_2\dots)\times \mathbf{W}_{10}$, where $\times$ denotes a matrix production. According to Proposition~\ref{pps:tensorprod}, we calculate the variance scale (i.e., a contracting dimension of $\mathbf{W}_t$ for $t$-th production according to Proposition~\ref{pps:tensorprod}) under ($\mathbf{X}$: i.i.d, $\mathbf{Y}$: i.i.d) as the ground-truth. For example, since $\mathbf{X}\times \mathbf{W}_1$ contracts on dimension 96, the scale is calculated as 96. As for the Gaussian and Cifar10 experiments, we run 500 rounds each.

Results show that when $\mathbf{X}$ is chosen as an i.i.d distribution, its scales are almost around the Ground-Truth, which approximate the ($\mathbf{X}$: i.i.d, $\mathbf{Y}$: i.i.d). For the much more tricky case (i.e., Cifar10), since $\mathbf{X}$ is a realistic image and non-i.i.d, the variance of Scale 1 is a little large, but the mean of Scale 1 is not far away from 96. And other scales do not vary a lot and are close to the Ground-Truth. Therefore, the experiments show that ($\mathbf{X}$: non-i.i.d, $\mathbf{Y}$: i.i.d) can approximate ($\mathbf{X}$: i.i.d, $\mathbf{Y}$: i.i.d) and can be applicable recursively in practice.

\begin{table}[h]
\centering
\caption{Scale change in propagation.}
\label{tbl:scale}
\scalebox{0.68}{\begin{tabular}{l|l|l|l|l|l|l|l|l|l|l}
\hline
Data         & Scale 1       & Scale 2      & Scale 3       & Scale 4       & Scale 5       & Scale 6       & Scale 7       & Scale 8       & Scale 9       & Scale 10      \\ \hline\hline
Ground-Truth & 96            & 200          & 400           & 600           & 800           & 1000          & 800           & 600           & 400           & 200           \\ \hline
Gaussian     & 96.4 $\pm$ 1.7   & 201.5 $\pm$ 2.5 & 398.0 $\pm$ 3.8  & 603.4 $\pm$ 5.5  & 790.4 $\pm$ 6    & 1011.0 $\pm$ 8.9 & 805.3 $\pm$ 7.9  & 601.2 $\pm$ 7.0  & 385.6 $\pm$ 7    & 202.0 $\pm$ 5.2  \\ \hline
Cifar10      & 122.0 $\pm$ 54.8 & 200.2 $\pm$ 5.9 & 427.1 $\pm$ 12.4 & 588.2 $\pm$ 11.3 & 803.7 $\pm$ 12.8 & 969.3 $\pm$ 18.9 & 799.4 $\pm$ 17.5 & 567.4 $\pm$ 18.3 & 394.7 $\pm$ 19.0 & 193.1 $\pm$ 12.1 \\ \hline
\end{tabular}}
\end{table}

\subsection{Details of Activation Propagation Analysis}
\label{sec:detail-converge}
We perform this experiment on MNIST through Linear-5. The structure of Linear-5 is shown in Table~\ref{tbl:linear5}. The mini-batch size is set to 20 and the learning rate is 1e-4. The $\mathbf{r}_{\ast}$ ranks are chosen to be 5. The training process lasts 80 epoch and is optimized by Adam. We use one NVIDIA GTX 1080ti GPU for this experiment.

\subsection{Random Generator}
\label{sec:detail-random-generator}
We construct a random generator by randomly generating 4-8 vertices, 2-3 input $\mathbf{i}_{\ast}$ edges, 2-3 output $\mathbf{o}_{\ast}$ edges and uncertain number of $\mathbf{r}_{\ast}$ edges. Notably, it is guaranteed that each vertex connects to an edge.

\begin{table}[t]
\centering
\caption{Structures of Linear-5 and HOdd-5.}
\label{tbl:linear5}
\begin{tabular}{c|c|c}
\hline
Layer  & Linear-5       & HOdd--5                           \\ \hline\hline
linear1 & 784$\times$500 & (28$\times$28)$\times$(20$\times$25) \\ \hline
linear2 & 500$\times$500 & (20$\times$25)$\times$(20$\times$25) \\ \hline
linear3 & 500$\times$500 & (20$\times$25)$\times$(20$\times$25) \\ \hline
linear4 & 500$\times$500 & (20$\times$25)$\times$(20$\times$25) \\ \hline
linear5 & 500$\times$10  & (20$\times$25)$\times$(2$\times$5)   \\ \hline
\end{tabular}
\end{table}

\subsection{Details of Random Tensor Format Experiment}
\label{sec:detail-conv4}
In this experiment, each layer of Conv-4 can be constructed by a random generator in Appendix~\ref{sec:detail-random-generator}. MNIST is used for training and validation. The batch size is set to 128. The learning rate is 1e-4 and the optimizer is Adam. We train each Conv-4 for 20 epochs on a NVIDIA Tesla V100 GPU.

\subsection{Details of Experiment on Cifar10}
\label{sec:detail-cifar10-all}
\subsubsection{Details of Experiment on Cifar10 in Main Paper}
\label{sec:detail-cifar10}
We conduct this experiment based on the All Convolutional Net model~\citep{DBLP:conf/iclr/ChangFL20, DBLP:journals/corr/SpringenbergDBR14}. By replcaing standard convolution in All Convolutional Net with TCNN convolutions, including Low-rank (LR) convolution, Tensor Ring (TR) convolution, Hyper Tucker-2 (HTK2) convolution,  and Hyper Odd (HOdd) convolution, we can derive TCNN based All Convolutional Nets, as shown in Table~\ref{tbl:all-conv}. In addition, we also use random generator in Appendix~\ref{sec:detail-random-generator} to implement 30 different All Convolutional Nets for evaluation.
In this experiment, we use Cifar10 dataset for training. The mini-batch size is 128. We choose SGD as the optimizer with learning rate 5e-3, momentum 0.9 and weight decay 5e-4. All $\mathbf{r}_{\ast}$ ranks are set to 10. Each network is trained for 270 epochs on an NVIDIA Tesla V100 GPU and the learning rate will be multiplied by a fixed multiplier of 0.2 after 100, 180 and 230 epochs separately. Some results are shown in Figure~\ref{fig:cifarodd}.

\begin{table*}[h]
\centering
\caption{Architectures of the tensorial All-Conv networks. Window means the convolutional kernel window size. Channels indicate $\mathbf{c}_{in}$ and $\mathbf{c}_{out}$ of a standard convolutional kernel $\ca{C}\in \mathbb{R}^{\mathbf{c}_{in}\times \mathbf{c}_{out}\times k \times k}$. The avg pool denotes the average pooling operation.}
\label{tbl:all-conv}
\scalebox{0.8}{
\begin{tabular}{c|c|c|c|c|c|c|c}
\hline
Layer  & Window & Channels      & HTK2/Tucker              & TR                                                     & TT                                                      & Low-Rank          & HOdd                                  \\ \hline\hline
conv1   & 3$\times$3    & 3$\times$ 96  & (3)$\times$ (96)  & (3)$\times$(6$\times$ 4$\times$ 4)                     & (1$\times$ 3$\times$ 1)$\times$(6$\times$ 4$\times$ 4)  & (3)$\times$ (96)  & (1$\times$ 3)$\times$(8$\times$ 12)  \\ \hline
conv2   & 3$\times$3    & 96$\times$ 96 & (96)$\times$ (96) & (6$\times$ 4$\times$ 4)$\times$(6$\times$ 4$\times$ 4) & (6$\times$ 4$\times$ 4)$\times$(6$\times$ 4$\times$ 4)  & (96)$\times$ (96) & (8$\times$ 12)$\times$(8$\times$ 12) \\ \hline
conv3   & 3$\times$3    & 96$\times$ 96 & (96)$\times$ (96) & (6$\times$ 4$\times$ 4)$\times$(6$\times$ 4$\times$ 4) & (6$\times$ 4$\times$ 4)$\times$(6$\times$ 4$\times$ 4)  & (96)$\times$ (96) & (8$\times$ 12)$\times$(8$\times$ 12) \\ \hline
conv4   & 3$\times$3    & 96$\times$ 96 & (96)$\times$ (96) & (6$\times$ 4$\times$ 4)$\times$(6$\times$ 4$\times$ 4) & (6$\times$ 4$\times$ 4)$\times$(6$\times$ 4$\times$ 4)  & (96)$\times$ (96) & (8$\times$ 12)$\times$(8$\times$ 12) \\ \hline
conv5   & 3$\times$3    & 96$\times$ 96 & (96)$\times$ (96) & (6$\times$ 4$\times$ 4)$\times$(6$\times$ 4$\times$ 4) & (6$\times$ 4$\times$ 4)$\times$(6$\times$ 4$\times$ 4)  & (96)$\times$ (96) & (8$\times$ 12)$\times$(8$\times$ 12) \\ \hline
conv6   & 3$\times$3    & 96$\times$ 96 & (96)$\times$ (96) & (6$\times$ 4$\times$ 4)$\times$(6$\times$ 4$\times$ 4) & (6$\times$ 4$\times$ 4)$\times$(6$\times$ 4$\times$ 4)  & (96)$\times$ (96) & (8$\times$ 12)$\times$(8$\times$ 12) \\ \hline
conv7   & 3$\times$3    & 96$\times$ 96 & (96)$\times$ (96) & (6$\times$ 4$\times$ 4)$\times$(6$\times$ 4$\times$ 4) & (6$\times$ 4$\times$ 4)$\times$(6$\times$ 4$\times$ 4)  & (96)$\times$ (96) & (8$\times$ 12)$\times$(8$\times$ 12) \\ \hline
conv8   & 3$\times$3    & 96$\times$ 96 & (96)$\times$ (96) & (6$\times$ 4$\times$ 4)$\times$(6$\times$ 4$\times$ 4) & (6$\times$ 4$\times$ 4)$\times$(6$\times$ 4$\times$ 4)  & (96)$\times$ (96) & (8$\times$ 12)$\times$(8$\times$ 12) \\ \hline
conv9   & 3$\times$3    
&  \begin{tabular}{c}
96$\times$ 10 \\ 
avg pool
\end{tabular} 
& \begin{tabular}{c}
(96)$\times$ (10) \\ 
avg pool
\end{tabular} 
& \begin{tabular}{c}
(6$\times$ 4$\times$ 4)$\times$(10)  \\ 
avg pool
\end{tabular}                    
& \begin{tabular}{c}
(6$\times$ 4$\times$ 4)$\times$(1$\times$ 10$\times$ 1)   \\ 
avg pool
\end{tabular}
& \begin{tabular}{c}
(96)$\times$ (10)  \\ 
avg pool
\end{tabular}
&\begin{tabular}{c}
(8$\times$ 12)$\times$(1$\times$ 10) \\ 
avg pool
\end{tabular}\\ \hline
\end{tabular}
}
\end{table*}

\begin{figure*}[h]
	\centering
	\subfigure[HTK2-C4 Loss]{
		\includegraphics[width=0.22\textwidth]{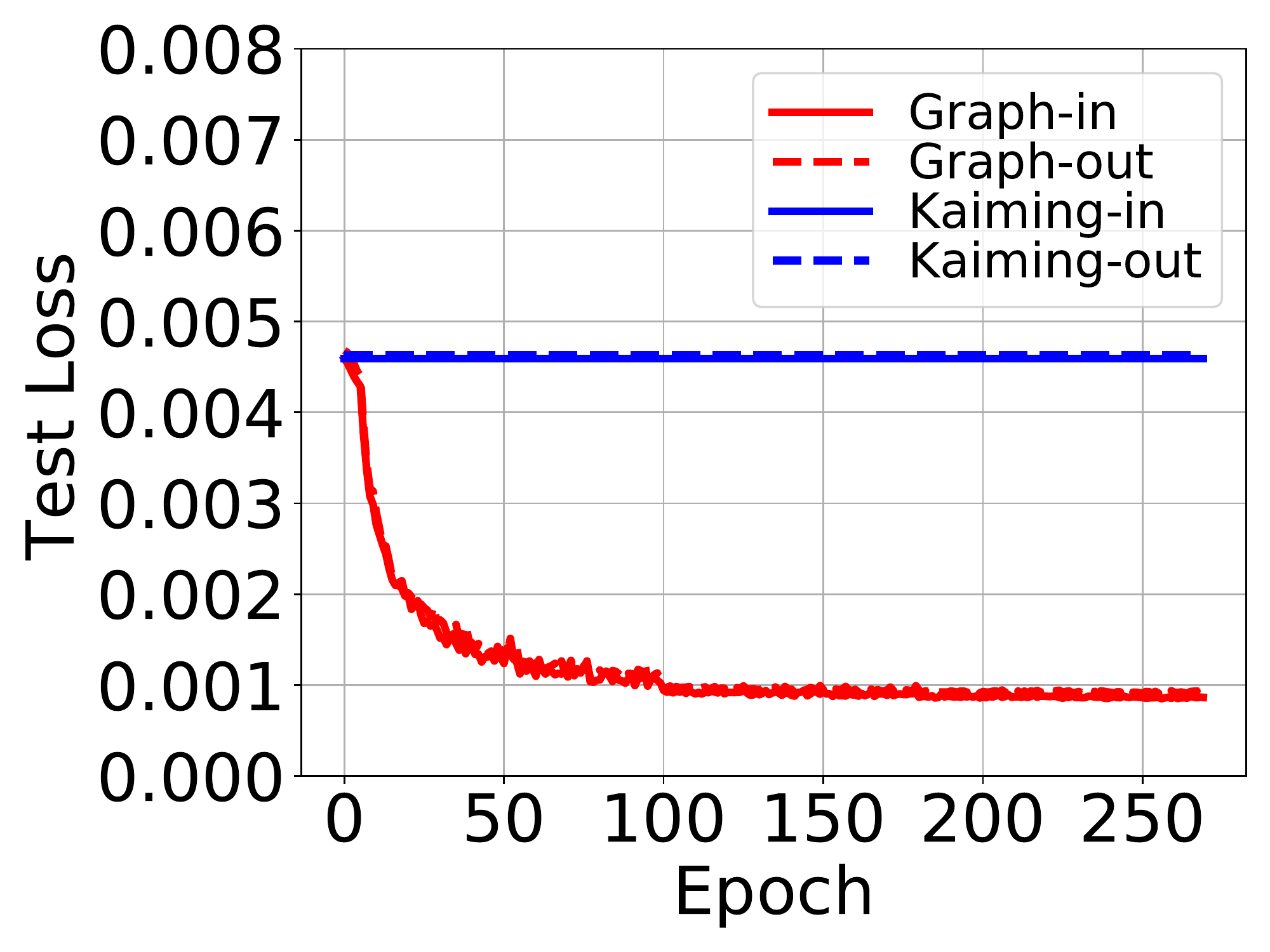}
		\label{fig:cifarhtk2c4loss}
	} 
	\subfigure[HTK2-C4 Accuracy]{
		\includegraphics[width=0.22\textwidth]{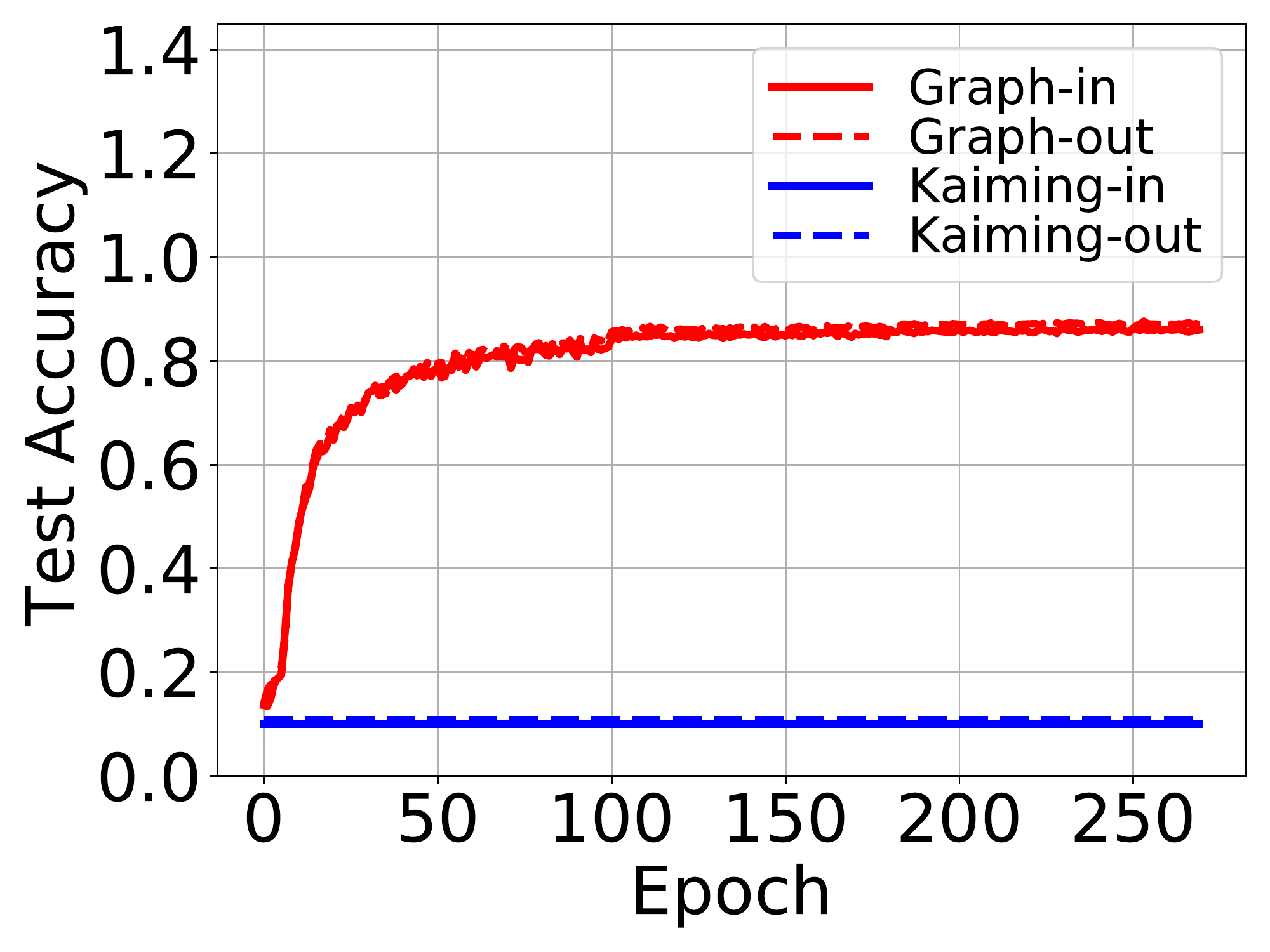}
		\label{fig:cifarhtk2c4acc}
	}
	\subfigure[HOdd-C4 Loss]{
		\includegraphics[width=0.22\textwidth]{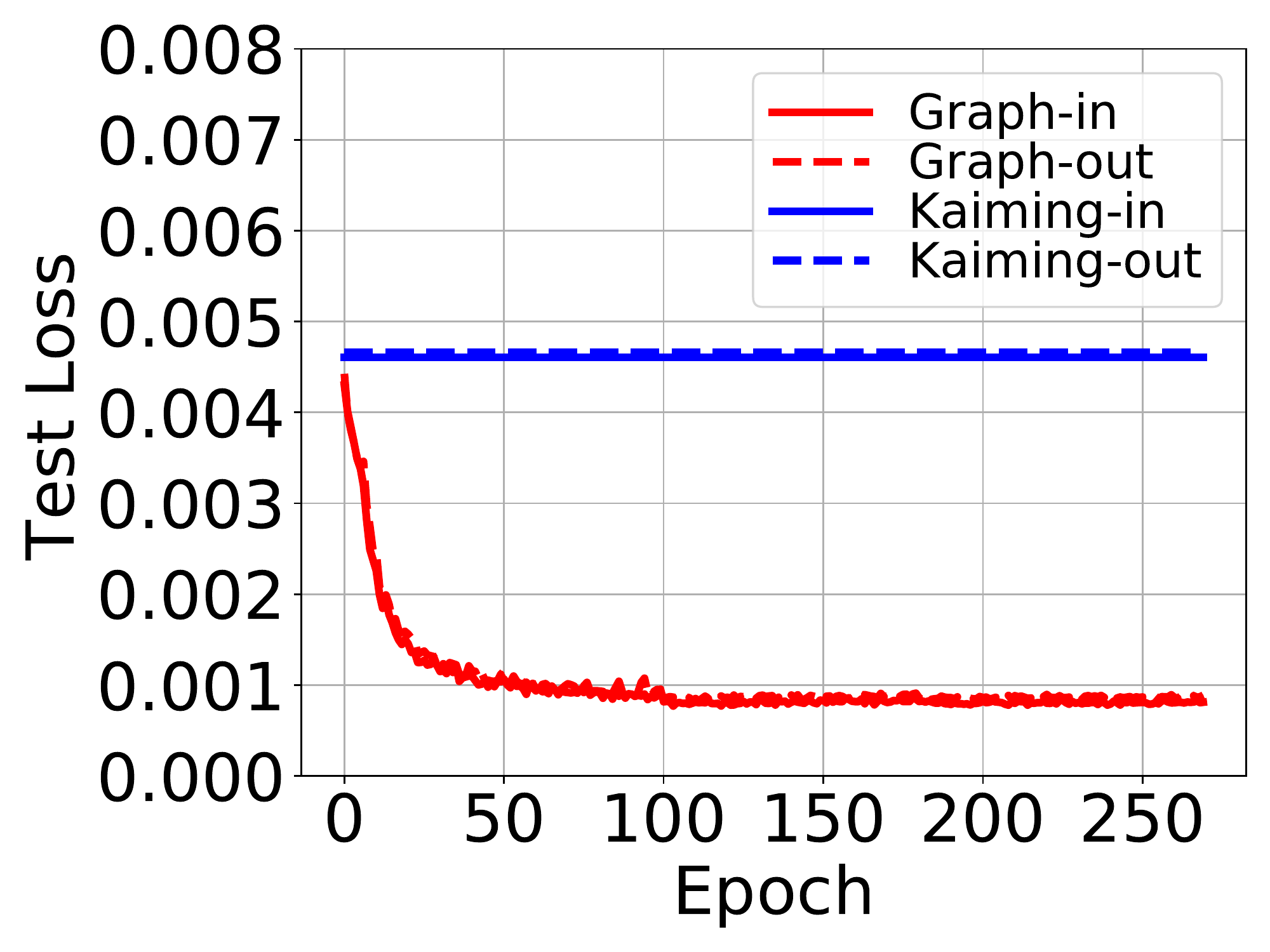}
		\label{fig:cifaroddloss}
	}
	\subfigure[HOdd-C4 Accuracy]{
		\includegraphics[width=0.22\textwidth]{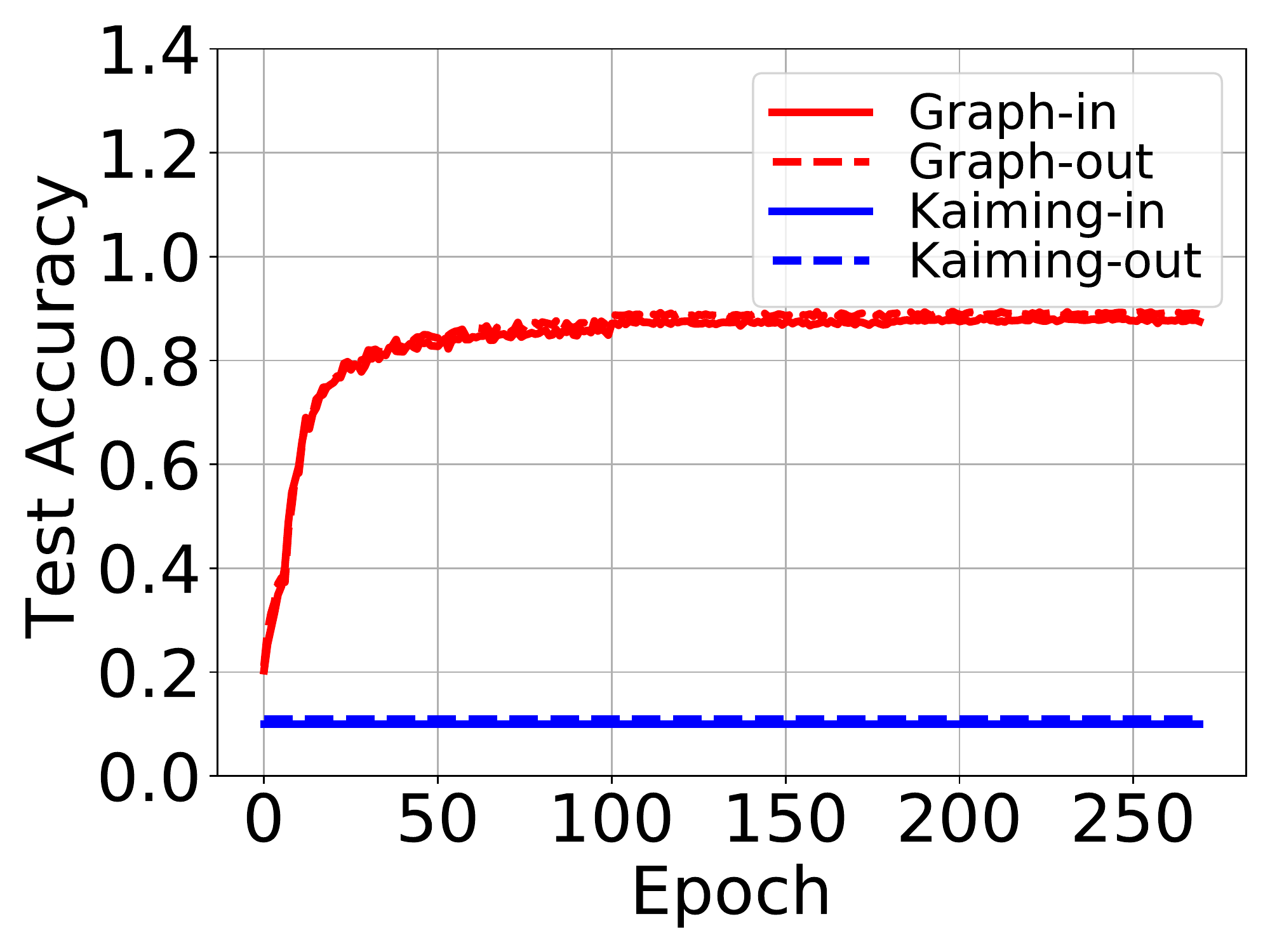}
		\label{fig:cifaroddacc}
	}
	\caption{Results on Cifar10.}
	\label{fig:cifarodd}
\end{figure*}

\subsubsection{Comparison with Ad-hoc Initialization}
\label{sec:adhoc}

To our best knowledge, our initialization is the first unified method for TCNNs, and there are no other unified initialization methods to be compared. Therefore, we would like to compare with initialization for a specific tensor format (i.e., tensor ring~\citep{DBLP:conf/cvpr/WangSEWA18} and tucker-2~\citep{DBLP:journals/corr/abs-1909-05675}), which is hard to extend to other tensor formats. In this experiment, we use Cifar10 as the validation dataset. We adopt All-Conv and ResNet-32 as the base models. Then, tensor ring and tucker-2 are adopted as tensor formats for All-Conv (termed as TR-All-Conv) and ResNet-32 (termed as TK2-ResNet-32), respectively. The experiment is conducted on an NVIDIA Tesla V100 GPU.

\textbf{TR-All-Conv Setting}~~The mini-batch size is 128. The Optimizer is SGD with learning rate 5e-3, momentum 0.9 and weight decay 5e-4. All $\mathbf{r}_{\ast}$ ranks are set to 10.

\textbf{TK2-ResNet-32 Setting}~~For method of \citet{DBLP:journals/corr/abs-1909-05675}, we use their official code\footnote{Source code at \url{https://github.com/mostafaelhoushi/tensor-decompositions}.} with default setting. For comparison with initial weights, here we decompose the original weight before training. For the proposed graphical method,  we set the batch size to 128 and optimizer as SGD with learning rate 0.1, momentum 0.9 and weight decay 5e-4. Shape of tucker-2 set as the decomposed format from method of \citet{DBLP:journals/corr/abs-1909-05675}. We train each model for 90 epochs with reducing the learning rate through multiplying it by 0.1 after 30 and 60 epochs respectively.

Results are shown in Table~\ref{tbl:adhoc}. Compared the two ad-hoc methods, the proposed graphical performs better than the two ad-hoc initialization. As specially designed methods, the two ad-hoc initialization cannot be applied to other tensor formats. By contrast, our initialization can also be applied to other formats, which indicates a practical advantage of the proposed unified initialization method. 

\begin{table}[h]
\centering
\caption{Comparison with ad-hoc initialization on Cifar-10.}
\label{tbl:adhoc}
\begin{tabular}{c|c|c|c}
\hline
Initialization (TR-All-Conv) & Accurracy & Initialization (TK2-ResNet-32) & Accurracy \\ \hline\hline
\citet{DBLP:conf/cvpr/WangSEWA18}                & 0.8307    & \citet{DBLP:journals/corr/abs-1909-05675}                  & 0.8488    \\ \hline
Graph-in           & 0.8308    & Graph-in             & 0.8554    \\ \hline
Graph-out          & \textbf{0.8311}    & Graph-out            & \textbf{0.8654}    \\ \hline
\end{tabular}
\end{table}

\subsubsection{Comparison with Common Tensor Formats}
\label{sec:formats}

In this section, we validate our initialization method by comparing with CP, Tucker, Tensor Ring (TR) convolution, Tensor Train (TT) convolution, Low-rank (LR) convolution on CIFAR10. To better estimate the performance of our method, we adopt All Convolutional Net structure~\citep{DBLP:journals/corr/SpringenbergDBR14}, which only contains convolutional layers without Batch Normalization~\citep{DBLP:conf/icml/IoffeS15} and residual connection. Optimizer is chosen to be SGD. Reducing normalization tricks and Adam optimization, the training will rely much more on weight initialization. Thus, it will be more clear how our method performs compared with baselines. All $\mathbf{r}_{\ast}$ ranks are set to 10 for convenience. The learning rate is set to 5e-3. The experiment is conducted on an NVIDIA Tesla V100 GPU.

In practice, a good initialization should generate weight suitable for diverse models. However, as shown in Table~\ref{tbl:formats}, Kaiming-in initialization fails to train all these TCNNs and so does Kaiming-out. On the contrary, our graphical initialization performs well in such a situation. As shown in the figure, our initialization shows an adaptive ability of fitting two completely different TCNN models.

\begin{table}[h]
\centering
\caption{Comparison with some common tensor formats on Cifar-10.}
\label{tbl:formats}
\begin{tabular}{c|c|c|c|c|c}
\hline
Initialization    & CP     & Tucker & Tensor Ring & Tensor Train & Low-Rank \\ \hline\hline
Kaiming(-in/-out) & 0.1    & 0.1    & 0.1         & 0.1          & 0.1      \\ \hline
Graph-in          & 0.7823 & 0.7775 & 0.8308      & 0.8276       & 0.8141   \\ \hline
Graph-out         & 0.767  & 0.7709 & 0.8311      & 0.8341       & 0.8163   \\ \hline
\end{tabular}
\end{table}

\begin{table*}[h]
	\centering
	\caption{Structures of HTK2 and HOdd based ResNet. $k$ represents convolutional kernel window. $\bullet$ n denotes n same residual blocks. When depth is set to 50, $\{\mathbf{U}_{i}\}_{i=1}^{4}$ are 2, 3, 5, 2. And $\{\mathbf{U}_{i}\}_{i=1}^{4}$ of 101-depth are set to 2, 3, 22, 2. }
	\label{tbl:resnet}
	\scalebox{0.9}{
\begin{tabular}{c|c|c}
\hline
Layer  & HTK2                                                                                                                                                                                                                                                                                                                                                                                                                                                                                                                                                                               & HOdd                                                                                                                                                                                                                                                                                                                                                                                                                                                                                                                                                                                                                                                                           \\ \hline
Pre    & $k7$, (3)$\times$(64)                                                                                                                                                                                                                                                                                                                                                                                                                                                                                                                                                              & $k7$, (1$\times$3)$\times$(8$\times$8)                                                                                                                                                                                                                                                                                                                                                                                                                                                                                                                                                                                                                                         \\ \hline
Unit 1 & \begin{tabular}{l}        $\left[        \begin{array}{cl}             k1,   & (64) \times (64) \\             k3,   & (64) \times (64) \\             k1,   & (64) \times (256)        \end{array}        \right]\bullet 1        $ \\        $\left[        \begin{array}{cl}             k1,   & (256) \times (64) \\             k3,   & (64) \times (64) \\             k1,   & (64) \times (256)        \end{array}        \right]\bullet {\mathbf{U}_1}        $      \end{tabular}                                                                                         & \begin{tabular}{l}        $\left[        \begin{array}{cl}             k1,   & (8\times 8) \times (8\times 8) \\             k3,   & (8\times 8) \times (8\times 8) \\             k1,   & (8\times 8) \times (32\times 8)        \end{array}        \right]\bullet 1        $ \\        $\left[        \begin{array}{cl}             k1,   & (32\times 8) \times (8\times 8) \\             k3,   & (8\times 8) \times (8\times 8) \\             k1,   & (8\times 8) \times (32\times 8)        \end{array}        \right]\bullet {\mathbf{U}_1}        $      \end{tabular}                                                                                                 \\ \hline
Unit 2 & \begin{tabular}{l}            $\left[        \begin{array}{cl}             k1,   & (256) \times (128) \\             k3,   & (128) \times (128) \\             k1,   & (128) \times (512)        \end{array}        \right]\bullet 1        $ \\        $\left[        \begin{array}{cl}             k1,   & (512) \times (128) \\             k3,   & (128) \times (128) \\             k1,   & (128) \times (512)        \end{array}        \right]\bullet {\mathbf{U}_2}        $      \end{tabular}                                                                            & \begin{tabular}{l}                $\left[        \begin{array}{cl}             k1,   & (32\times 8) \times (8\times 16) \\             k3,   & (8\times 16) \times (8\times 16) \\             k1,   & (8\times 16) \times (32\times 16)        \end{array}        \right]\bullet 1        $ \\        $\left[        \begin{array}{cl}             k1,   & (32\times 16) \times (8\times 16) \\             k3,   & (8\times 16) \times (8\times 16) \\             k1,   & (8\times 16) \times (32\times 16)        \end{array}        \right]\bullet {\mathbf{U}_2}        $          \end{tabular}                                                                         \\ \hline
Unit 3 & \begin{tabular}{l}        $\left[        \begin{array}{cl}             k1,   & (512) \times (256) \\             k3,   & (256) \times (256) \\             k1,   & (256) \times (1024)        \end{array}        \right]\bullet 1        $ \\        $\left[        \begin{array}{cl}             k1,   & (1024) \times (256) \\             k3,   & (256) \times (256) \\             k1,   & (256) \times (1024)        \end{array}        \right]\bullet {\mathbf{U}_3}        $          \end{tabular}                                                                         & \begin{tabular}{l}        $\left[        \begin{array}{cl}             k1,   & (32\times 16) \times (16\times 16) \\             k3,   & (16\times 16) \times (16\times 16) \\             k1,   & (16\times 16) \times (64\times 16)        \end{array}        \right]\bullet 1        $ \\        $\left[        \begin{array}{cl}             k1,   & (64\times 16) \times (16\times 16) \\             k3,   & (16\times 16) \times (16\times 16) \\             k1,   & (16\times 16) \times (64\times 16)        \end{array}        \right]\bullet {\mathbf{U}_3}        $          \end{tabular}                                                                        \\ \hline
Unit 4 & \begin{tabular}{l}        $\left[        \begin{array}{cl}             k1,   & (1024) \times (512) \\             k3,   & (512) \times (512) \\             k1,   & (512) \times (2048)        \end{array}        \right]\bullet 1        $ \\        $\left[        \begin{array}{cl}             k1,   & (2048) \times (512) \\             k3,   & (512) \times (512) \\             k1,   & (512) \times (2048)        \end{array}        \right]\bullet {\mathbf{U}_4}        $ \\         \specialrule{0em}{5pt}{1pt}        \multicolumn{1}{c}{avg pool}      \end{tabular} & \begin{tabular}{l}        $\left[        \begin{array}{cl}             k1,   & (64\times 16) \times (16\times 32) \\             k3,   & (16\times 32) \times (16\times 32) \\             k1,   & (16\times 32) \times (64\times 32)        \end{array}        \right]\bullet 1        $ \\        $\left[        \begin{array}{cl}             k1,   & (64\times 32) \times (16\times 32) \\             k3,   & (16\times 32) \times (16\times 32) \\             k1,   & (16\times 32) \times (64\times 32)        \end{array}        \right]\bullet {\mathbf{U}_4}        $ \\         \specialrule{0em}{5pt}{1pt}        \multicolumn{1}{c}{avg pool}      \end{tabular} \\ \hline
FC     & \begin{tabular}{c}            $(2048) \times (200)$ \\      \end{tabular}                                                                                                                                                                                                                                                                                                                                                                                                                                                                                                          & \begin{tabular}{c}            $(64\times 32) \times (10\times 20)$\\       \end{tabular}                                                                                                                                                                                                                                                                                                                                                                                                                                                                                                                                                                                       \\ \hline
\end{tabular}
}
\end{table*}

\begin{table*}[h]
\centering
\caption{Top-1 accuracy on Cifar10 and Tiny-ImageNet.
Rank-Edge Number means the least number of edges only connected to weight vertices in layers. Random-$\ast$ denotes randomly generating models.}
\label{tbl:cifar-tiny-full}
\scalebox{0.9}{
\begin{tabular}{c|c|ccc|ccc}
\hline
        &                                                    & \multicolumn{3}{c|}{Cifar10}                                                        & \multicolumn{3}{c}{Tiny-ImageNet}                                                  \\ \hline
        & \begin{tabular}[c]{@{}c@{}}Rank-Edge\\ Number\end{tabular} & \begin{tabular}[c]{@{}c@{}}Kaiming\\ (-in/-out)\end{tabular} & Graph-in & Graph-out & \begin{tabular}[c]{@{}c@{}}Kaiming\\ (-in/-out)\end{tabular} & Graph-in & Graph-out \\ \hline\hline
Low-Rank & 1                                                  & 0.1                                                          & 0.8141   & 0.8163    & 0.307/0.2776                                                          & 0.3153   & 0.3076    \\
Tensor Ring      & 4                                                  & 0.1                                                          & 0.8308   & 0.8311    & 0.005                                                          & 0.2494   & 0.249    \\
\hline
HTK2 ($\varphi$=4)     & 2                                                  & 0.1                                                          & 0.8638   & 0.8705    & 0.005                                                          & 0.4014   & 0.4126    \\
HOdd ($\varphi$=4)     & 14                                                  & 0.1                                                          & 0.8826   & 0.8806     & 0.005         & 0.5048   & 0.5045     \\ \hline
Random-1      & -                                                  & 0.1                                                          & 0.8538  & 0.8483     & 0.005                                                          & 0.4965   & 0.5015     \\
Random-2      & -                                                  & 0.1                                                          & 0.8801   & 0.876     & 0.005                                                          & 0.5379   & 0.5356     \\
Random-3      & -                                                  & 0.1                                                          & 0.8648   & 0.863     & 0.005                                                          & 0.5475   & 0.5403     \\
Random-4      & -                                                  & 0.1                                                          & 0.8789   & 0.8816     & 0.005                                                          & 0.5295   & 0.5306     \\ 
Random-5      & -                                                  & 0.1                                                          & 0.8622   & 0.8644     & 0.005                                                          & 0.5444   & 0.5428     \\  
Random-6      & -                                                  & 0.1                                                          & 0.8735   & 0.8721     & 0.005                                                          & 0.5452   & 0.5446     \\  
Random-7      & -                                                  & 0.1                                                          & 0.8601   & 0.8558     & 0.005                                                          & 0.5328   & 0.5394     \\  
Random-8      & -                                                & 0.1                                                          & 0.8589   & 0.8561     & 0.005                                                          & 0.5269   & 0.5291     \\ 
\hline
\end{tabular}
}
\end{table*}

\begin{figure*}[h]
	\centering
	\subfigure[HTK2 ($\varphi$=4) Loss]{
		\includegraphics[width=0.22\textwidth]{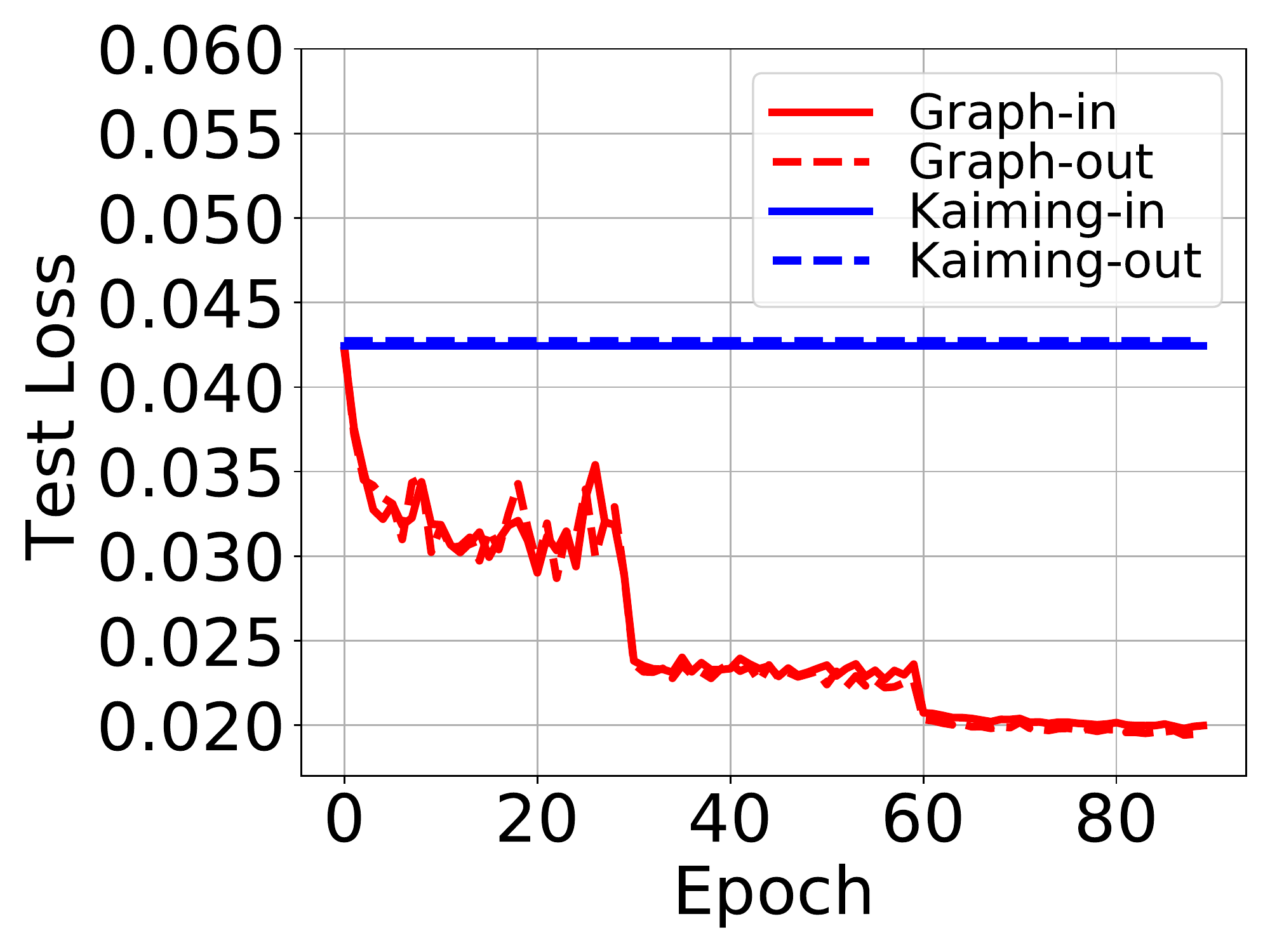}
		\label{fig:tinyhtk2c4loss}
	} 
	\subfigure[HTK2 ($\varphi$=4) Accuracy]{
		\includegraphics[width=0.22\textwidth]{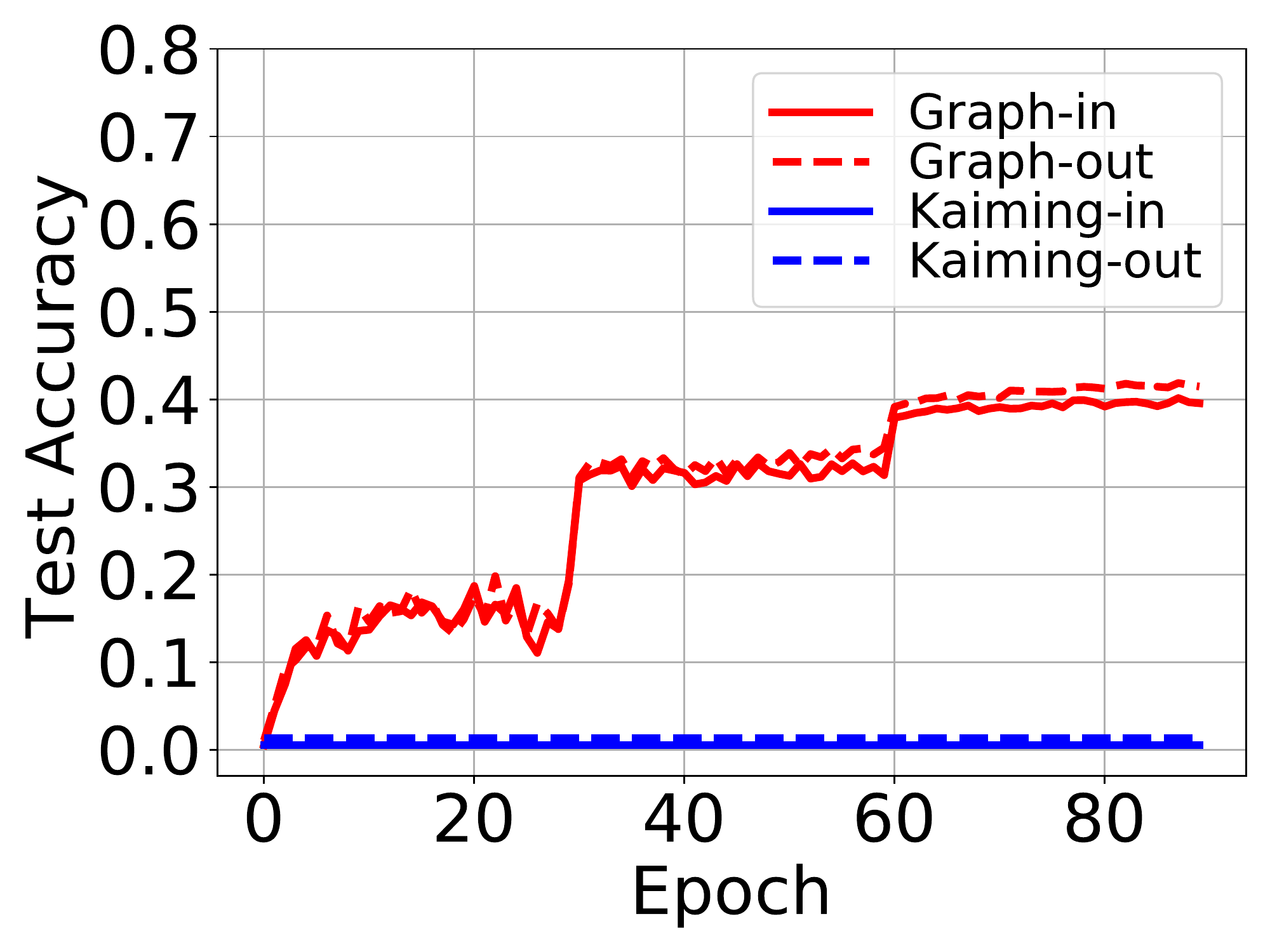}
		\label{fig:tinyhtk2c4acc}
	}
	\subfigure[HOdd ($\varphi$=4) Loss]{
		\includegraphics[width=0.22\textwidth]{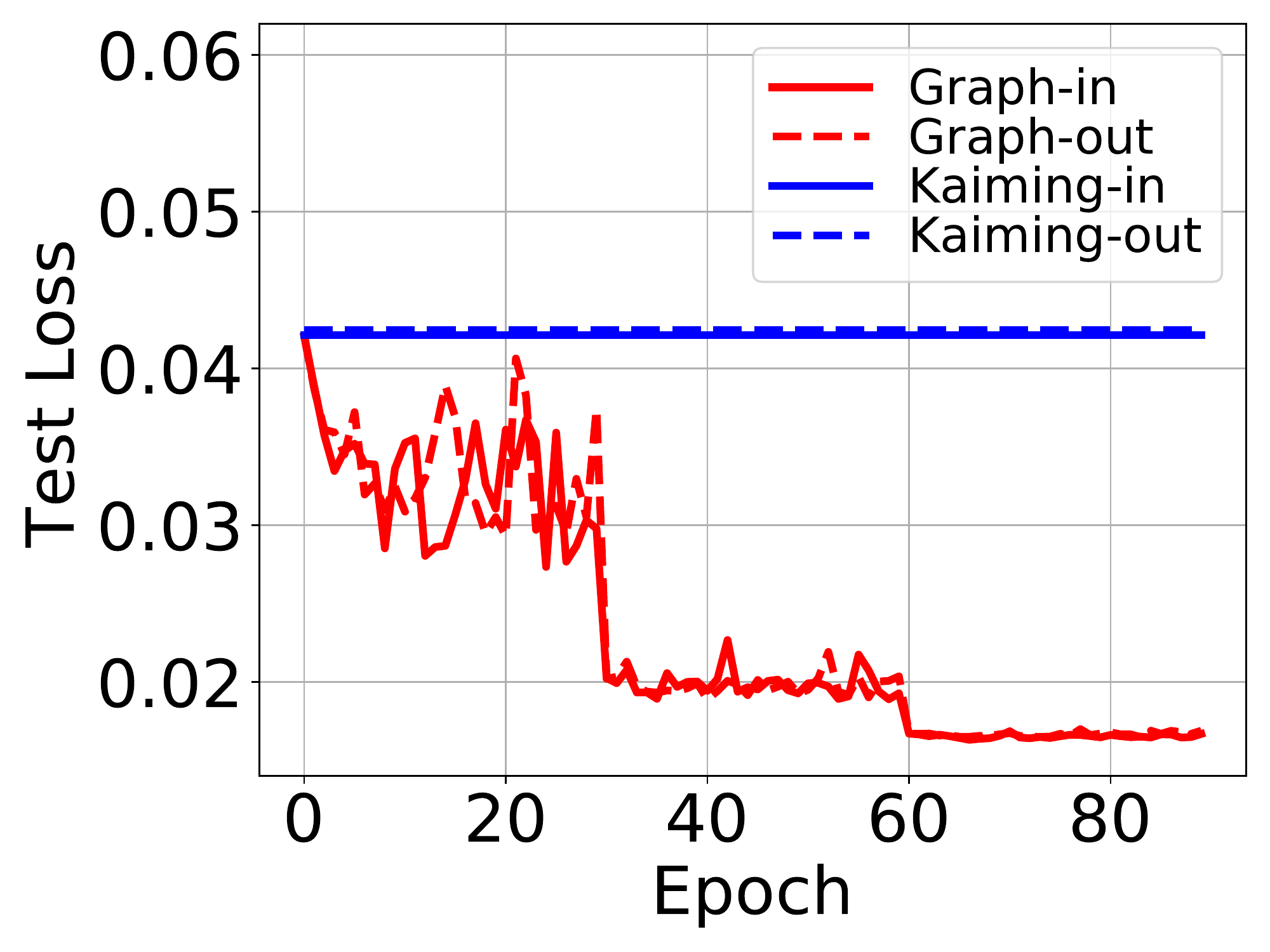}
		\label{fig:tinyoddloss}
	}
	\subfigure[HOdd ($\varphi$=4) Accuracy]{
		\includegraphics[width=0.22\textwidth]{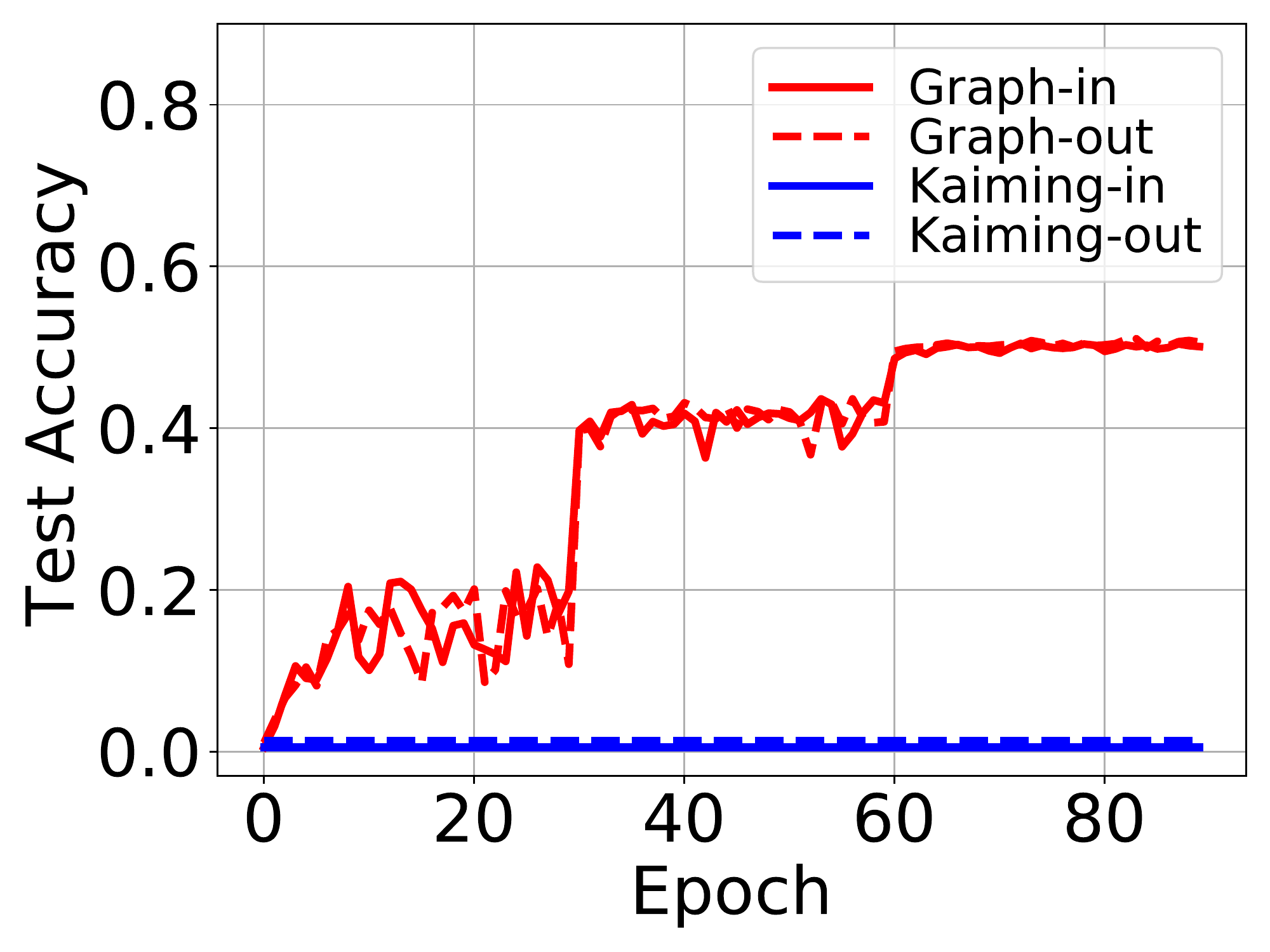}
		\label{fig:tinyoddacc}
	}
	\caption{Results on Tiny-ImageNet.}
	\label{fig:tinyodd}
\end{figure*}

\subsection{Details of Experiment on Tiny-ImageNet}
\label{sec:detail-tiny-imagenet}
In this experiment, we employ tensorial ResNet~\citep{DBLP:journals/corr/abs-1709-02956} to validate the performance of our initialization on Tiny-ImageNet. Details of HTK2 and HOdd based ResNet-50 are shown in Table~\ref{tbl:resnet}. Values of HTK2 and HOdd hyperedges  set to 4. And we also generate some random architectures to demonstrate the ability of our model. We set the batch size to 128 and optimizer as SGD with learning rate 1e-1, momentum 0.9 and weight decay 5e-4. All $\mathbf{r}_{\ast}$ ranks are still 10 except HOdd which set to 5 for faster training. We train each model for 90 epochs with reducing the learning rate through multiplying it by 0.1 after 30 and 60 epochs respectively. We use one NVIDIA Tesla V100 GPU for training. Some results are shown in Figure~\ref{fig:tinyodd}.


\subsection{Details of Experiment on ImageNet}
\label{sec:detail-imagenet}
Lastly, to be more convincing, we validate our initialization on ImageNet.

\textbf{Tensorial ResNet Setting}~~We construct tensorial ResNet by replacing all convolutional layers of ResNet with tensor layers. In this experiment, we train HOdd (hyperedge set to 4 and $\mathbf{r}_{\ast}$ is 5) and randomly generating ResNet for 20 epochs. Structure of HOdd ResNet is similar to Table~\ref{tbl:resnet}. Depths are set to 50 and 101. Similarly, we use SGD to optimize parameters with learning rate 0.1, momentum 0.9 and weight decay 5e-4. Mini-batch size is set to 512. We use four NVIDIA Tesla V100 GPUs for training.

\textbf{Tensorial gMLP/MLP-Mixer Setting}~~We construct tensorial gMLP/MLP-Mixer by replacing all linear layers of ResNet with tensor layers. In this experiment, we train randomly generating tensor layers for 20 epochs. Training stratey follows setting of \citet{DBLP:journals/corr/abs-2105-08050}. Data augmentation set to AutoAugment. Input resolution of ImageNet is 224$\times$224. Batch-size is 1024. We use Cutmix-Mixup with switch probability 0.5. Cutmix $\alpha$ is 1.0. Mixup $\alpha$ is 0.8. Label smoothing is 0.1. Learning rate is 1e-3 before trainig. Cosine function is adopted as learning rate decay. Optimizer is AdamW with $\epsilon=1e-6$, $\beta_1=0.9$, and $\beta_2=0.999$. Weight decay is 0.05. We use four NVIDIA Tesla A100 GPUs for training.

Results of HOdd-ResNet, HRand-ResNet, HRand-gMLP and HRand-MLP-Mixer are shown in Figure \ref{fig:imagenet-hoddres}, \ref{fig:imagenet-hranres}, \ref{fig:imagenet-hrangmlp} and \ref{fig:imagenet-hrangmlp}, respectively. In these figures, our graphical initialization is suitable for all the models while Kaiming initialization fails in all the situations, which demonstrates Graph(-in/-out) algorithm is sufficiently robust and effective. Worth to mention, test loss of tensorial ResNet explodes to NaN (not a number) from the beginning, and the test loss explosion of tensorial gMLP/MLP-Mixer is not severe like so. The difference is caused by the optimizer. ResNet is trained with SGD, and gMLP/MLP-Mixer is trained with AdamW that has the desirable property of being invariant to the scale of the gradients. However, even through AdamW is so powerful that test loss can return to an acceptable level, it still fails to train a model with the unsuitable initialization, as the test accuracy has not increased. Therefore, our graphical initialization that is adaptive to all the situations, is necessary for TCNNs as a key component of training.

\begin{figure*}[h]
	\centering
	\subfigure[HOdd-RN-50 Loss]{
		\includegraphics[width=0.22\textwidth]{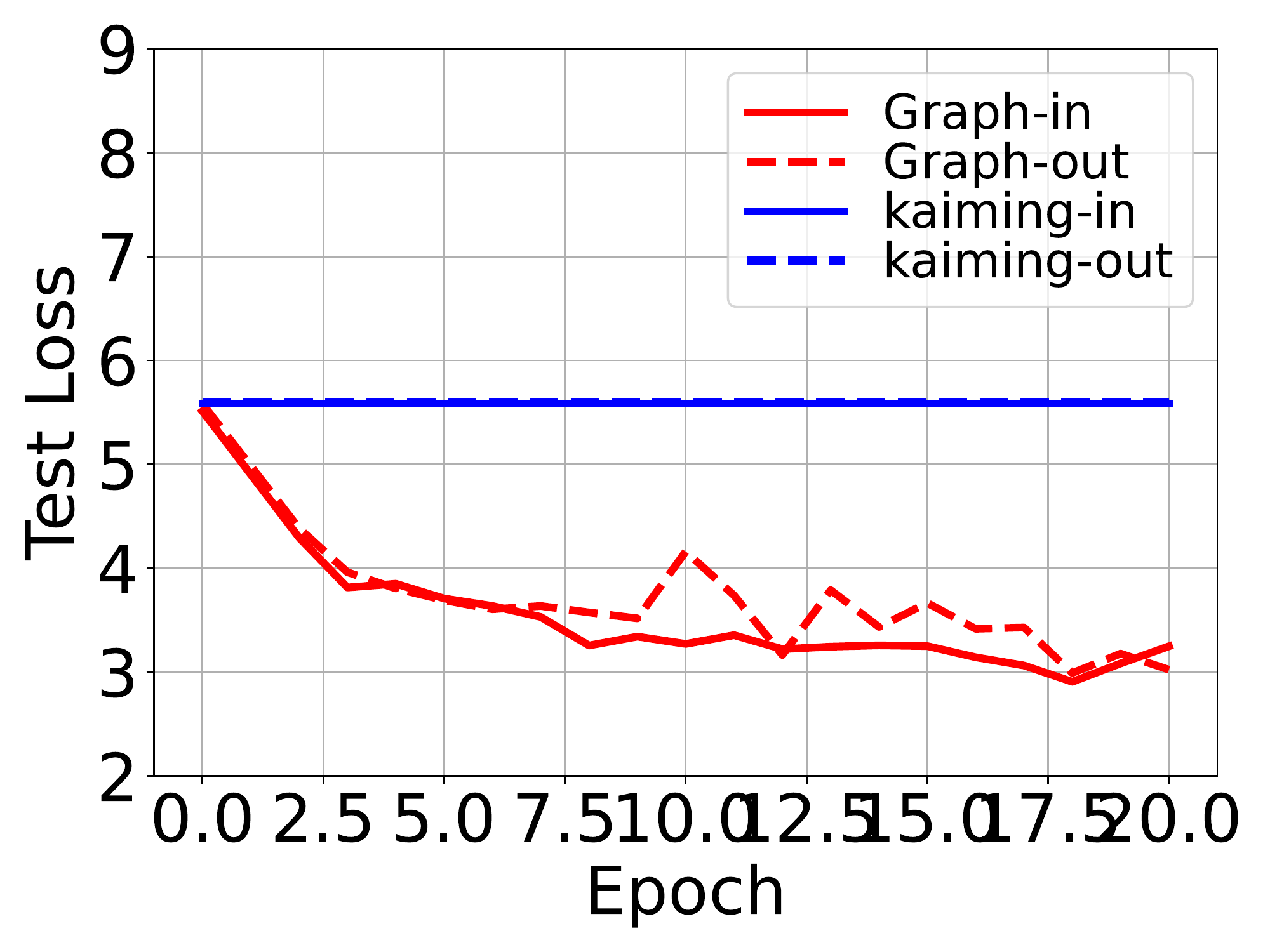}
	} 
	\subfigure[HOdd-RN-50 Accuracy]{
		\includegraphics[width=0.22\textwidth]{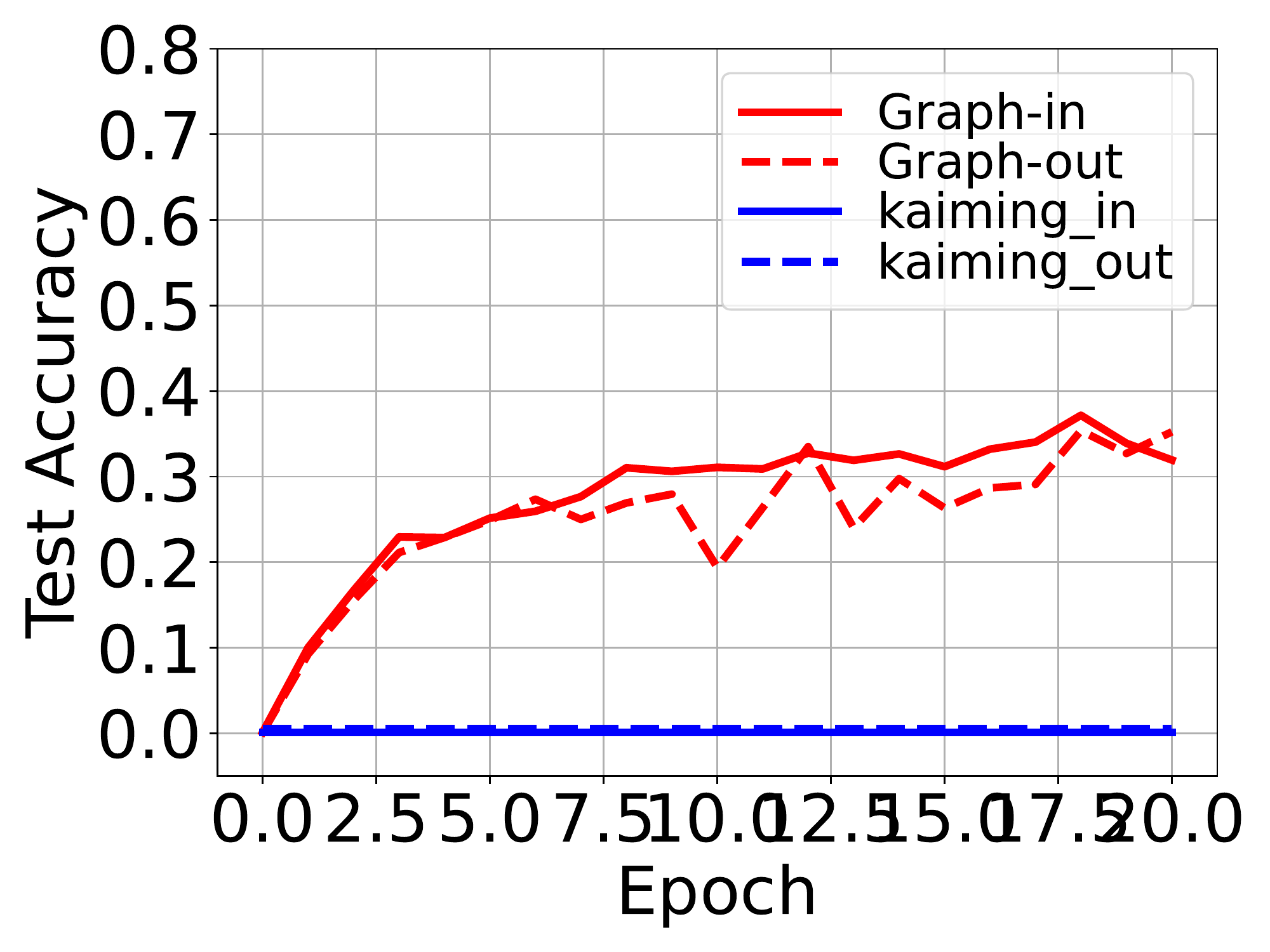}
	}
	\subfigure[HOdd-RN-101 Loss]{
		\includegraphics[width=0.22\textwidth]{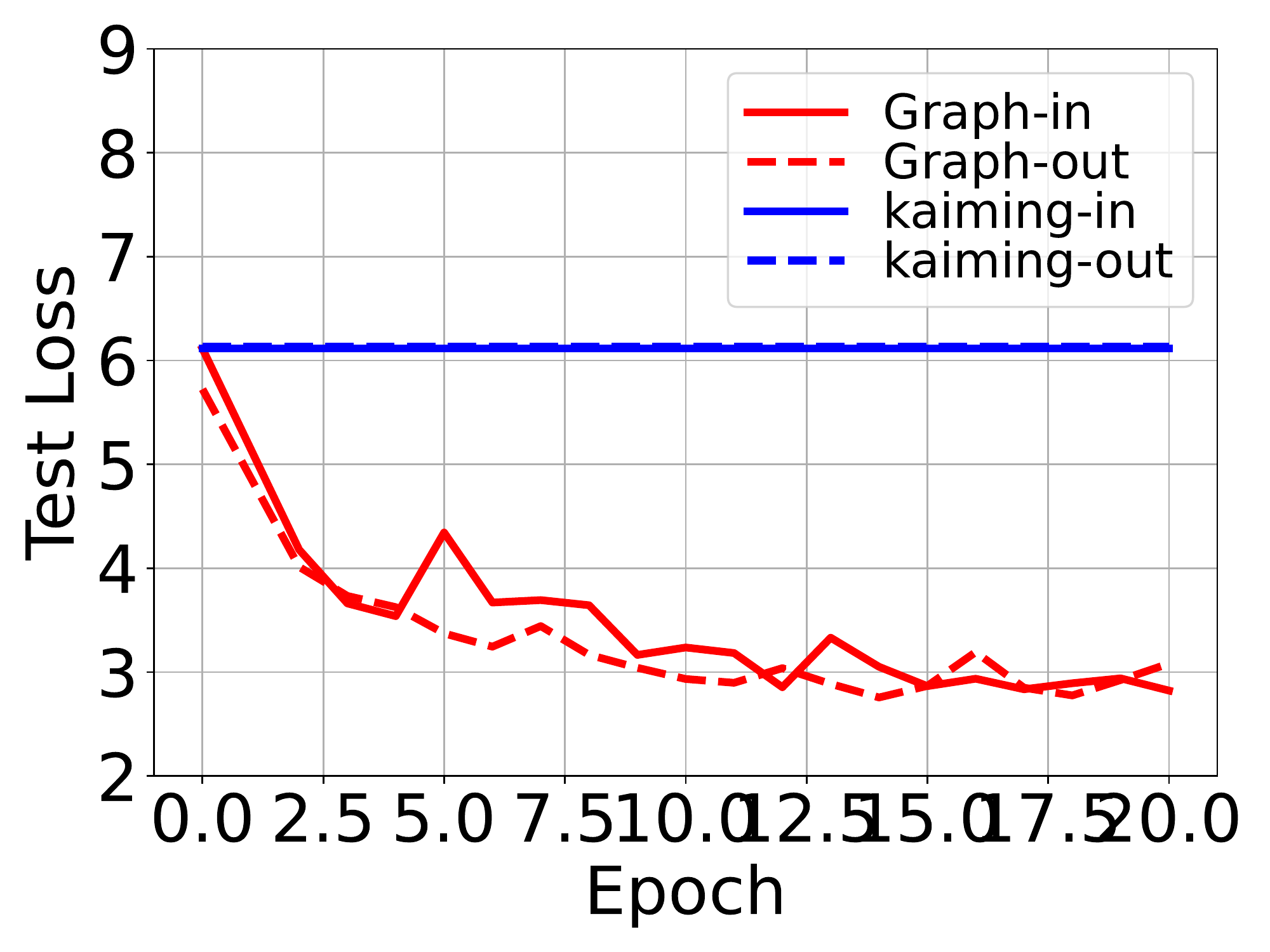}
	} 
	\subfigure[HOdd-RN-101 Accuracy]{
		\includegraphics[width=0.22\textwidth]{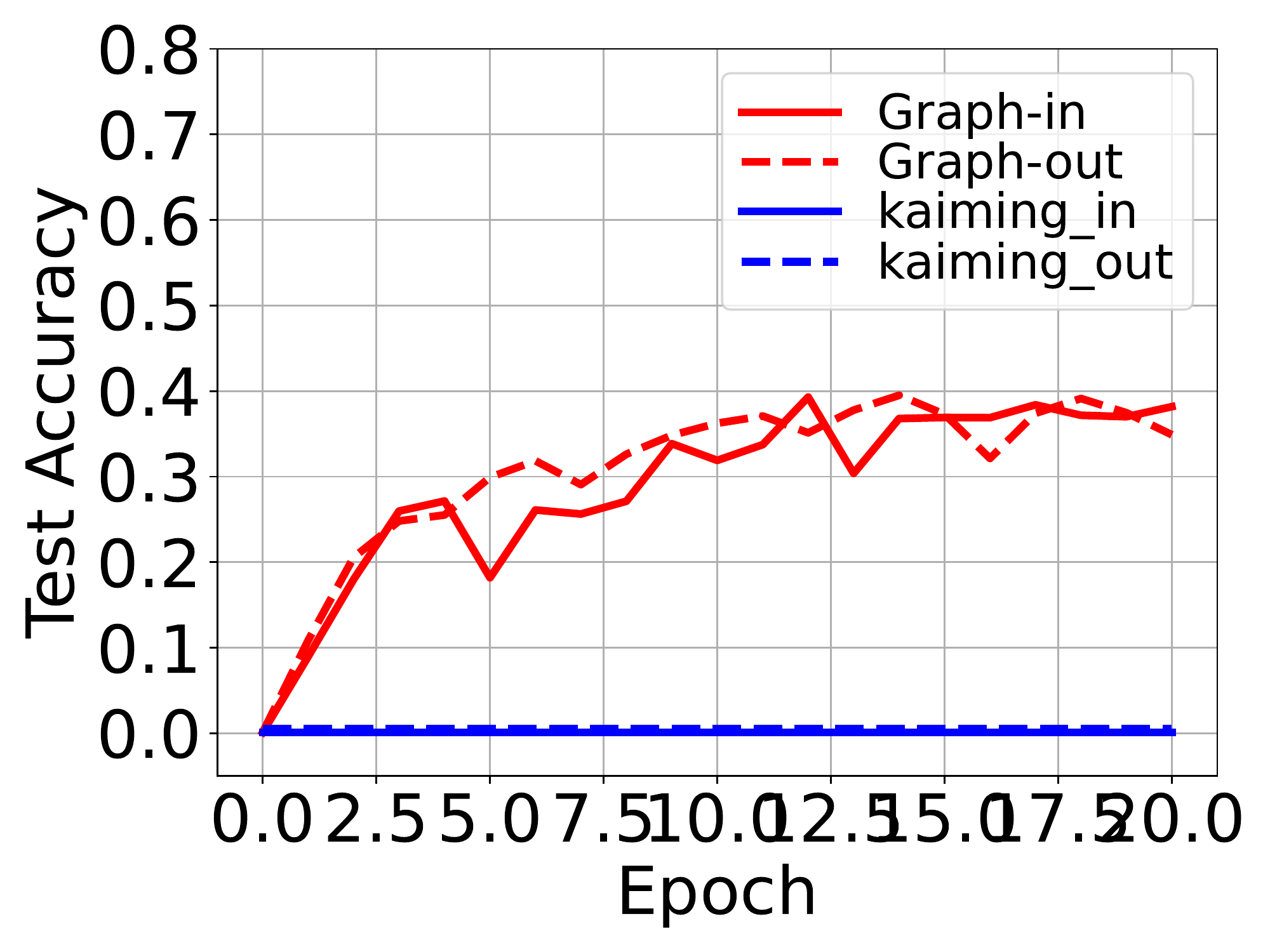}
	}
	\caption{Results of HOdd-RN on ImageNet. RN is short for ResNet.}
	\label{fig:imagenet-hoddres}
\end{figure*}
\begin{figure*}[h]
	\centering
	\subfigure[HRand-RN-50 Loss]{
		\includegraphics[width=0.22\textwidth]{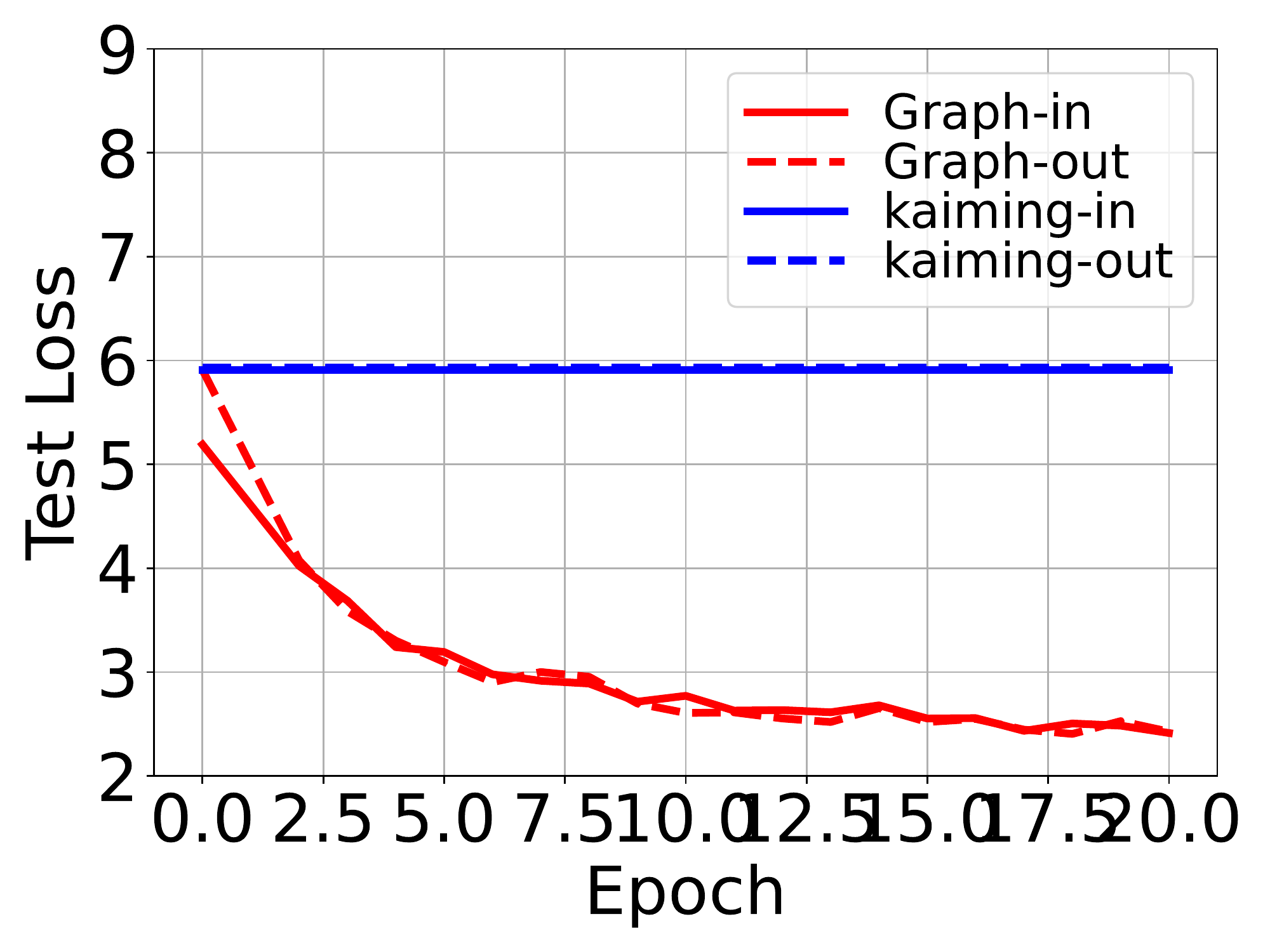}
	} 
	\subfigure[HRand-RN-50 Accuracy]{
		\includegraphics[width=0.22\textwidth]{figures/hrand50_acc.pdf}
	}
	\subfigure[HRand-RN-101 Loss]{
		\includegraphics[width=0.22\textwidth]{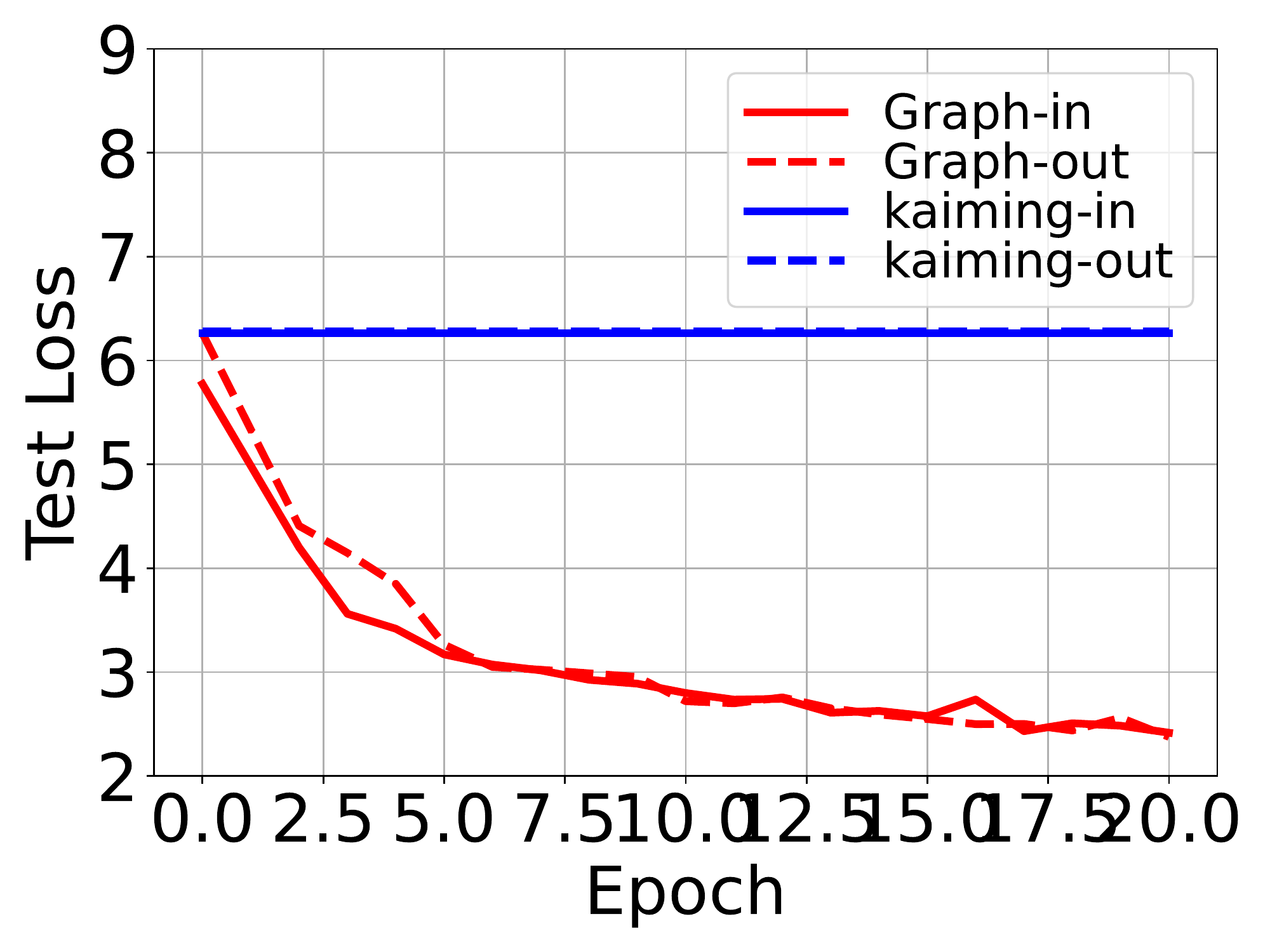}
	} 
	\subfigure[HRand-RN-101 Accuracy]{
		\includegraphics[width=0.22\textwidth]{figures/hrand101_acc.pdf}
	}
	\caption{Results of HRand-RN on ImageNet. RN is short for ResNet.}
	\label{fig:imagenet-hranres}
\end{figure*}
\begin{figure*}[h]
	\centering
	\subfigure[Hg-Ti16 Loss]{
		\includegraphics[width=0.22\textwidth]{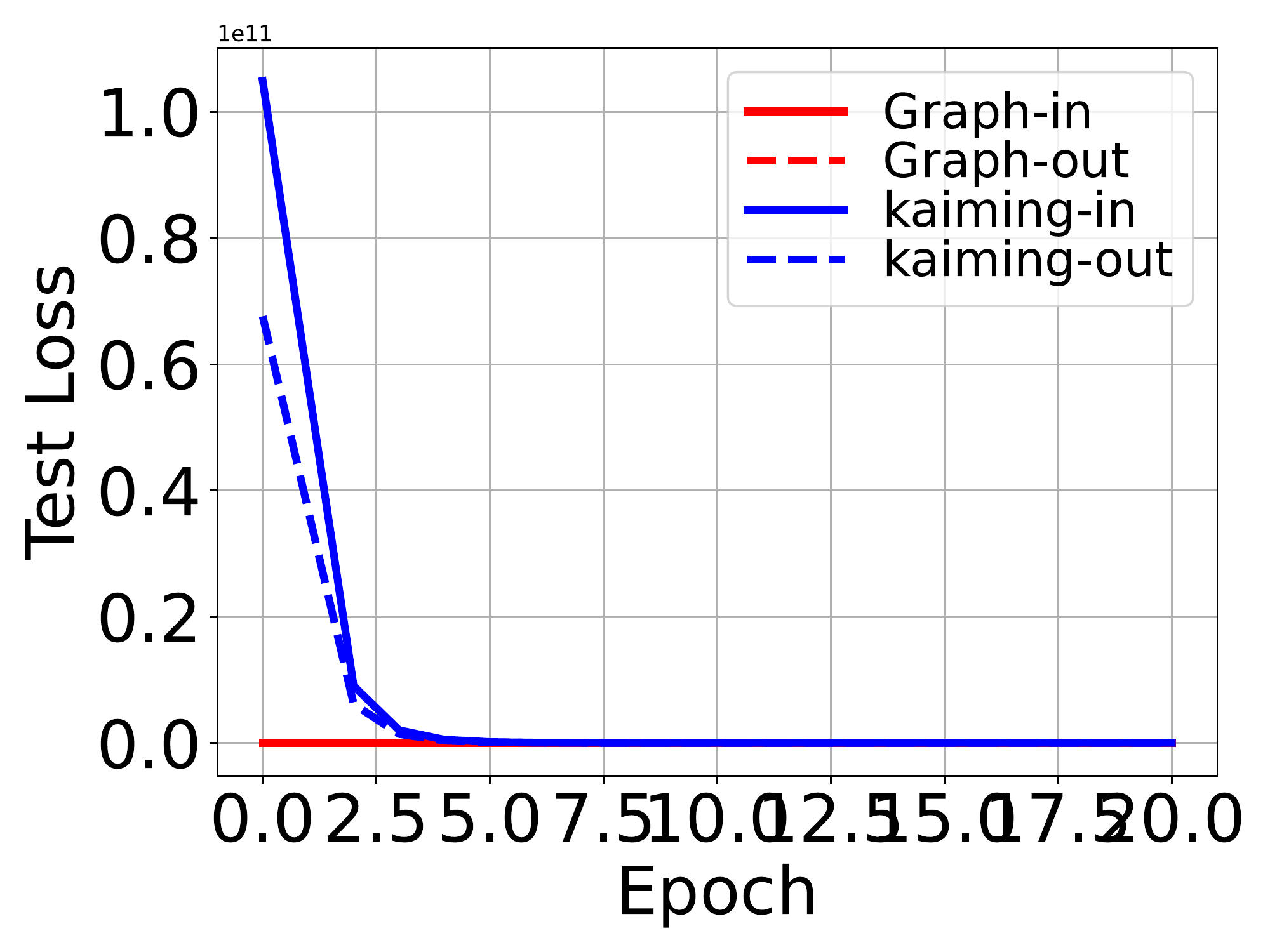}
	} 
	\subfigure[Hg-Ti16 Accuracy]{
		\includegraphics[width=0.22\textwidth]{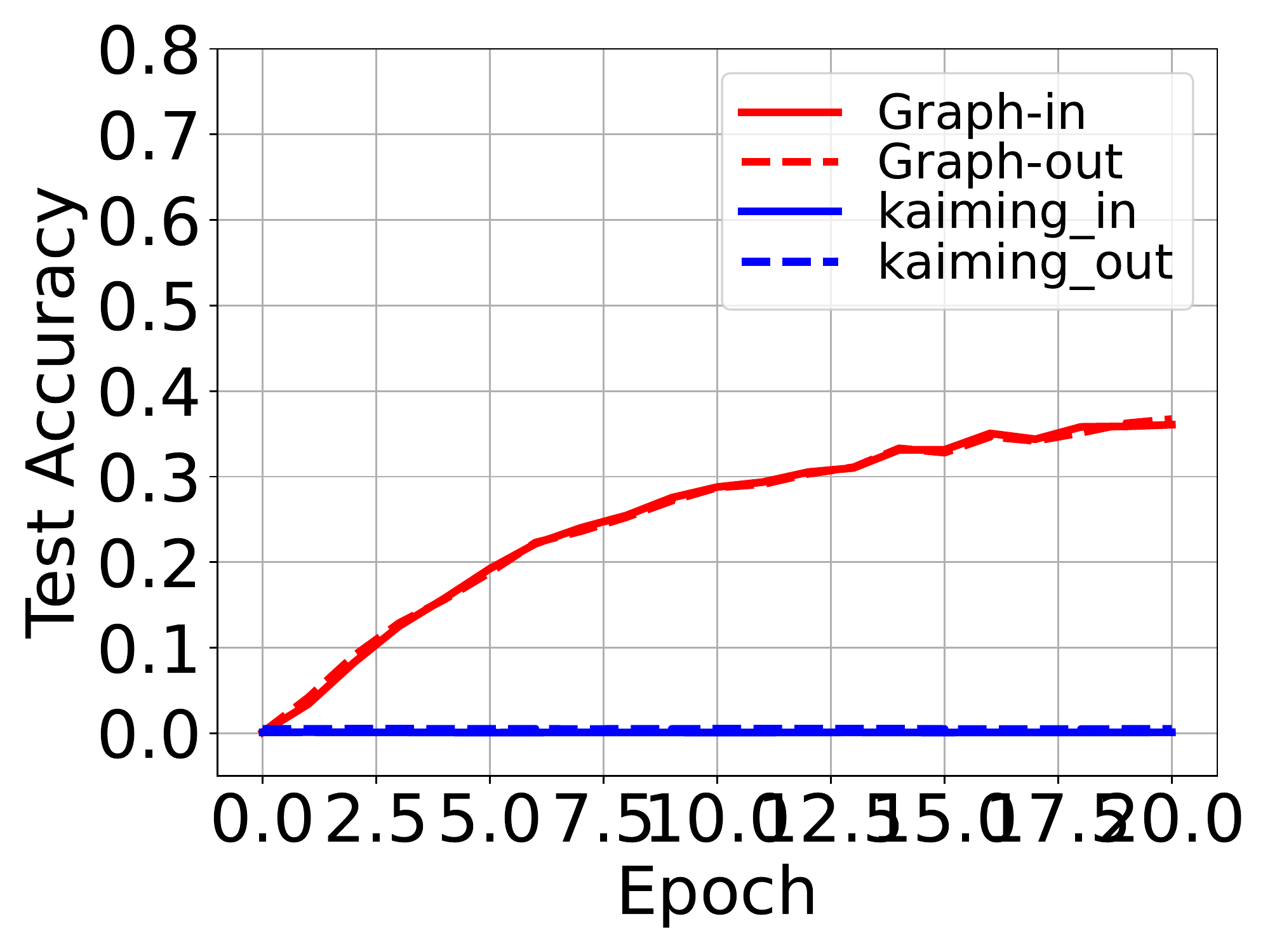}
	}
	\subfigure[Hg-S16 Loss]{
		\includegraphics[width=0.22\textwidth]{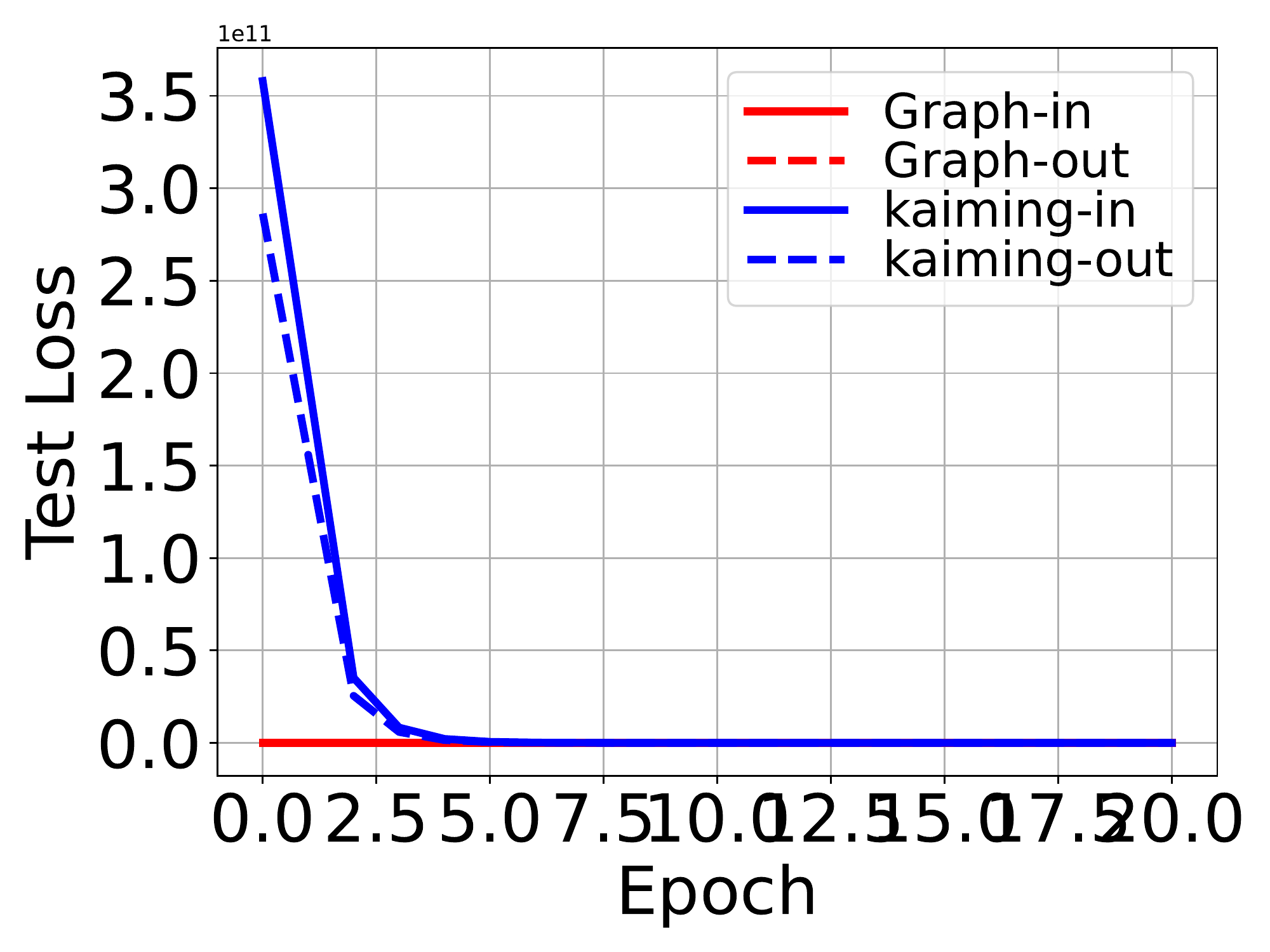}
	} 
	\subfigure[Hg-S16 Accuracy]{
		\includegraphics[width=0.22\textwidth]{figures/hrandgmlp_s16_acc.pdf}
	}
	\caption{Results of HRand-gMLP (Hg) on ImageNet.}
	\label{fig:imagenet-hrangmlp}
\end{figure*}
\begin{figure*}[h]
	\centering
	\subfigure[HMM-S16 Loss]{
		\includegraphics[width=0.22\textwidth]{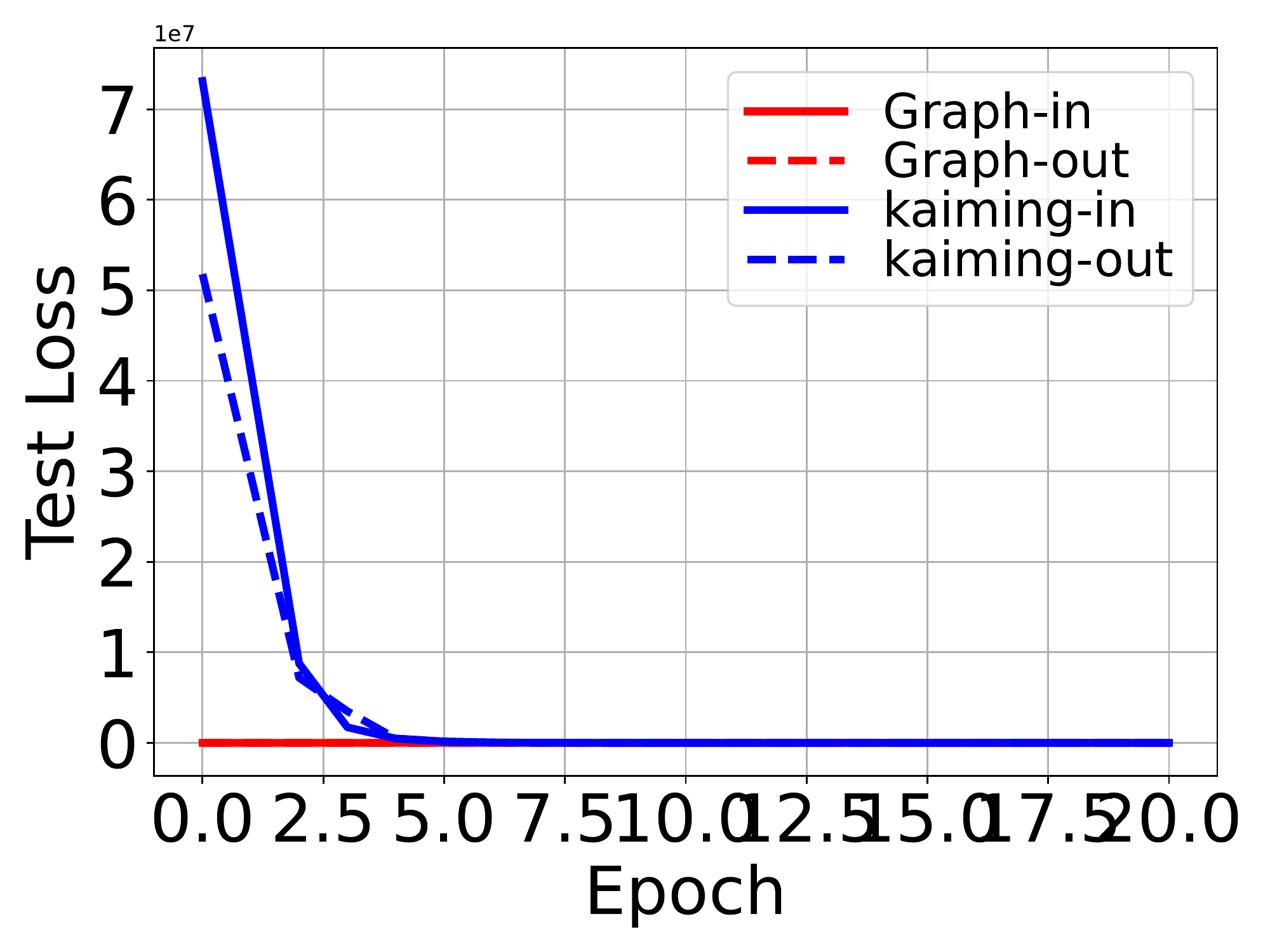}
	} 
	\subfigure[HMM-S16 Accuracy]{
		\includegraphics[width=0.22\textwidth]{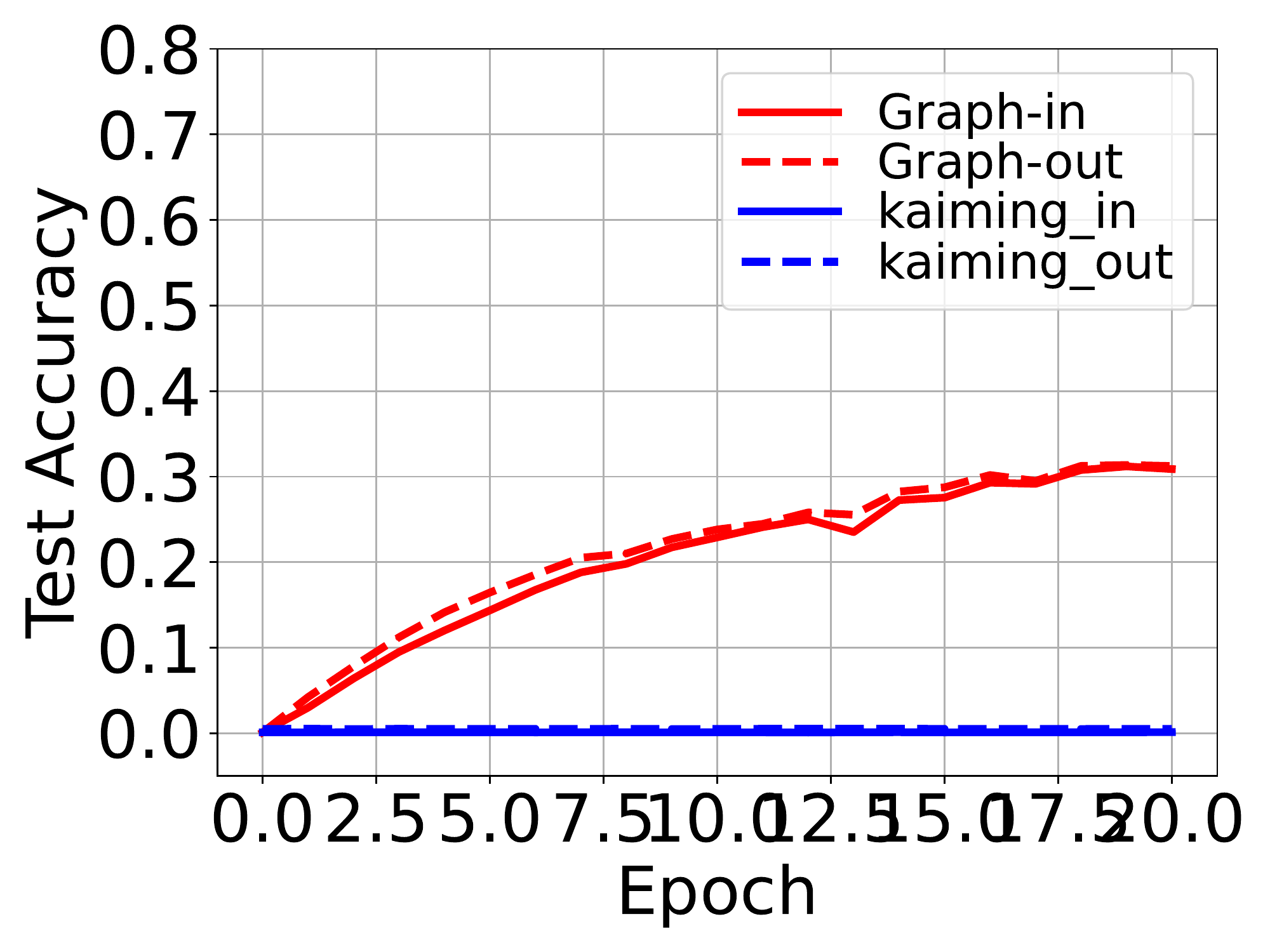}
	}
	\subfigure[HMM-B16 Loss]{
		\includegraphics[width=0.22\textwidth]{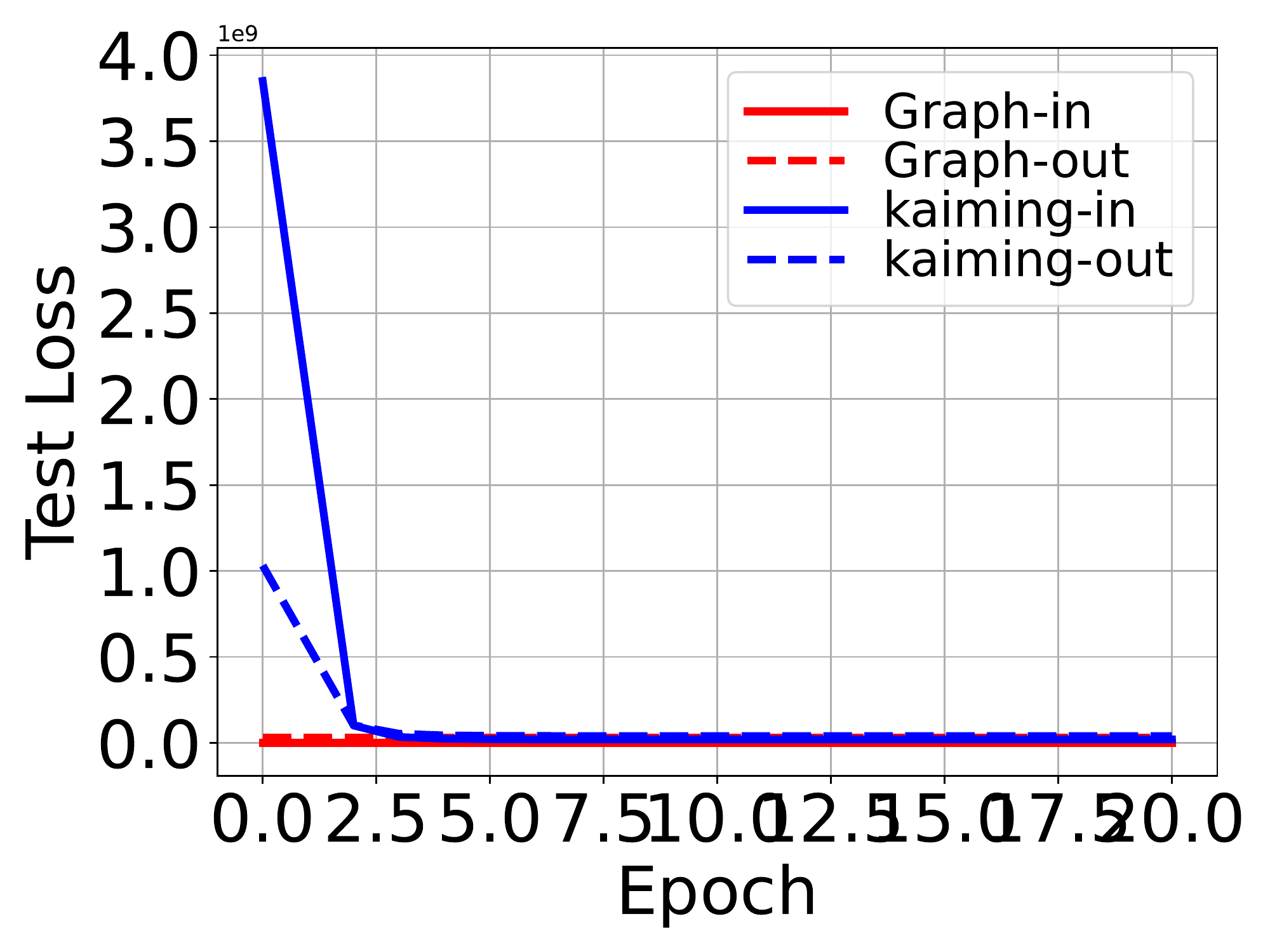}
	}
	\subfigure[HMM-B16 Accuracy]{
		\includegraphics[width=0.22\textwidth]{figures/hrandmlpmixer_b16_acc.pdf}
	}
	\caption{Results of HRand-MLP-Mixer (HMM) on ImageNet.}
	\label{fig:imagenet-hranmixer}
\end{figure*}






\end{document}